\newtheorem{remark}{Remark}[section]
\newtheorem{theorem}{Theorem}[section]
\newtheorem{proposition}{Proposition}[section]
\newtheorem{lemma}{Lemma}[section]
\newtheorem{corollary}{Corollary}[lemma]
\theoremstyle{definition}
\newtheorem{example}{Example}[section]
\newtheorem{assumption}{Assumption}
\patchcmd{\subsubsection}{\itshape}{\bfseries}{}{}
\newcommand{\R}{\mathbb{R}}
\newcommand{\N}{\mathbb{N}}
\newcommand{\Z}{\mathbb{Z}}
\newcommand{\E}{\mathbb{E}}
\newcommand{\M}{\mathcal{M}}
\newcommand{\vc}{\vcentcolon}
\newcommand{\eps}{\varepsilon}
\newcommand{\ind}{\mathbf{1}}
\newcommand{\grad}{\mathrm{grad}}
\DeclareMathOperator{\Ima}{Im}
\def\S{{\mathbb S}}
\def\emph{\textit}
\DeclareMathOperator{\Tr}{Tr}
\DeclareMathOperator\supp{supp}
\begin{document}
\title{Manifold learning in metric spaces}
\author{Liane Xu, Amit Singer}

\keywords{Manifold learning, graph Laplacian, Laplacian eigenmaps, diffusion maps, Wasserstein space.}

\begin{abstract}
    Laplacian-based methods are popular for the dimensionality reduction of data lying in $\R^N$. Several theoretical results for these algorithms depend on the fact that the Euclidean distance locally approximates the geodesic distance on the underlying submanifold which the data are assumed to lie on.
    However, for some applications, other metrics, such as the Wasserstein distance, may provide a more appropriate notion of distance than the Euclidean distance. 
    We provide a framework that generalizes the problem of manifold learning to metric spaces and study when a metric satisfies sufficient conditions for the pointwise convergence of the graph Laplacian.
\end{abstract}

\maketitle

\section{Introduction}

In the manifold learning problem in $\R^N$, we assume that data points $x_1, ..., x_n \in \R^N$ are drawn i.i.d. from some probability distribution on an unknown compact, oriented submanifold $\mathcal{M}\subset \R^N,$ and we would like to recover (properties of) $\mathcal{M}$. We can formulate an analogous problem of manifold learning in a metric space $(X, d)$ as follows: let $\mathcal{M}$ be a smooth, compact, oriented manifold, and let $\iota \colon \mathcal{M} \to X$ be an embedding. 
Given $x_1, ..., x_n$ drawn i.i.d. from some probability distribution on $\mathcal{M}$, what can we recover about $\mathcal{M}$?

Several popular manifold learning algorithms for data points in $\R^N$ — for example, Isomap \cite{tenenbaum2000global}, Laplacian eigenmaps \cite{belkin2001laplacian, belkin2003laplacian} and diffusion maps \cite{coifman2006diffusion} — start by constructing a graph which depends on the pairwise Euclidean distances between the data points. 
To motivate why manifold learning in another metric space may be beneficial, consider the following example. Fix some $r > 0$.
Let $\iota \colon \S^1 \to L^2(\R^2)$ be given by
\begin{equation}
\label{eq:iota}
\iota(\theta) \vc = \frac{1}{\pi r^2} \ind_{B_r((\cos \theta, \sin \theta))} 
\end{equation}
for $\theta \in [0, 2\pi)$, where $B_r(x)$ denotes an open ball of radius $r$ centered at $x$ and $\ind_A$ denotes the indicator function for a set $A$. We can think of $\iota(\theta)$ (after some discretization) as an image of a ball of radius $r$ centered at $(\cos \theta, \sin \theta)$. For a fine enough discretization, the Euclidean norm approximates the $L^2(\R^2)$ norm, up to a scaling factor. 

We can explicitly compute the pairwise $L^2(\R^2)$ distances for this example.
In fact, for $\theta, \varphi \in [0, 2\pi)$ and $r \in (0, 1]$, the angle between $\theta$ and $\varphi$ is strictly greater than $2 \arcsin {r}$ if and only if $|(\cos \theta, \sin \theta) - (\cos \varphi, \sin \varphi)| > 2r$, and when $|(\cos \theta, \sin \theta) - (\cos \varphi, \sin \varphi)| > 2r$, we have 
\[ \|\iota(\theta) - \iota(\varphi)\|_{L^2(\R^2)} = \frac{\sqrt{2}}{r \sqrt{\pi}}\]
since the supports of $\iota(\theta)$ and $\iota(\varphi)$ are disjoint.
Now suppose $\theta_1, ..., \theta_n$ are drawn i.i.d. from the uniform distribution on $[0, 2\pi)$.
If $|(\cos \theta_j, \sin \theta_j) - (\cos \theta_1, \sin \theta_1)| > 2r$ for all $j > 1$, then 
\[\|\iota(\theta_j) - \iota(\theta_1)\|_{L^2(\R^2)} = \|\iota(\theta_i) -\iota(\theta_1)\|_{L^2(\R^2)}\]
for all $i, j > 1$, so we have \textit{no information} on the relative location of $\theta_1$ compared to the other points.
This happens with probability
\[ \mathbb{P}\left( |(\cos \theta_j, \sin \theta_j) - (\cos \theta_1, \sin \theta_1)| > 2 r \text{ for all } j > 1 \right) = \left( 1- \frac{2 \arcsin r}{\pi} \right)^{n-1}. \]
Although this decays exponentially as $n \to \infty$, in practice, we may only have limited samples, so using the $L^2(\R^2)$ norm in this case would detrimentally affect the output of our algorithm (for Isomap, Laplacian eigenmaps, diffusion maps — \textit{any} manifold learning algorithm that uses pairwise distances between data points), especially for small sample sizes and small $r$. 

The issue here is that for small $r > 0$, the support of $\iota(\theta)$ is small for each $\theta \in [0, 2\pi)$.
To alleviate this, we could consider using a different metric. For this particular $\iota$, the Wasserstein-2 distance between $\iota(\theta)$ and $\iota(\varphi)$ for any $\theta, \varphi \in [0, 2\pi)$ can also be computed explicitly: for $\theta, \varphi \in [0, 2\pi),$
\[ W_2^2(\iota(\theta), \iota(\varphi)) = |(\cos \theta, \sin \theta) - (\cos \varphi, \sin \varphi)|^2, \]
where, by a slight abuse of notation, we use $\iota(\theta)$ to denote the probability measure on $\R^2$ with density $\iota(\theta).$ For this example, one might think that the Wasserstein-2 distance provides a ``better" notion of distance than the $L^2(\R^2)$ norm. 

In this case, the use of the Wasserstein-2 metric with the embedding $\iota$ turns out to be equivalent to considering the typical embedding $\theta \mapsto (\cos \theta, \sin \theta)$ of $\S^1$ in $\R^2$.
However, more generally, we still lack a theoretical understanding of the use of non-Euclidean metrics in graph Laplacian-based manifold learning algorithms. 
Much work has been done in the Euclidean case (e.g., \cite{belkin2001laplacian, belkin2003laplacian, belkin2005towards, belkin2008towards, coifman2005geometric, coifman2006diffusion, lafon2004diffusion, singer2006graph, hein2005graphs, berry2016variable, cheng2022eigen, cheng2022knnselftune, belkin2006convergence} and references therein), but less is known for other metrics 
\cite{kileel2021manifold, trillos2024fermat}.
To the best of our knowledge, the convergence of graph Laplacians with an arbitrary metric has not been studied before. Moreover, a theoretical understanding of why certain metrics produce better results for a given problem is also missing.

In this paper, we provide a partial answer. Henceforth, for convenience, we treat $d$ as a metric on $\M$ and write $d(x, y)$ instead of $d(\iota(x), \iota(y)).$ Our first main contribution is proving that if $\M$ is equipped with a Riemannian metric $g$, $d_g$ is the associated geodesic distance and $\exp$ is the associated exponential map, the following two assumptions suffice for the pointwise convergence of graph Laplacians:

\begin{restatable}[Uniform first order approximation]{assumption}{firstorderassump}
    \label{assump:firstorder}
    There exists a function $R \colon [0, \infty) \to [0, \infty]$ such that $R(u)/u \to 0$ as $u \to 0^{+}$ and for any length-minimizing, unit-speed geodesic segment $\gamma \colon [0, t] \to \M$,
    \[ |d(\gamma(t), \gamma(0)) - t| \leq R(t). \] 
    Here length-minimizing means that $d_g(\gamma(t), \gamma(0)) = t$.

\end{restatable}

\begin{restatable}{assumption}{assumptwo}
    \label{assump:thirdorder}
    There exists $K, \eps_0 > 0$ such that $d_g^2(x, y) - d^2(x, y) < Kd^4_g(x, y)$ for all $x, y\in \M$ such that $d_g(x, y) < \eps_0$.
\end{restatable}

The benefits of this approach are twofold. First, Assumption \ref{assump:thirdorder} gives us a way to quantify how well $d$ approximates $d_g$. Secondly, in contrast to \cite{kileel2021manifold}, these assumptions allow us to follow the proof of the pointwise convergence of graph Laplacians in the Euclidean case \cite{belkin2005towards, belkin2008towards} almost word-for-word; even more, this proof gives us upper bounds on the bias error in terms of $K$ and $\eps_0$. This is a first step towards understanding an observation that Kileel et al. made with their numerical experiments \cite[Section 5]{kileel2021manifold}, which can be summarized roughly as follows: for two metrics $d, \tilde{d}$ on a Riemannian manifold $(\M, g)$, if $y \mapsto d(x, y)$ and $y \mapsto \tilde{d}(x, y)$ both locally approximate $y \mapsto d_g(x, y)$, but $y \mapsto d(x, y)$ does so on a larger ball around $x$ for all $x \in \M$, then the Laplacian eigenmaps algorithm tends to perform better with $d$ than $\tilde{d}.$

Our second main contribution is observing that if $d^2$ is sufficiently regular around the diagonal of $\M \times \M$ and
\begin{equation}
\label{eq:induced-metric}
g_x \vc= \frac{1}{2}\mathrm{Hess}_x\left(d^2(\cdot, x) \right) \ \ \forall x \in \M 
\end{equation}
is nondegenerate for all $x \in \M$, then (\ref{eq:induced-metric}) is the unique $C^2$ Riemannian metric on $\M$ for which Assumptions \ref{assump:firstorder} and \ref{assump:thirdorder} are satisfied. Here $\mathrm{Hess}_x$ denotes the Hessian evaluated at $x$, and $d^2(\cdot, x)$ denotes the map $y \mapsto d^2(y, x)$. (Since $d^2(\cdot, x)$ has a critical point at $x$, the Hessian can be taken with respect to any smooth Riemannian metric on $\M$.)
This allows us to more easily check when the graph Laplacians converge pointwise and to identify the limiting operator.

The rest of the paper is organized as follows. In Section \ref{sec:background}, we provide some background on manifold learning and optimal transport and describe related work.
In Section \ref{sec:glconv}, we give conditions on $d$ which are sufficient for the pointwise convergence of graph Laplacians. 
In \cref{sec:examples}, we provide several examples of when the Wasserstein-2 and other distances satisfy these conditions, and in \cref{sec:numexp}, we perform numerical experiments which support our results from Section \ref{sec:glconv}.

\subsection{Notation and assumptions}
\label{subsec:notation-and-assumptions}
\label{subsec:notation}
All of our results in Section \ref{sec:glconv} hold when $\M$ is a smooth, compact, oriented manifold. It is possible to assume looser conditions: for example, following more or less the same proof, \cref{thm:glconv_unif} also holds when $\M$ is just a subset of finite volume of a smooth, oriented Riemannian manifold $(\widetilde{\M}, g)$ and $x$ belongs to the interior of $\M$. However, for simplicity, we take $\M$ to be a smooth, oriented manifold (without boundary). We will provide an example of a non-compact manifold in \cref{subsec:weightedl1}, so we will not require compactness. 

Unless otherwise specified, $\iota \colon \M \to (X, d)$ is a topological embedding of $\M$ into a metric space $(X, d)$, i.e., $\iota$ is a homeomorphism between $\M$ and $\iota(\M)$, where $\M$ is equipped with the topology from its manifold structure and $\iota(\M)$ is equipped with the subspace topology from $(X, d)$.
If $\M$ is compact, then any injective continuous map $\iota \colon \M \to X$ is a topological embedding. However, if $\M$ is not compact, this is not true. 
 That $\iota$ is a topological embedding is required for \cref{lemma:betapos}, which is used for our results in \cref{subsec:suff}.

When there is no ambiguity, we will often treat $d$ as a metric on $\M$. Requiring $\iota$ to be a topological embedding is equivalent to requiring that the topology on $\M$ generated by $d$ is the same as the topology from the manifold structure of $\M$ (see also \cref{rmk:compare-w-arias}).

We warn the reader that we will make different assumptions for $\M$ in Section \ref{subsec:suff} and Section \ref{subsec:suffdiff}:
\begin{itemize}[label = {--}]
    \item In Section \ref{subsec:suff}, we equip $\M$ with a Riemannian metric $g$ and its Levi-Civita connection $\nabla$. We assume that $\M$ has finite volume with respect to $g$ and that $g$ is $C^{k+1}$ for some $k \in \N$, $k \geq 3$. We denote the geodesic distance on $(\M, g)$ by $d_g$. The volume form is denoted by $dV_g$, and the volume of a Borel measurable set $U \subset \M$ is denoted by $\mathrm{vol}_g(U).$ The gradient is denoted by $\grad_g$.
   We use $\mathrm{Hess}$ to denote the Hessian on $(\M, g)$, which we recall is defined by
    \begin{equation}
    \label{eq:defhess-1}
    \mathrm{Hess} f(X, Y) \vc = X(Yf) - \left( \nabla_X Y \right)f 
    \end{equation}
    for $f \in C^2(\M)$ and smooth vector fields $X, Y$ on $\M$.
    We denote by $\Delta_g$ the Laplace-Beltrami operator (also called the Laplacian) on $(\M, g)$: 
    \begin{equation}
    \label{eq:deflap}
        \Delta_g f \vc = -\Tr \left( \mathrm{Hess} f \right).
    \end{equation} Since $g$ is $C^{k+1}$, the exponential map $\exp$ is $C^k$ around the zero section of the tangent bundle $T\M$ (see \cref{subsec:diff-geo} for more details).
We denote the injectivity radius at $x \in \M$ by
\begin{equation}
\mathrm{inj}_{(\M, g)}(x) \vc = \sup \{ \delta > 0 \colon \exp_x \text{ is a } C^k\text{-diffeomorphism on } B_\delta(0) \subset T_x\M \}.
\end{equation}
    \item In Section \ref{subsec:suffdiff}, a priori, we do not equip $\M$ with a Riemannian metric. Rather, we show that if $d^2$ is sufficiently regular around the diagonal of $\M \times \M$ and
    \begin{equation}
    \label{eq:hess-metric-nota}
    g_x = \frac{1}{2}\mathrm{Hess}_x\left(d^2(\cdot, x) \right) 
    \end{equation}
    is nondegenerate for all $x \in \M$, then (\ref{eq:hess-metric-nota})
    is the unique $C^2$ Riemannian metric on $\M$ such that the results of \cref{subsec:suff} can hold. Here $d^2(\cdot, x)$ denotes the map $y \mapsto d^2(y, x)$. Since $d^2(\cdot, x)$ has a critical point at $x$, the Hessian can be taken with respect to any smooth Riemannian metric on $\M$; in other words, by (\ref{eq:defhess-1}), if $x$ is a critical point of $f \in C^2(\M)$,
    \[ \mathrm{Hess}_x f(X, Y) = X(Yf)(x) \]
    for smooth vector fields $X, Y$ on $\M$, independent of any Riemannian metric on $\M$.
    We denote the diagonal of $\M \times \M$ by
    \[ D_\M \vc = \{ (x, x) \colon x \in \M \}. \]
\end{itemize}

We provide a brief overview of some Riemannian geometry in \cref{subsec:diff-geo}.
Throughout the paper, unless otherwise stated, smooth refers to $C^\infty$, $m$ is the dimension of $\M$, $n$ is the sample size, and
for $k \in \N$, $[k]$ denotes the set $\{ 1, \ldots, k \}$.

\section{Background}
\label{sec:background}

\subsection{Manifold learning}
We first briefly describe results from manifold learning in the Euclidean setting, with a focus on methods based on the graph Laplacian \cite{belkin2001laplacian, belkin2003laplacian, belkin2005towards, belkin2008towards, coifman2005geometric, coifman2006diffusion, lafon2004diffusion}. 
Let $\M \subset \R^N$ be a $m$-dimensional smooth, compact, oriented Riemannian submanifold of $\R^N$, and suppose we have data points $x_1,..., x_n$ drawn i.i.d. uniformly from $\M$. In various applications, $N$ may be large, and our goal is to perform dimensionality reduction, i.e., map each $x_i$ into $\R^l$ for $l \ll N.$

One popular method to do this is Laplacian eigenmaps, introduced by Belkin and Niyogi \cite{belkin2001laplacian, belkin2003laplacian}.
For each $\eps > 0$, define $k_\eps \colon \R^N \times \R^N \to \R$ by
\begin{equation}
\label{eq:kernel}
k_\eps(x, y) \vc =  e^{\textstyle -\frac{d(x, y)^2}{2\eps}}, 
\end{equation}
where $d(x, y) = \|x-y\|_{\R^N},$ the typical Euclidean distance.
Now construct a graph whose vertices are $\{x_i\}_{i \in [n]}$, and set the edge weight matrix $W \in \R^{n \times n}$ and degree matrix $D \in \R^{n \times n}$ via
\[ W_{ij} = k_\eps(x_i, x_j) \]
\[ D_{ij} = \begin{cases}
    \sum_{k} W_{ik} & i = j \\
    0 & i \neq j
\end{cases}
\]
for $i, j  \in [n]$.\footnote{In \cite{belkin2001laplacian, belkin2003laplacian}, a $\delta$-neighborhood rule or $k$NN rule is also used when constructing the adjacency graph, i.e., $W_{ij} = 0$ if $d(x_i, x_j)^2 \geq \delta$ ($\delta$-neighborhood rule) or if $x_j$ is not one of the $k$ nearest neighbors of $x_i$ ($k$NN rule). Following \cite{belkin2005towards, belkin2008towards}, we do not do this; this simplifies our analysis, and we leave extensions to the $\delta$-neighborhood or $k$NN case to future work.} The graph Laplacian $L^{(\eps, n)}$ is then defined by
\begin{equation}
\label{eq:gl}
L^{(\eps, n)} \vc = D - W 
\end{equation}
and the (random walk) normalized graph Laplacian $\mathcal{L}^{(\eps, n)}$ is given by
\begin{equation}
\label{eq:ngl}
\mathcal{L}^{(\eps, n)} \vc = I - D^{-1}W. 
\end{equation}
More explicitly, for $f \colon \M \to \R$,
\begin{equation}
L^{(\eps, n)} \left[ f(x_i) \right]_i = \left[ \sum_{j} k_\eps(x_i, x_j)(f(x_i) - f(x_j))  \right]_i. 
\end{equation}
Hence we can define the discrete Laplacian $L^{(\eps, n)}f$ of $f$ by
\begin{equation}
L^{(\eps, n)}f(x) \vc = \sum_j k_\eps(x, x_j)(f(x) - f(x_j)) 
\end{equation}
for $x \in \M$, and similarly,
\begin{equation}
\mathcal{L}^{(\eps, n)}f (x) \vc = f(x) -\frac{\sum_j k_\eps(x, x_j)f(x_j)}{\sum_j k_\eps(x, x_j)}.
\end{equation}
Since $D^{-1}W$ is a Markov transition matrix, it has an eigenvalue of 1 with eigenvector $[1 \cdots 1]^T$, and any other eigenvalue $\lambda$ of $D^{-1}W$ satisfies $\lambda \leq 1$. It follows that $\mathcal{L}^{(\eps, n)}$ has eigenvalues $\lambda_0, ..., \lambda_{n-1}$ which satisfy
\[ 0 = \lambda_0 \leq \lambda_1 \leq \cdots \leq \lambda_{n-1}, \]
with $v^{(0)} = [1 \cdots 1]^T$ the eigenvector for $\lambda_0$. 
Laplacian eigenmaps then performs dimensionality reduction by computing the eigenvectors $v^{(1)}, ..., v^{(l)}$ associated with $\lambda_1, ..., \lambda_{l}$ and mapping each $x_i$ into $\R^l$ by
\begin{equation} 
\label{eq:lapeigenmap}
x_i \mapsto \begin{bmatrix}
    v^{(1)}_i \\
    \vdots \\
    v^{(l)}_i
\end{bmatrix}.
\end{equation}
The diffusion maps algorithm \cite{coifman2005geometric, coifman2006diffusion, lafon2004diffusion, nadler2006diffusion} is similar in that it considers the spectral decomposition of the Markov transition matrix $D^{-1}W$ (or other normalizations of the weight matrix).

The theoretical motivation for these graph Laplacian-based methods lies in the fact that the graph Laplacian approximates (in several senses, to be made precise later) the Laplace-Beltrami operator $\Delta_g$ on $\M$. Notably, on a compact, connected, oriented smooth Riemannian manifold $(\M, g)$, there exist smooth eigenfunctions $\{\varphi^{(k)}\}_{k \in \Z_{\geq 0}}$ of $\Delta_g$ such that $\{\varphi^{(k)}\}_{k \in \Z_{\geq 0}}$ is an orthonormal basis of $L^2(\M)$, $\varphi^{(0)}$ is constant on $\M$ and the associated eigenvalues $\{\lambda_k\}_{k \in \Z_{\geq 0}}$ are ordered:
\[ 0 = \lambda_0 < \lambda_1 \leq \cdots \leq \lambda_k \leq \lambda_{k+1} \leq \cdots . \]
See, e.g., \cite{rosenberg1997laplacian} for a proof and further discussion of the importance of the Laplace-Beltrami operator on Riemannian manifolds.
As an example, for $\M = \S^1$ with the standard round metric $g$, we can take $\{\varphi^{(k)}\}_{k \in \Z_{\geq 0}}$ to be the Fourier basis (appropriately normalized), ordered by increasing frequency.
As a corollary,
\begin{equation}
\label{eq:eigenfuncs_all}
x \mapsto \left[ \varphi^{(k)}(x) \right]_{k \in \N} 
\end{equation}
is an injective map on $\M$ \cite{berard1994embedding}, and in fact, it suffices to take finitely many eigenfunctions: there exists $\tilde{l} < \infty$ such that
\begin{equation}
\label{eq:eigenfuncs}
x \mapsto \left[ \varphi^{(k)}(x) \right]_{k \in \left[\tilde{l}\right]} 
\end{equation}
is a smooth embedding of $\M$ into $\R^{\tilde{l}}$ \cite{abdalla2012embedding, bates2014embedding, berard1985volume}.
(\ref{eq:lapeigenmap}) can then be viewed as a discrete approximation of (\ref{eq:eigenfuncs}).

More precisely, if $(\M, g)$ is a smooth, compact, oriented $m$-dimensional Riemannian submanifold of $\R^N$, $x_1, x_2, ..., x_n$ are drawn i.i.d. from the uniform distribution on $\M$, $\eps_n = 2n^{-\frac{1}{m+2+\alpha}}$ for some $\alpha > 0$, $x \in \M$ and $f \in C^\infty(\M)$, then Belkin and Niyogi showed that
\begin{equation}
\label{eq:glconv_euc_unif}
\frac{2}{\eps_n (2\pi \eps_n)^{m/2}n} L^{(\eps_n, n)} f(x) \to \frac{\Delta_g f(x)}{\mathrm{vol}_g(\M)}  
\end{equation}
in probability as $n \to \infty$ \cite{belkin2005towards, belkin2008towards}. A similar result can be shown for normalized graph Laplacians; the convergence rate was studied in \cite{hein2005graphs, singer2006graph}.
More generally, if $x_1, x_2, ..., x_n$ are drawn from a probability distribution on $\M$ with density $P \in C^3(\M)$, the limit in (\ref{eq:glconv_euc_unif}) is instead $P \Delta_g f - 2\langle \grad_g f, \grad_g P\rangle$
\cite{belkin2008towards, coifman2005geometric, coifman2006diffusion, lafon2004diffusion, hein2005graphs, nadler2006diffusion}. Belkin and Niyogi also established spectral convergence when $x_1, x_2, \ldots, x_n$ are drawn from the uniform distribution on $\M$ \cite{belkin2006convergence}, and a number of recent works have studied the convergence rate for the spectral convergence of (variants of) the graph Laplacian \cite{garcia2020error, cheng2022eigen, calder2022improved, calder2022lipschitz, dunson2021spectral, wahl2024kernel, tan2024improved, trillos2025minimax, cheng2024bi}. 
We note that several of the aforementioned works (e.g., \cite{hein2005graphs, coifman2005geometric, coifman2006diffusion, garcia2020error, calder2022improved, cheng2022eigen, calder2022lipschitz, cheng2024bi}) and others  (e.g., \cite{ting2010analysis, cheng2022knnselftune, berry2016variable})  establish convergence results for different variants of graph Laplacians; for example, for kernels other than the Gaussian kernel (\ref{eq:kernel}), for normalizations other than the random walk normalization (\ref{eq:ngl}), for graphs constructed with the $\delta$-neighborhood rule or $k$NN rule, etc. For simplicity, we only consider the graph Laplacian and its random walk normalization as we have introduced them in (\ref{eq:gl}) and (\ref{eq:ngl}) respectively, and leave extensions to other variants of the graph Laplacian to future work.

\subsection{Optimal transport}

As noted in the introduction, one of the metrics that we are interested in is the Wasserstein distance. Let $\mathcal{P}(X)$ denote the set of Borel probability measures on a metric space $X$, and for $p \geq 1$, let $\mathcal{P}_p(\R^N)$ denote the set of Borel probability measures with finite $p$th moments on $\R^N$. The Wasserstein-$p$ distance $W_p(\mu, \nu)$ between $\mu, \nu \in \mathcal{P}_p(\R^N)$ is given by
\begin{equation}
    W_p(\mu, \nu) \vc = \inf_{\gamma \in \Gamma_{\mu, \nu}} \left( \int_{\R^N} \|x-y\|^p d\gamma(x, y) \right)^{1/p},
\end{equation}
where $\Gamma_{\mu, \nu}$ is the set of couplings between $\mu$ and $\nu$, i.e., the set of probability measures $\gamma \in \mathcal{P}(\R^N \times \R^N)$ whose first marginal is $\mu$ and second marginal is $\nu$.

We will mainly be concerned with the case $p = 2$ because $(\mathcal{P}_2(\R^N), W_2)$ has a formal Riemannian structure. If we can embed a manifold $\M$ into $(\mathcal{P}_2(\R^N), W_2)$ and equip it with this formal Riemannian structure, then we can understand $\M$ (at least formally) as a \textit{Riemannian} submanifold of $(\mathcal{P}_2(\R^N), W_2)$. If this turns out to be an actual Riemannian metric $g$ on $\M$, we can then ask when the graph Laplacians converge to the Laplace-Beltrami operator $\Delta_g$ induced by $g$, in direct analogy with the manifold learning problem in Euclidean space.

In what follows, we briefly review the formal Riemannian structure of $(\mathcal{P}_2(\R^N), W_2)$.
For more details, we refer the reader to \cite{ambrosio2008gradient}.
For more on optimal transport more broadly, we refer the reader to one of several books on optimal transport, e.g., \cite{villani2009optimaloldandnew, villani2021topics, santambrogio}.

\subsubsection{Riemannian structure of Wasserstein space}
\label{subsubsec:wass_riem}
Let $(X, d)$ be a complete metric space and $I = (a, b)$ for some $a < b$. 
Recall that a curve
$\gamma \colon I \to X$ is absolutely continuous
if there exists $f \in L^1(I)$ such that 
\begin{equation}
    \label{eq:abscont}
d(\gamma(s), \gamma(t)) \leq \int_s^t f(x) dx 
\end{equation}
for all $s, t \in I$ such that $s \leq t$ \cite[Definition 1.1.1]{ambrosio2008gradient}. Moreover, if $\gamma$ is absolutely continuous, the 
metric derivative, defined as 
\[ |\gamma'|(t) \vc = \lim_{\eps \to 0} \frac{d(\gamma(t+\eps), \gamma(t))}{|\eps|}, \]
exists for a.e. $t \in I$ and $|\gamma'| \in L^1(I)$ \cite[Theorem 1.1.2]{ambrosio2008gradient}. Here and henceforth a.e. denotes almost everywhere with respect to the Lebesgue measure.

In the case when $(X, d) = \left(\mathcal{P}_2\left(\R^N\right), W_2\right)$, it can be shown \cite[Theorem 8.3.1]{ambrosio2008gradient} that
any absolutely continuous curve
\begin{align*}
\mu \colon I &\to \mathcal{P}_2(\R^N) \\
t &\mapsto \mu_t
\end{align*}
admits a Borel measurable time-dependent vector field 
\begin{align*}
v \colon \R^N \times I &\to \R^N \\
(x, t) &\mapsto v_t(x)
\end{align*}
such that
\begin{itemize}[label = {--}]
\item the continuity equation
\[ \frac{\partial \mu}{\partial t} + \mathrm{div}\left( v_t \mu_t \right) = 0 \] 
is satisfied in the distributional sense:
\begin{equation}
    \label{eq:conteq}
\int_I \int_{\R^N} \frac{\partial \varphi}{\partial t} + \langle v_t, \nabla \varphi \rangle d\mu_t dt = 0 
\end{equation}
for every $\varphi \in C_c^\infty(\R^N \times I),$ where $\nabla$ is the gradient on $\R^N$;
\item $v_t \in L^2(\mu_t)$ and $\|v_t\|_{L^2(\mu_t)} = |\mu'|(t)$ 
for a.e. $t \in I$;
\item $v_t \in \overline{\{ \nabla \varphi \colon \varphi \in C_c^\infty(\R^N) \}}^{L^2(\mu_t)}$ for a.e. $t \in I$, where $\overline{V}^{L^2(\mu_t)}$ denotes the closure of $V \subset L^2(\mu_t)$ in $L^2(\mu_t).$
\end{itemize}

On the other hand, suppose a Borel measurable vector field $v \colon \R^N \times I \to \R^N$ 
satisfies the continuity equation for an absolutely continuous curve $\mu \colon I \to \mathcal{P}_2(\R^N)$ in the distributional sense and $t \mapsto \|v_t\|_{L^2(\mu_t)}$ belongs to $L^1(I)$. Then
\[ |\mu'|(t) \leq \|v_t\|_{L^2(\mu_t)} \]
for a.e. $t \in I$ \cite[Theorem 8.3.1]{ambrosio2008gradient}.
In addition, if both $v \colon \R^N \times I \to \R^N$ and $\widetilde{v} \colon \R^N \times I \to \R^N$ are Borel measurable, satisfy the continuity equation and $v_t, \widetilde{v}_t \in \overline{\{ \nabla \varphi \colon \varphi \in C_c^\infty(\R^N) \}}^{L^2(\mu_t)}$ for a.e. $t \in I$, then $v_t = \widetilde{v}_t$ for a.e. $t \in I.$ 
Indeed, since $v$ and $\widetilde{v}$ both satisfy the continuity equation, their difference is divergence-free:
\[ \mathrm{div} \left( (v_t - \tilde{v}_t) \mu_t \right) = 0 \]
for a.e. $t \in I$.
By definition, for any $\nu \in \mathcal{P}_2(\R^N)$ and vector field $w \in L^2(\nu)$,
$\mathrm{div}(w\nu) = 0$ means that
\[ \langle \nabla \varphi, w \rangle_{L^2(\nu)} = 0 \]
for all $\varphi \in C_c^\infty(\R^N)$, so the orthogonal complement of $\{ w \colon \mathrm{div}(w \nu) = 0 \}$ in $L^2(\nu)$ is 
\[\{ w \colon \mathrm{div}(w \nu) = 0 \}^{\perp} = \overline{\{ \nabla \varphi \colon \varphi \in C_c^\infty(\R^N) \}}^{L^2(\nu)}.\] 
Hence $v_t = \widetilde{v}_t$ for a.e. $t \in I$. Altogether, this motivates the definition for the tangent space at $\nu \in \mathcal{P}_2(\R^N)$:
\[ T_\nu \mathcal{P}_2(\R^N) \vc = \overline{\{ \nabla \varphi \colon \varphi \in C_c^\infty(\R^N) \}}^{L^2(\nu)}. \] 
The formal Riemannian structure of $\mathcal{P}_2(\R^N)$ is then obtained by taking the $L^2(\nu)$ inner product on $T_\nu \mathcal{P}_2(\R^N)$ for each $\nu \in \mathcal{P}_2(\R^N)$.

In short, for any absolutely continuous curve $\mu \colon I \to \mathcal{P}_2(\R^N)$, there exists a time-dependent vector field $\{v_t\}_{t \in I}$
satisfying the continuity equation for $\mu$ such that $v_t \in T_{\mu_t}\mathcal{P}_2(\R^N)$ and $\|v_t\|_{L^2(\mu_t)} = |\mu'|(t)$ for a.e. $t \in I$, and such a vector field is uniquely determined for a.e. $t \in I$.

\subsection{Related work}

The idea of using other metrics in graph Laplacian-based methods for dimensionality reduction is not new.
Zelesko et al. proposed using a wavelet approximation of the Wasserstein-1 metric \cite{shirdhonkar2008approximate} in the diffusion maps algorithm \cite{zelesko2020earthmover} (see also \cite{kileel2021manifold}) to study the conformation space of a molecule. Lu et al. used the diffusion maps algorithm with (an approximation of) the Wasserstein-2 metric to identify conserved quantities in a dynamical system \cite{lu2023discovering}. In addition, Carter et al. proposed estimating the geodesic distance on a statistical manifold with the Fisher information as a Riemannian metric and using it in multidimensional scaling (\`a la Isomap) or Laplacian eigenmaps \cite{carter2009fine}. Mishne et al. defined a metric based on a tree constructed from the data and used this metric in the diffusion maps algorithm to analyze neuronal activity \cite{mishne2016hierarchical}. However, for this paper, we assume that the metric is independent of the data, so our results do not apply to the metrics used in \cite{carter2009fine, mishne2016hierarchical}.

Dimensionality reduction methods other than graph Laplacian-based methods have been studied with non-Euclidean metrics (and even ways of measuring dissimilarity which are not metrics) as well. In fact, the MNIST example in the original Isomap paper does not use the Euclidean distance between the digits as the input to the Isomap algorithm \cite[Fig. 1B]{tenenbaum2000global}. The Isomap algorithm itself can be viewed as multidimensional scaling with approximations of the manifold's geodesic distance \cite{tenenbaum2000global}. Several works have also looked at extending various dimensionality reduction methods
to a space of probability measures equipped with the Wasserstein distance (or approximations thereof) \cite{hamm2023wassmap, bigot2017geodesic, seguy2015principal, collas2023entropic, flamary2018wasserstein}. Negrini and Nurbekyan recently proposed using a type of transportation distance called the no-collision distance in multidimensional scaling \cite{negrini2024applications}. Another line of recent work has studied multidimensional scaling on metric measure spaces \cite{adams2020multidimensional, kroshnin2022infinite, lim2024classical}; see also \cite{arias2025embedding}.

The two works closest to ours are \cite{kileel2021manifold} and \cite{trillos2024fermat}.
Kileel et al. \cite{kileel2021manifold} establish the pointwise convergence of the graph Laplacian to a Laplacian-like operator when $\M$ is a smooth submanifold of $\R^N$ with an arbitrary norm.
In contrast, we do not require $\M$ to be a submanifold of $\R^N$, and we consider metrics, not just norms. Moreover, the coefficients of the limiting operator in \cite{kileel2021manifold} are given in terms of extrinsic quantities such as the second fundamental form, whereas the limiting operators in \cref{thm:glconv_unif} and \cref{thm:glconv_nonunif} are intrinsic. In \cref{subsec:weightedl1}, we show how we can apply our approach to \cite[Example 3.7]{kileel2021manifold}. However, the assumptions we make for \cref{thm:c4} may not be satisfied by an arbitrary norm and smooth submanifold of $\R^N$. Additionally, Garc\'ia Trillos et al. prove the spectral convergence of graph Laplacians when the graph Laplacians are constructed with a metric from a specific family of data-dependent distances known as Fermat distances \cite{trillos2024fermat}. To the best of our knowledge, the pointwise convergence of graph Laplacians constructed with Fermat distances is still unknown, but as we mentioned above, our results do not apply to metrics which depend on the data.

We also point out two other theoretical results on manifold learning — though not graph Laplacian-based methods — with non-Euclidean metrics. Hamm et al. \cite{hamm2024manifoldlearningwassersteinspace} focus on the space of absolutely continuous probability measures $\mathcal{P}_{ac}(\Omega)$ on a compact, convex subset $\Omega \subset \R^N$ equipped with the Wasserstein-2 distance $W_2$. They prove the convergence of graph metric measure spaces in the $\infty$-Gromov-Wasserstein distance when there is a bi-Lipschitz embedding (with some additional regularity conditions) of a compact, connected smooth Riemannian manifold $(\M, g)$ into $(\mathcal{P}_{ac}(\Omega), W_2)$. Arias-Castro and Qiao \cite{arias2025embedding} consider the following model: suppose $\Theta \subset \R^m$ is open and connected, $X \vc= \{ f_\theta \colon \theta \in \Theta\}$ is a family of probability densities with respect to a fixed measure $\mu$ on $\R^N$, and $d$ is a metric on $X$. Here $\Theta$ can be viewed as the space of parameters. They then study the behavior of Isomap given a finite set $\theta_1, ..., \theta_n \in \Theta$, a finite set of samples $S_{\theta_i}$ drawn from each $f_{\theta_i}d\mu$ and approximations of $d\left( f_{\theta_i}, f_{\theta_j} \right)$ constructed from $S_{\theta_i}$ and $S_{\theta_j}$. To analyze Isomap in this setting, if $d^2$ is $C^2$ on $\Theta$ and (\ref{eq:induced-metric}) is nondegenerate for all $x \in \Theta$, they equip $\Theta$ with the Riemannian metric (\ref{eq:induced-metric}). 

Although not directly related to manifold learning, there has been some work \cite{chen2020optimal, li2023wasserstein} on when $d$ is the Wasserstein-2 distance (see also \cite{li2018natural} for a graph variant) and a parameter space $\Theta \subset \R^m$ is equipped with a Riemannian metric that agrees with (\ref{eq:induced-metric}) when appropriate conditions are satisfied; this is motivated in part by the multitude of results related to the Fisher information. Indeed, if $d$ is the Hellinger distance, or if $d^2$ in (\ref{eq:induced-metric}) is replaced by the KL divergence, under some assumptions, (\ref{eq:induced-metric}) becomes a scalar multiple of the Fisher information \cite{chewi2024statisticaloptimaltransport, kullback1997information}. 
The idea of defining a Riemannian metric on a parametrized family of probability measures via the Hessian of a divergence dates back to Rao \cite{rao1987differential}, and the use of the Fisher information as a Riemannian metric is well-studied in information geometry (see, e.g., \cite{ay2017information}). To the best of our knowledge, the use of (\ref{eq:induced-metric}) to understand the pointwise convergence of graph Laplacians is new.

\section{Pointwise convergence of the graph Laplacian}
\label{sec:glconv}
Let us start from the Euclidean case.
Consider a smooth oriented Riemannian submanifold $(\M, g)$ of $\R^N$ of finite volume. Let $x \in \M$ and $f \in C^3(\M) \cap L^\infty(\M)$. Suppose $x_1, ..., x_n$ are drawn i.i.d. from the uniform distribution on $(\M, g)$.
The proof of the pointwise convergence of graph Laplacians in (\ref{eq:glconv_euc_unif}) can be broken into two parts \cite{belkin2005towards, belkin2008towards}.

One part follows from standard techniques in concentration of measure. Defining the random variables $X_j^{(\eps)} \vc = k_\eps(x, x_j)$, $Y_j^{(\eps)} \vc = k_\eps(x, x_j)f(x_j)$, we can express $L^{(\eps, n)}f(x)$ as
\[ L^{(\eps, n)}f(x) = \sum_{j = 1}^n f(x) X_j^{(\eps)}-Y_j^{(\eps)}.\]
Since $f(x) X_1^{(\eps)} - Y_1^{(\eps)}, ..., f(x) X_n^{(\eps)}- Y_n^{(\eps)}$ are independent and $\left|f(x) X_j^{(\eps)} - Y_j^{(\eps)}\right| \leq 2\|f\|_\infty$ almost surely,
by Hoeffding's inequality,
\[ \mathbb{P}\left( \left| \frac{L^{(\eps, n)}f(x)}{n} - \E\left[ f(x) X_1^{(\eps)}-Y_1^{(\eps)} \right] \right| > \delta   \right) \leq 2e^{\textstyle-\frac{n\delta^2}{8\|f\|_\infty^2}}. \]
If $\eps_n = 2n^{-\frac{1}{m+2 + \alpha}}$ for some $\alpha > 0$,
\begin{equation}
\label{eq:appofhoeffding}
\mathbb{P}\left( \frac{2}{\eps_n(2\pi \eps_n)^{m/2}}\left| \frac{L^{(\eps_n, n)}f(x)}{n} - \E\left[  f(x) X_1^{(\eps_n)}  - Y_1^{(\eps_n)}\right] \right| > \delta   \right) \leq 2e^{\textstyle-\frac{n\left(\delta (2\pi\eps_n)^{m/2}\eps_n /2\right)^2}{8\|f\|_\infty^2}}
\end{equation}
goes to 0 as $n \to \infty$. Moreover, for $\eps_n = \Omega\left(n^{-\frac{1}{m+2+\alpha}} \right)$ and any $\delta > 0,$
\[ \sum_{n \in \N} 2e^{\textstyle-\frac{n\left(\delta (2\pi\eps_n)^{m/2}\eps_n /2\right)^2}{8\|f\|_\infty^2}}< \infty, \]
so by Borel-Cantelli, $\mathbb{P}\left( E_\delta \right) = 1$,
where $E_\delta$ is the event
\[ E_\delta \vc = \left\{ \exists L \text{ such that } \frac{2}{\eps_n(2\pi \eps_n)^{m/2}}\left| \frac{L^{(\eps_n, n)}f(x)}{n} - \E\left[  f(x) X_1^{(\eps_n)} - Y_1^{(\eps_n)}\right] \right| \leq  \delta \ \forall n \geq L \right\}. \]
Hence
\begin{align*}
    \mathbb{P}\left( \frac{2}{\eps_n(2\pi \eps_n)^{m/2}}\left| \frac{L^{(\eps_n, n)}f(x)}{n} - \E\left[  f(x) X_1^{(\eps_n)} - Y_1^{(\eps_n)} \right] \right|\xrightarrow[]{n \to \infty} 0 \right) = \mathbb{P}\left(\bigcap_{M \in \N} E_{1/M} \right) = 1.
\end{align*} 
Therefore, to prove that 
\begin{equation}
\label{eq:gl_conv_unif}
\frac{2}{\eps_n (2\pi \eps_n)^{m/2}n} L^{(\eps_n, n)} f(x) \to \frac{\Delta_g f(x)}{\mathrm{vol}_g(\M)}
\end{equation}
almost surely as $n \to \infty$, it suffices to prove that
\begin{equation}
\label{eq:consistency}
\lim_{n \to \infty} \frac{2\E\left[ f(x) X_1^{(\eps_n)} - Y_1^{(\eps_n)} \right]}{\eps_n (2\pi \eps_n)^{m/2}} = \frac{\Delta_g f(x)}{\mathrm{vol}_g(\M)}. 
\end{equation}

\begin{remark}
\label{rmk:generalmetricanddist}
We have not used any properties of the uniform distribution or the Euclidean distance so far; (\ref{eq:appofhoeffding}) holds even if $x_1, ..., x_n$ are drawn i.i.d. from any probability distribution on $\M$ and if we replace the Euclidean distance with \emph{any} metric $d$ on $\M$ (see also \cite[Section 4.1]{kileel2021manifold}).
\end{remark}

The second part of proving the pointwise convergence of graph Laplacians (\ref{eq:glconv_euc_unif}) is to prove (\ref{eq:consistency}). When $\M$ is compact, this is done by using the following property of the Euclidean distance and geodesic distance $d_g$ on $\M$ \cite[Lemma 4.3]{belkin2008towards}: there exists a constant $K > 0$ such that
\begin{equation}
    \label{eq:property_euc_dM}
    \|x-y\|^2 \leq d_g^2(x, y) \leq \|x-y\|^2 + Kd_g^4(x,y)
\end{equation}
for all $x, y \in \M$. Therefore, more generally, if $(\M, g)$ is a Riemannian manifold of finite volume, $d$ is a metric on $\M$ which generates the same topology as the topology of $\M$ from the manifold structure, and there exists $K > 0$ such that
\begin{equation}
\label{eq:property_suff}
    d^2(x, y) \leq d_g^2(x, y) \leq d^2(x, y) + K d_g^4(x, y)
\end{equation}
for all $x, y \in \M$, then Belkin and Niyogi's proof of the pointwise convergence of graph Laplacians (\ref{eq:glconv_euc_unif}), \textit{mutatis mutandis}, still holds.
A careful reading of the proof in \cite{belkin2005towards, belkin2008towards} shows that it suffices for (\ref{eq:property_suff}) to hold locally, as we detail in the next section.

\subsection{Sufficient conditions for the pointwise convergence of the graph Laplacian}
\label{subsec:suff}

Suppose $\M$ is a smooth oriented manifold equipped with a $C^{k+1}$ Riemannian metric $g$ for some $k \in \N$ such that $k \geq 3$, and suppose also that $(\M, g)$ has finite volume. Let $\iota \colon \M \to (X, d)$ be a topological embedding; for convenience, we treat $d$ as a metric on $\M$.
Recall the following assumptions mentioned in the introduction:
\firstorderassump*
\assumptwo*

To draw a parallel with (\ref{eq:property_suff}), we prove that if Assumption \ref{assump:firstorder} holds, then the lower bound in (\ref{eq:property_suff}) holds locally.
\begin{lemma}
\label{lemma:upper}
Assumption \ref{assump:firstorder} implies $d(x, y) \leq d_g(x, y)$ for all $x, y \in \M$ such that $d_g(x, y) < \mathrm{inj}_{(\M, g)}(x)$.
\end{lemma}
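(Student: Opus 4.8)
The plan is to exploit the triangle inequality for the metric $d$ together with a fine subdivision of a minimizing geodesic, so that Assumption \ref{assump:firstorder} can be applied to each short piece and the accumulated first-order errors become negligible in the limit.

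First I would fix $x, y \in \M$ with $L \vc= d_g(x, y) < \mathrm{inj}_{(\M, g)}(x)$. Because $L$ is less than the injectivity radius at $x$, there is a unit-speed minimizing geodesic $\gamma \colon [0, L] \to \M$ with $\gamma(0) = x$ and $\gamma(L) = y$ (namely $\gamma(t) = \exp_x(tv)$ for the appropriate unit vector $v \in T_x\M$), realizing $d_g(\gamma(0), \gamma(L)) = L$. For $N \in \N$ I would set $t_i = iL/N$ for $i \in \{0, 1, \dots, N\}$ and look at the restrictions $\gamma|_{[t_{i-1}, t_i]}$.

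The key observation is that each such restriction, after translating its domain to start at $0$, is again a length-minimizing unit-speed geodesic segment: a subsegment of a minimizing geodesic is itself minimizing, so $d_g(\gamma(t_{i-1}), \gamma(t_i)) = t_i - t_{i-1} = L/N$. Hence Assumption \ref{assump:firstorder} applies to each piece and gives
\[ d(\gamma(t_{i-1}), \gamma(t_i)) \leq (t_i - t_{i-1}) + R(t_i - t_{i-1}) = \frac{L}{N} + R\!\left(\frac{L}{N}\right). \]
Applying the triangle inequality for $d$ along the chain $x = \gamma(t_0), \gamma(t_1), \dots, \gamma(t_N) = y$ then yields
\[ d(x, y) \leq \sum_{i=1}^{N} d(\gamma(t_{i-1}), \gamma(t_i)) \leq L + N\, R\!\left(\frac{L}{N}\right) = L + L \cdot \frac{R(L/N)}{L/N}. \]

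Finally I would let $N \to \infty$. Since $L/N \to 0^{+}$ and $R(u)/u \to 0$ as $u \to 0^{+}$, the quantity $R(L/N)/(L/N) \to 0$, so $d(x, y) \leq L = d_g(x, y)$, as claimed. I expect the only point requiring genuine care to be the justification that each subsegment is length-minimizing: this is precisely where the hypothesis $d_g(x, y) < \mathrm{inj}_{(\M, g)}(x)$ enters, ensuring $\gamma$ is globally minimizing on $[0, L]$ and therefore on every subinterval. A minor accompanying point is that $R(L/N)$ is finite for all large $N$, which follows since $R(u)/u \to 0$ forces $R$ to be finite on a punctured neighborhood of $0$.
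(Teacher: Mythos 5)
Your proof is correct and follows essentially the same route as the paper's: the paper also takes the radial geodesic $\gamma(s) = \exp_x(sv)$, subdivides it into $N$ equal pieces (each minimizing, so Assumption \ref{assump:firstorder} applies), chains the triangle inequality for $d$, and lets $N \to \infty$ using $R(u)/u \to 0$. Your added remarks — that subsegments of a minimizing geodesic are minimizing, and that $R(L/N)$ is eventually finite — are sound and only make explicit what the paper leaves implicit.
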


\begin{proof}
For any $x, y \in \M$ such that $d_g(x, y) < \mathrm{inj}_{(\M, g)}(x)$, there exists a unique $v \in T_x\M$ such that $g_x(v, v) = 1$ and $y = \exp_x(tv)$ for $t = d_g(x, y)$. Define $\gamma \colon [0, t] \to \M$ by $\gamma(s) \vc = \exp_x(sv)$ for $s \in [0, t]$. Since $\gamma$ is a length-minimizing, unit-speed geodesic between $\gamma(s)$ and $\gamma(\widetilde{s})$ for any $s, \widetilde{s} \in [0, t]$,
by Assumption \ref{assump:firstorder}, there exists a function $R \colon [0, \infty) \to [0, \infty]$ such that $R(u)/u \to 0$ as $u \to 0^{+}$ and
\begin{equation}
\label{eq:rem-bound-along-geod}
|d(\gamma(s), \gamma(\widetilde{s})) - |s-\widetilde{s}|| \leq R(|s-\widetilde{s}|) 
\end{equation}
for all $s, \widetilde{s} \in [0, t]$.
Then for any $N \in \N$, the triangle inequality and (\ref{eq:rem-bound-along-geod}) imply that
\[ d(x, y) \leq \sum_{k \in [N]} d(\gamma(s_{k-1}), \gamma(s_k)) \leq N \left( \frac{t}{N} + R\left( \frac{t}{N}\right) \right), \]
where $s_k \vc= \frac{kt}{N}$ for $k = 0, \ldots, N$. Taking $N \to \infty$ gives us
\[ d(x, y) \leq t = d_g(x, y). \]
\end{proof}

Assumption \ref{assump:thirdorder} can be seen as a local version of the upper bound in (\ref{eq:property_suff}). We claim that Assumptions \ref{assump:firstorder} and \ref{assump:thirdorder} allow us to generalize
Belkin and Niyogi's result \cite[Theorem 3.1]{belkin2008towards} on the pointwise convergence of graph Laplacians almost word-for-word.

\begin{theorem}
\label{thm:glconv_unif}
Suppose that Assumptions \ref{assump:firstorder} and \ref{assump:thirdorder} are satisfied, and that $x_1, ..., x_n$ are drawn i.i.d. uniformly from $\M$ (with respect to $g$). Fix any $f \in C^3(\M) \cap L^\infty(\M)$ and $x \in \M.$ If $\eps_n = 2n^{-\frac{1}{m+2 + \alpha}}$ for some $\alpha > 0$, then 
\[\frac{2}{\eps_n (2\pi \eps_n)^{m/2}n} L^{(\eps_n, n)} f(x) \to \frac{\Delta_g f(x)}{\mathrm{vol}_g(\M)}  \]
almost surely as $n \to \infty$.
\end{theorem}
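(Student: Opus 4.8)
The plan is to reduce the theorem to the consistency statement~\eqref{eq:consistency} via the concentration-of-measure argument already carried out in the excerpt (Hoeffding plus Borel--Cantelli, which Remark~\ref{rmk:generalmetricanddist} tells us holds verbatim for an arbitrary metric $d$), and then to establish~\eqref{eq:consistency} by recycling Belkin and Niyogi's computation of the limit of $\frac{2}{\eps_n(2\pi\eps_n)^{m/2}}\E[f(x)X_1^{(\eps_n)}-Y_1^{(\eps_n)}]$, substituting our distance $d$ for the Euclidean distance at the single point where a distance estimate enters. Concretely, I would first expand the expectation as an integral over $\M$ against the uniform density $1/\mathrm{vol}_g(\M)$:
\[
\E\!\left[f(x)X_1^{(\eps_n)}-Y_1^{(\eps_n)}\right]=\frac{1}{\mathrm{vol}_g(\M)}\int_\M e^{-d^2(x,y)/(2\eps_n)}\bigl(f(x)-f(y)\bigr)\,dV_g(y).
\]

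Next I would localize: because the kernel $e^{-d^2(x,y)/(2\eps_n)}$ concentrates near $y=x$ as $\eps_n\to 0$, and because $\iota$ is a topological embedding (so that $d$ and $d_g$ induce the same topology, giving the tail decay via a lemma like \cref{lemma:betapos}), the integral over the complement of a small geodesic ball $B_\delta(x)$ is negligible after multiplication by $\eps_n^{-1}(2\pi\eps_n)^{-m/2}$. On the ball $B_\delta(x)$ with $\delta<\mathrm{inj}_{(\M,g)}(x)$, I would pass to normal coordinates via $\exp_x$ and compare $d^2$ to $d_g^2$: \cref{lemma:upper} (from Assumption~\ref{assump:firstorder}) gives $d\le d_g$, while Assumption~\ref{assump:thirdorder} gives $d_g^2-d^2<Kd_g^4$, so that
\[
d^2(x,y)\le d_g^2(x,y)\le d^2(x,y)+Kd_g^4(x,y)
\]
holds locally, exactly reproducing the two-sided bound~\eqref{eq:property_suff} that Belkin and Niyogi use. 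The factor $e^{-d^2/(2\eps_n)}$ then differs from $e^{-d_g^2/(2\eps_n)}$ by a multiplicative factor $e^{O(d_g^4/\eps_n)}=1+O(d_g^4/\eps_n)$, whose contribution to the rescaled integral vanishes as $\eps_n\to 0$ because the surviving mass sits at scale $d_g\sim\sqrt{\eps_n}$, making $d_g^4/\eps_n\sim\eps_n\to0$.

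After this substitution the remaining computation is identical to the Euclidean (geodesic) case: expand $f(x)-f(\exp_x(v))$ by Taylor's theorem to second order in $v$, expand the Riemannian volume element $dV_g$ in normal coordinates, and perform the resulting Gaussian moment integrals. The odd-order terms vanish by symmetry, the zeroth-order term cancels since $f(x)-f(x)=0$, and the second-order term produces $-\Tr(\mathrm{Hess}\,f(x))=\Delta_g f(x)$ together with the normalization constant $(2\pi\eps_n)^{m/2}\eps_n/2$ and the volume factor $1/\mathrm{vol}_g(\M)$, yielding precisely~\eqref{eq:consistency}. I expect the main obstacle to be the \emph{localization step} rather than the final Gaussian integral: one must verify quantitatively that the off-diagonal contribution is genuinely negligible. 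This requires a uniform lower bound of the form $d(x,y)\ge c>0$ whenever $d_g(x,y)\ge\delta$, which is where the embedding hypothesis (via \cref{lemma:betapos}) and the compactness/finite-volume assumptions do real work; once such a bound is in hand the tail is controlled by $\eps_n^{-1-m/2}e^{-c^2/(2\eps_n)}\to0$, and the replacement of $d$ by $d_g$ inside the ball is justified by the error analysis sketched above, including an explicit bias bound in terms of $K$ and $\eps_0$ as advertised in the introduction.
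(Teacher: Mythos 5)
Your proposal is correct and follows essentially the same route as the paper: the Hoeffding--Borel--Cantelli reduction to the consistency statement (\ref{eq:consistency}), localization to a geodesic ball controlled via \cref{lemma:betapos}, passage to normal coordinates with the two-sided comparison of $d$ and $d_g$ from \cref{lemma:upper} and Assumption~\ref{assump:thirdorder}, and the final Gaussian moment computation --- exactly the decomposition into the errors $R_1, R_2, R_3$ of \cref{prop:nonasymp}. The only step you state loosely is the expansion $e^{O(d_g^4/\eps_n)} = 1 + O(d_g^4/\eps_n)$, which is not uniformly valid over the whole localization ball; the paper makes this rigorous by assuming (without loss of generality) $\eps_0^2 < \frac{1}{2K}$ and using $e^t \leq 1 + t e^t$, so that half of the Gaussian factor absorbs the quartic term and the resulting error is $O(\max(1,K)\eps^{3/2})$.
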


From our discussion at the beginning of the section, to prove Theorem \ref{thm:glconv_unif}, it suffices to prove (\ref{eq:consistency}) with the Euclidean distance replaced by a metric $d$ satisfying Assumptions \ref{assump:firstorder} and \ref{assump:thirdorder}.
For each $\eps > 0$, define the operator $G_\eps$ by\footnote{This is similar to the operator $G_\eps$ as defined in \cite{coifman2006diffusion, lafon2004diffusion}, but we have used a different normalization since it is more convenient for our proof.}
\[ G_\eps f(x) \vc= \frac{1}{(2\pi \eps)^{m/2}}\int_\M k_\eps(x, y)f(y) \ dV_g(y) \]
for any $f \in L^\infty(\M)$, and let
\[ A_\eps(x) \vc = \frac{1}{(2\pi \eps)^{m/2}}\int_\M k_\eps(x, y) \ dV_g(y). \]
Observe that when $x_1$ is drawn from the uniform distribution on $(\M, g)$, we have
\begin{align*}
G_\eps f(x) &= \frac{\mathrm{vol}_g(\M) \E\left[ Y_1^{(\eps)} \right]}{(2\pi \eps)^{m/2}} \\
A_\eps(x) &= \frac{\mathrm{vol}_g(\M) \E\left[ X_1^{(\eps)} \right]}{(2\pi \eps)^{m/2}}. 
\end{align*}
The following proposition implies (\ref{eq:consistency}), which implies Theorem \ref{thm:glconv_unif}.

\begin{proposition}
\label{prop:lapapprox}
    Suppose that Assumptions \ref{assump:firstorder} and \ref{assump:thirdorder} are satisfied.
    Fix any $f \in C^3(\M) \cap L^\infty(\M)$ and $x \in \M$. Then
    \begin{equation}
    \label{eq:lapapprox}
     f(x)A_\eps(x) - G_\eps f(x) = \frac{\eps}{2}\Delta_g f(x) + O(\eps^{3/2})
    \end{equation}
    as $\eps \to 0$.

\end{proposition}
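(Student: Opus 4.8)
The plan is to write the quantity as a single integral,
\[
f(x)A_\eps(x)-G_\eps f(x)=\frac{1}{(2\pi\eps)^{m/2}}\int_\M e^{-d^2(x,y)/(2\eps)}\bigl(f(x)-f(y)\bigr)\,dV_g(y),
\]
and reduce it, in three steps, to the classical Riemannian Gaussian computation of Belkin and Niyogi. Fix $\delta>0$ smaller than $\eps_0$, the injectivity radius $\mathrm{inj}_{(\M,g)}(x)$, and $1/\sqrt{2K}$. First I would localize: since $\iota$ is a topological embedding, $d$ and $d_g$ induce the same topology, so by \cref{lemma:betapos} there is $\beta>0$ with $d(x,y)\ge\beta$ whenever $d_g(x,y)\ge\delta$; the kernel is then bounded by $e^{-\beta^2/(2\eps)}$ off the geodesic ball $B_\delta(x)$, and as $(\M,g)$ has finite volume the contribution of $\{d_g(x,y)\ge\delta\}$ is $O(\eps^\infty)$. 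Hence it suffices to analyze the integral over $B_\delta(x)$.

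Second -- and this is the heart of the argument -- I would replace $d$ by $d_g$ inside the kernel. On $B_\delta(x)$ the two assumptions combine to pin down the distortion: \cref{lemma:upper} (a consequence of \cref{assump:firstorder}) gives $d\le d_g$, while \cref{assump:thirdorder} gives $d_g^2-d^2<Kd_g^4$, so that $0\le d_g^2-d^2\le Kd_g^4$. Writing $e^{-d^2/(2\eps)}-e^{-d_g^2/(2\eps)}=e^{-d_g^2/(2\eps)}\bigl(e^{(d_g^2-d^2)/(2\eps)}-1\bigr)$ and using $0\le e^t-1\le te^t$, this difference is bounded by $e^{-d_g^2/(2\eps)}\,\frac{Kd_g^4}{2\eps}\,e^{Kd_g^4/(2\eps)}$. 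The choice $K\delta^2\le 1/2$ is exactly what keeps the inflated Gaussian integrable, since on $B_\delta(x)$ one has $e^{-d_g^2/(2\eps)}e^{Kd_g^4/(2\eps)}\le e^{-d_g^2/(4\eps)}$. Passing to normal coordinates $y=\exp_x(u)$, so that $d_g(x,y)=|u|$, and using $|f(x)-f(y)|\le C|u|$ together with $dV_g\le C_g\,du$, the error integral is dominated by $\frac{1}{(2\pi\eps)^{m/2}}\int e^{-|u|^2/(4\eps)}\,\frac{K|u|^4}{2\eps}\,C|u|\,du$, which by the scaling $u=\sqrt{\eps}\,w$ is $O(\eps^{3/2})$.

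It then remains to compute the geodesic-kernel integral $\frac{1}{(2\pi\eps)^{m/2}}\int_{B_\delta(x)} e^{-d_g^2(x,y)/(2\eps)}\bigl(f(x)-f(y)\bigr)\,dV_g(y)$, which is the standard heat-kernel-type expansion. In normal coordinates this becomes $\frac{1}{(2\pi\eps)^{m/2}}\int_{|u|<\delta}e^{-|u|^2/(2\eps)}\bigl(f(x)-f(\exp_x u)\bigr)\sqrt{\det g(u)}\,du$. I would Taylor expand $f\circ\exp_x$ to second order (with $C^3$ remainder $\le C|u|^3$) and use $\sqrt{\det g(u)}=1+O(|u|^2)$; the product $\bigl(f(x)-f(\exp_x u)\bigr)\sqrt{\det g(u)}$ has no zeroth-order term, its linear term is odd and integrates to zero, and the $|u|^3$ remainder integrates to $O(\eps^{3/2})$. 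The quadratic term contributes $-\tfrac{\eps}{2}\sum_i\partial_i\partial_i(f\circ\exp_x)(0)$ via the moment identity $\frac{1}{(2\pi\eps)^{m/2}}\int_{\R^m}e^{-|u|^2/(2\eps)}u^iu^j\,du=\eps\,\delta_{ij}$; since the Christoffel symbols vanish at the center of normal coordinates and $\Delta_g=-\Tr(\mathrm{Hess})$, this equals $\tfrac{\eps}{2}\Delta_g f(x)$. Extending each integral from $B_\delta(x)$ to $\R^m$ costs only $O(\eps^\infty)$, and collecting the three $O(\eps^{3/2})$ contributions yields the claim.

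The main obstacle is the second step: the pointwise bound $|d^2-d_g^2|\le Kd_g^4$ inflates the Gaussian by a factor $e^{Kd_g^4/(2\eps)}$, which is unbounded as $d_g$ grows, so a careless estimate diverges. The delicate point is to choose $\delta$ small enough relative to $K$ that the net exponent $|u|^2-K|u|^4$ stays comparable to $|u|^2$ on $B_\delta(x)$, and then to check that the resulting degree-five Gaussian moment scales precisely as $\eps^{3/2}$ after the $1/\eps$ loss from the exponent. This is where both assumptions, not merely \cref{assump:thirdorder}, genuinely enter.
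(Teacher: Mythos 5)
Your proof is correct and follows essentially the same route as the paper's: localize to a geodesic ball using \cref{lemma:betapos} and finite volume, replace the kernel $e^{-d^2/(2\eps)}$ by $e^{-d_g^2/(2\eps)}$ via the two-sided bound $0\le d_g^2-d^2\le Kd_g^4$ (from \cref{lemma:upper} and \cref{assump:thirdorder}) together with $e^t-1\le te^t$ and the smallness condition $K\delta^2\le\tfrac12$, and finish with the standard normal-coordinate Taylor/Gaussian-moment computation. The only differences are organizational: the paper folds the kernel replacement into the same integral pass as the Taylor expansions and proves a non-asymptotic version (\cref{prop:nonasymp}) from which \cref{prop:lapapprox} follows, whereas you prove the asymptotic statement directly in two separate steps.
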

 We provide a non-asymptotic version as well to better understand the benefits of a larger $\eps_0$. Fix any $c \in (0, 1).$ For $x \in \M$, define
\[ r(x) \vc = \min\left( \eps_0, c \  \mathrm{inj}_{(\M, g)}(x) \right). \]
For $x \in \M$ and $R > 0$, define
       \begin{equation}  
       \label{def:beta}
       \beta(x, R) \vc = \inf_{\substack{y \in \M \\ d_g(y, x) \geq R
    }} d(y, x). 
    \end{equation}
Since we assumed that $\M$ is embedded in $(X, d)$, $\beta(x, R)$ is strictly positive for all $x \in \M$ and $R > 0$ (\cref{lemma:betapos}).
\begin{proposition}
    \label{prop:nonasymp}
    Suppose Assumption \ref{assump:firstorder} holds and that Assumption \ref{assump:thirdorder} holds for $K, \eps_0$ such that $\eps_0^2 < \frac{1}{2K}$. Then for any $x \in \M$, $f \in C^3(\M) \cap L^\infty(\M)$ and $\delta > 0$,
    there exists a constant $C > 0$ (dependent on $(\M, g)$, $x$, $f$, $\delta$ and $c$, but not $K$ or $\eps_0$) such that
    \begin{equation}
    \label{eq:nonasymp}
    \left|f(x) A_\eps(x) - G_\eps f(x) - \frac{\eps}{2}\Delta_g f(x) \right| \leq R_1(\eps) + R_2(\eps) + R_3(\eps), 
    \end{equation}
    for all $\eps < \delta$,
    where
    \begin{align*}
        R_1(\eps) &\vc= \frac{2\mathrm{vol}_g\left( \M \setminus B_{r(x)}(x)\right)e^{- \frac{\beta(x, r(x))^2}{2\eps}} \|f\|_\infty}{ \left(2 \pi \eps \right)^{m/2} } \\
        R_2(\eps) &\vc= C\max(1, K)\eps^{3/2} \\
        R_3(\eps) &\vc= \frac{\mathrm{area}(\S^{m-1})\Delta_g f(x)}{2m} \left(\frac{\int_{r(x)}^\infty e^{- \frac{y^2}{2\eps}}y^{m+1} dy}{(2\pi \eps)^{m/2}}\right).
    \end{align*}
    Here $B_{r(x)}(x)\vc = \{y \in \M \colon d_g(y, x) < r(x) \}$ and $\mathrm{area}(\S^{m-1})$ denotes the surface area of $\S^{m-1}$ when embedded as a unit sphere in $\R^m$.
\end{proposition}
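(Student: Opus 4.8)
The plan is to begin from the identity
\[ f(x)A_\eps(x) - G_\eps f(x) = \frac{1}{(2\pi\eps)^{m/2}}\int_\M k_\eps(x,y)\bigl(f(x)-f(y)\bigr)\,dV_g(y), \]
which is immediate from the definitions of $A_\eps$ and $G_\eps$, and to split the integral at the geodesic ball $B_{r(x)}(x)$. On the complement $\M\setminus B_{r(x)}(x)$ every $y$ satisfies $d(x,y)\ge\beta(x,r(x))$ by \eqref{def:beta}, and $\beta(x,r(x))>0$ by \cref{lemma:betapos}; bounding $k_\eps(x,y)=e^{-d^2(x,y)/(2\eps)}\le e^{-\beta(x,r(x))^2/(2\eps)}$ and $|f(x)-f(y)|\le 2\|f\|_\infty$ then reproduces exactly $R_1(\eps)$. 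The remaining paragraphs concern the inside integral.

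Since $r(x)\le c\,\mathrm{inj}_{(\M,g)}(x)$, I would pass to geodesic normal coordinates $y=\exp_x(u)$ on $B_{r(x)}(x)$, where $d_g(x,y)=|u|$ and $dV_g=J(u)\,du$ with $J(u)=1+O(|u|^2)$. The first move is to replace the kernel $e^{-d^2(x,y)/(2\eps)}$ by the geodesic kernel $e^{-|u|^2/(2\eps)}$. By \cref{lemma:upper} we have $d\le d_g$, so $d_g^2-d^2\ge 0$, and by Assumption~\ref{assump:thirdorder} together with the hypothesis $\eps_0^2<\tfrac{1}{2K}$ we get $0\le d_g^2-d^2<Kd_g^4<\tfrac12|u|^2$ on the ball, whence $e^{-d^2(x,y)/(2\eps)}\le e^{-|u|^2/(4\eps)}$. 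Writing $a=(d_g^2-d^2)/(2\eps)\ge0$ and using $e^a-1\le ae^a$, this gives
\[ \bigl|e^{-d^2(x,y)/(2\eps)}-e^{-|u|^2/(2\eps)}\bigr|\le\frac{K|u|^4}{2\eps}\,e^{-|u|^2/(4\eps)}. \]
Multiplying by $|f(x)-f(y)|\le C|u|$ and $J(u)\le C$ and bounding the integral over $B_{r(x)}(x)$ by the integral over all of $\R^m$, the $|u|^5$ Gaussian moment (of order $\eps^{5/2}$) yields a distortion error bounded by $CK\eps^{3/2}$. Treating the whole kernel difference at once via $e^a-1\le ae^a$, rather than expanding $e^a$ term by term, is what keeps the $K$-dependence linear.

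With the kernel reduced to $e^{-|u|^2/(2\eps)}$ I would run the expansion behind \cref{prop:lapapprox} verbatim. Taylor expanding $f$ to second order with $C^3$ remainder and expanding $J$, the linear term is odd and integrates to zero; the second-order remainder, bounded by $C|u|^3$, contributes $O(\eps^{3/2})$ after being dominated by its full-space $|u|^3$ Gaussian moment; and the $J$-correction times the quadratic term contributes $O(\eps^2)\le C\eps^{3/2}$ for $\eps<\delta$. The quadratic term $-\tfrac12\mathrm{Hess}f(u,u)$, integrated over all of $\R^m$ against $e^{-|u|^2/(2\eps)}$, produces $\tfrac{\eps}{2}\Delta_g f(x)$ via the second moment $\eps\delta_{ij}$ and the identity $\Delta_g=-\Tr\mathrm{Hess}$. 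The single term kept explicit is the correction for having integrated over $B_{r(x)}(x)$ instead of $\R^m$: evaluating $\int_{|u|\ge r(x)}e^{-|u|^2/(2\eps)}u^iu^j\,du$ in polar coordinates produces precisely $R_3(\eps)$, while the tails of the odd and remainder terms vanish by symmetry or are dominated over $\R^m$. Since all error integrals are bounded by full-space Gaussian moments, their constants depend only on $(\M,g)$, $x$, $f$, $\delta$ and $c$, never on $r(x)$; collecting the distortion bound $CK\eps^{3/2}$ with the expansion bound $C\eps^{3/2}$ gives $R_2(\eps)=C\max(1,K)\eps^{3/2}$, and the triangle inequality assembles $R_1+R_2+R_3$.

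I expect the main obstacle to be the bookkeeping certifying the exact shape of $R_2$: that its constant $C$ is independent of both $K$ and $\eps_0$, and that the $K$-dependence is captured by a single factor $\max(1,K)$. The delicate point is that the hypothesis $\eps_0^2<\tfrac1{2K}$ may be used only to secure the Gaussian decay $e^{-d^2/(2\eps)}\le e^{-|u|^2/(4\eps)}$ — which is what makes every moment integral converge with the correct power of $\eps$ — without letting any $\eps_0$- or $K$-dependence leak into $C$; bounding each error term over $B_{r(x)}(x)$ by its integral over $\R^m$ is precisely the device that prevents such leakage. Everything else — the normal-coordinate expansion, the Gaussian moment identities, and the polar-coordinate evaluation of the leading tail — is routine.
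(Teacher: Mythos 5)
Your proposal is correct and follows essentially the same route as the paper's proof in Appendix B: the same three-way decomposition (truncation to $B_{r(x)}(x)$ giving $R_1$, kernel comparison plus normal-coordinate Taylor expansion giving $R_2$, and the Gaussian tail over $\{\|v\|\geq r(x)\}$ evaluated in polar coordinates giving $R_3$), with the same key inequality $e^a - 1 \leq a e^a$ combined with $\eps_0^2 < \tfrac{1}{2K}$ to secure the $e^{-\|v\|^2/(4\eps)}$ decay. Your observation that dominating every error integral by its full-space Gaussian moment is what keeps $C$ independent of $K$ and $\eps_0$ is exactly the mechanism the paper relies on as well.
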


\begin{remark}
\label{rmk:eps_0-K assump}
    If Assumption \ref{assump:thirdorder} holds for some $K, \eps_0 > 0$, then without loss of generality, we can take $\eps_0^2 < \frac{1}{2K}$. We assume $\eps_0^2 < \frac{1}{2K}$ for \cref{prop:nonasymp} so that we can use the following inequality in our proof:
\[ d_g^2(x, y) - d^2(x, y) < Kd^4_g(x, y) < K\eps_0^2 d_g^2(x, y) < \frac{d^2_g(x, y)}{2} \]
for all $x, y \in \M$ such that $d_g(x, y) < \eps_0$.
\end{remark}

\begin{remark}
Since we have fixed the embedding $\iota \colon \M \to (X, d)$ (i.e., $K$ and $\eps_0$ are fixed), taking $\eps \to 0$ in Proposition \ref{prop:nonasymp} gives us Proposition \ref{prop:lapapprox}, so it suffices to prove Proposition \ref{prop:nonasymp}. However, if we could vary $\eps_0$  and enforce $\eps_0 \leq \beta \sqrt{\eps}$ for some fixed $\beta > 0$ (while also keeping $(\M, g)$, $c$, $x$ and $f$ fixed), as $\eps \to 0$ we have
     \[ \eps^{m/2}R_3(\eps) \asymp \int_{\eps_0}^\infty e^{-\frac{y^2}{2\eps}}y^{m+1} dy \gtrsim \int_{\beta\sqrt{\eps}}^\infty e^{-\frac{y^2}{2\eps}}y^{m+1} dy = \Theta\left( \eps^{\frac{m+2}{2}} \right), \]
     so $R_3(\eps) = \Omega(\eps)$ as $\eps \to 0$: this is not enough to prove (\ref{eq:lapapprox}) under the alternative assumption that $\eps_0 \leq \beta \sqrt{\eps}$. In terms of samples, if we set $\eps = 2n^{-\frac{1}{m+2 + \alpha}}$ for some $\alpha > 0$ (as in \cref{thm:glconv_unif}), then the condition $\eps_0 \leq \beta \sqrt{\eps}$ is equivalent to
     \[ n \leq \left(\frac{2\beta^2}{\eps_0^2}\right)^{m+2+\alpha}. \]
\end{remark}

The proof of Proposition \ref{prop:nonasymp} is almost the same as \cite[Proposition 4.4]{belkin2008towards}. 
The error $R_1(\eps)$ arises from estimating 
\[f(x) A_\eps(x) - G_\eps f(x) = \frac{1}{(2\pi \eps)^{m/2}} \int_\M k_\eps(x, y) (f(x) - f(y))dV_g(y)\]
with the local integral
\[ I_1(x) \vc = \frac{1}{(2\pi \eps)^{m/2}} \int_{B_{r(x)}(x)} k_\eps(x,y) (f(x) - f(y)) dV_g(y). \]
The error $R_2(\eps)$ arises from computing $I_1(x)$ in normal coordinates and approximating it with the following integral on $T_x\M$:
\[ I_2(x) \vc = -\frac{1}{2(2\pi \eps)^{m/2}} \int_{v \in T_x \M, \|v\| < r(x)} e^{-\frac{\|v\|^2}{2\eps}} \mathrm{Hess}_x f(v, v) dv. \]
Lastly, the error $R_3(\eps)$ arises from approximating $I_2(x)$ with
\[ -\frac{1}{2(2\pi \eps)^{m/2}} \int_{v \in \R^m} e^{-\frac{\|v\|^2}{2\eps}} \mathrm{Hess}_x f(v, v) dv = \frac{\eps}{2} \Delta_g f(x). \]
It is not surprising then that even though $R_2(\eps)$ controls the asymptotics for $\eps$, if $\eps_0 < c \ \mathrm{inj}_{(\M, g)}(x)$ (i.e., $r(x) = \eps_0$), decreasing $\eps_0$ increases $R_1(\eps)$ and $R_3(\eps)$.
For completeness, we provide a full proof of \cref{prop:nonasymp} in Appendix \ref{app:glconv}.
As in \cite{belkin2008towards}, we can extend Theorem \ref{thm:glconv_unif} to non-uniform probability distributions.

\begin{theorem}
\label{thm:glconv_nonunif}
    Suppose that Assumptions \ref{assump:firstorder} and \ref{assump:thirdorder} are satisfied, and that $x_1, ..., x_n$ are drawn i.i.d. from a probability distribution $\mu$ on $\M$ with a density $P \in C^3(\M) \cap L^\infty(\M)$ with respect to $dV_g$. Fix any $f \in C^3(\M) \cap L^\infty(\M)$ and $x \in \M.$ If $\eps_n = 2n^{-\frac{1}{m+2 + \alpha}}$ for some $\alpha > 0$, then 
    \[\frac{2}{\eps_n (2\pi \eps_n)^{m/2}n} L^{(\eps_n, n)} f(x) \to P(x) \Delta_g f(x) - 2g_x( \grad_g f(x), \grad_g P (x) ) \]
    almost surely as $n \to \infty$.
\end{theorem}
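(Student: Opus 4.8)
The plan is to mirror the two-part structure used for \cref{thm:glconv_unif}, changing only the consistency computation to account for the density $P$. The concentration half carries over verbatim: as noted in \cref{rmk:generalmetricanddist}, the Hoeffding plus Borel--Cantelli argument at the start of the section uses only the almost-sure bound $|f(x)X_j^{(\eps)} - Y_j^{(\eps)}| \leq 2\|f\|_\infty$ and independence, neither of which depends on the sampling distribution. Hence, with $X_j^{(\eps)} = k_\eps(x, x_j)$ and $Y_j^{(\eps)} = k_\eps(x, x_j)f(x_j)$ as before, it again suffices to show that
\[ \lim_{n \to \infty} \frac{2\,\E\left[ f(x) X_1^{(\eps_n)} - Y_1^{(\eps_n)} \right]}{\eps_n (2\pi \eps_n)^{m/2}} = P(x)\Delta_g f(x) - 2g_x(\grad_g f(x), \grad_g P(x)), \]
where now the expectation is taken with respect to $\mu$, since the concentration step forces the random quantity and the deterministic sequence of expectations to share the same almost-sure limit.

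Next I would rewrite the expectation in terms of the operator $G_\eps$. Since $x_1$ has density $P$ with respect to $dV_g$,
\[ \E\left[ X_1^{(\eps)} \right] = (2\pi\eps)^{m/2} G_\eps(P)(x), \qquad \E\left[ Y_1^{(\eps)} \right] = (2\pi\eps)^{m/2} G_\eps(fP)(x), \]
so that
\[ \frac{2\,\E\left[ f(x) X_1^{(\eps)} - Y_1^{(\eps)} \right]}{\eps (2\pi \eps)^{m/2}} = \frac{2}{\eps}\left[ f(x) G_\eps(P)(x) - G_\eps(fP)(x) \right]. \]
Both $P$ and $fP$ lie in $C^3(\M) \cap L^\infty(\M)$ (the product of two $C^3$, bounded functions is again $C^3$ and bounded), so I can apply \cref{prop:lapapprox} to each. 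This gives $G_\eps(P)(x) = P(x)A_\eps(x) - \tfrac{\eps}{2}\Delta_g P(x) + O(\eps^{3/2})$ and $G_\eps(fP)(x) = f(x)P(x)A_\eps(x) - \tfrac{\eps}{2}\Delta_g(fP)(x) + O(\eps^{3/2})$.

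The key simplification is that the uncomputed quantity $A_\eps(x)$ drops out: since $f(x)$ is a fixed scalar, subtracting the two expansions yields
\[ f(x) G_\eps(P)(x) - G_\eps(fP)(x) = \frac{\eps}{2}\Big( \Delta_g(fP)(x) - f(x)\Delta_g P(x) \Big) + O(\eps^{3/2}). \]
Applying the Leibniz rule $\Delta_g(fP) = f\Delta_g P + P\Delta_g f - 2g(\grad_g f, \grad_g P)$, valid for the sign convention $\Delta_g = -\Tr(\mathrm{Hess})$ fixed in (\ref{eq:deflap}), the $f(x)\Delta_g P(x)$ terms cancel and I am left with
\[ \frac{2}{\eps}\left[ f(x)G_\eps(P)(x) - G_\eps(fP)(x) \right] = P(x)\Delta_g f(x) - 2g_x(\grad_g f(x), \grad_g P(x)) + O(\eps^{1/2}). \]
Taking $\eps = \eps_n \to 0$ establishes the required consistency limit, and combining with the concentration half finishes the proof.

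I do not expect a genuine obstacle here: essentially all the analytic work is front-loaded into \cref{prop:lapapprox} (equivalently \cref{prop:nonasymp}), and the only new ingredients are the bookkeeping decomposition into $G_\eps(P)$ and $G_\eps(fP)$ and the product rule for $\Delta_g$. The one place to be careful is the sign in that product rule --- getting it right is exactly what produces the $-2g_x(\grad_g f(x), \grad_g P(x))$ cross term --- together with confirming that the $A_\eps(x)$ contributions, whose fine asymptotics are never computed, cancel identically rather than merely to leading order.
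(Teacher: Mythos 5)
Your proof is correct and is essentially the paper's argument: the paper likewise reduces to the expectation limit via \cref{rmk:generalmetricanddist} and then invokes \cref{prop:lapapprox}, the only cosmetic difference being that it applies the proposition once to the single function $\widetilde{f}(y) \vc= P(y)(f(y)-f(x))$ (which vanishes at $x$, so the $A_\eps(x)$ term drops out automatically), whereas you apply it separately to $P$ and $fP$ and observe the exact cancellation of the $A_\eps(x)$ terms. By linearity of $G_\eps$ and of $\Delta_g$, these two bookkeeping schemes are identical, and your Leibniz-rule computation (with the sign convention $\Delta_g = -\Tr(\mathrm{Hess})$) matches the paper's.
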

\begin{proof}
    As before, define $X_j^{(\eps)} \vc = k_\eps(x, x_j)$, $Y_j^{(\eps)} \vc = k_\eps(x, x_j)f(x_j)$, but now suppose $x_1, ..., x_n$ are drawn i.i.d. from $\mu$.
    By \cref{rmk:generalmetricanddist}, it suffices to show that
    \begin{equation}
    \label{eq:fp_limit}
    \lim_{n \to \infty} \frac{2\E\left[ f(x) X_1^{(\eps_n)}-Y_1^{(\eps_n)} \right]}{\eps_n (2\pi \eps_n)^{m/2}} = P(x) \Delta_g f(x) - 2g_x( \grad_g f(x), \grad_g P(x) ). 
    \end{equation}
    As in \cite[Section 5]{belkin2008towards}, (\ref{eq:fp_limit}) can be proven by applying \cref{prop:lapapprox} to $\widetilde{f}(y) \vc = P(y) \left( f(y) - f(x) \right)$.
\end{proof}

\subsection{Sufficient conditions in terms of differentiability of $d^2$}
\label{subsec:suffdiff}
Now suppose $\M$ is only a smooth oriented manifold (not necessarily equipped with a Riemannian metric) and that $\iota \colon \M \to (X, d)$ is a topological embedding.
A priori, we do not know whether there exists a Riemannian metric $g$ on $\M$ such that Assumptions \ref{assump:firstorder} and \ref{assump:thirdorder} hold for this particular embedding $\iota$.
In this section, we describe some conditions under which
there exists a unique $C^2$ Riemannian metric on $\M$ such that Assumptions
\ref{assump:firstorder} and \ref{assump:thirdorder} are satisfied.

We first recall a well-known result due to Palais \cite{palais1957differentiability}: a $C^2$ Riemannian metric $g$ on $\M$ is uniquely determined by its induced geodesic distance $d_g$. One way to see this (as noted in \cite[p. 161]{villani2009optimaloldandnew}) is as follows: for any smooth path $\gamma \colon (-\delta, \delta) \to \M,$ given $d_g$, we can recover the Riemannian metric $g$ via
\[ \sqrt{g_{\gamma(0)}(\gamma'(0), \gamma'(0))} = \lim_{t \to 0} \frac{d_g(\gamma(0), \gamma(t))}{|t|}, \]
or equivalently,
\begin{equation}
\label{eq:palais}
g_{\gamma(0)}(\gamma'(0), \gamma'(0) ) = \lim_{t \to 0} \frac{d_g^2(\gamma(0), \gamma(t))}{t^2}.
\end{equation}
More succinctly, if $g$ is $C^3$ — which implies that $\exp_x$ is $C^2$ around 0, so $d_g^2(\cdot, x)$ is $C^2$ around $x \in \M$ — we have
\begin{equation}
\label{eq:palais-hess}
g_x = \frac{1}{2}\mathrm{Hess}_x\left(d_g^2(\cdot, x) \right) 
\end{equation}
for all $x \in \M$. 

Suppose now that we are only given a smooth, oriented manifold $\M$ and a topological embedding $\iota \colon \M \to (X, d)$ (but not a Riemannian metric on $\M$). We want to find a Riemannian metric $g$ on $\M$ for which the induced geodesic distance $d_g$ approximates $d$ on $\M$. 
The discussion above suggests that if $d^2$ is $C^2$ around the diagonal $D_{\M}$ of $\M \times \M$, we should consider
\begin{equation}
        \label{eq:hessmetric}
    g_x = \frac{1}{2}\mathrm{Hess}_x\left(d^2(\cdot, x) \right)
\end{equation}
for $x \in \M$. In general, (\ref{eq:hessmetric}) only defines a symmetric,
positive semidefinite bilinear form on $T_x \M$, but it becomes a Riemannian metric when restricted to
\begin{equation}
    \label{eq:def_Mtilde}
\widetilde{\M} \vc = \{ x \in \M \colon g_x \text{ is positive definite} \}, 
\end{equation}
which is an open subset of $\M$, and hence a manifold itself.
\begin{lemma}
\label{lemma:psd}
Suppose $d^2$ is $C^{k+2}$ on a neighborhood of the diagonal $D_\M$ of $\M \times \M$ for some $k \in \Z_{\geq 0}$.
    Then (\ref{eq:hessmetric}) defines a symmetric, positive semidefinite bilinear form on $T_x\M$ for all $x \in \M$, which is $C^k$ on $\M$ in the sense that for any chart $\varphi = \left(x^1, \cdots, x^m \right) \colon U \to V \subset \R^m$,
    \[ p \mapsto g_p \left( \frac{\partial}{\partial x_i} \bigg|_p, \frac{\partial}{\partial x_j} \bigg|_p\right) \]
    is $C^k$ on $U$ for all $i, j \in [m]$.
    In particular, $\widetilde{\M}$ is a manifold and
    $g$ is a $C^k$ Riemannian metric on $\widetilde{\M}$.
\end{lemma}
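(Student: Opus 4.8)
The plan is to establish the three assertions — symmetry, positive semidefiniteness, and $C^k$ regularity of $g$ — separately, all of which hinge on the single geometric fact that, for each fixed $x$, the point $x$ is not merely a critical point but a \emph{global minimizer} of $y \mapsto d^2(y, x)$. Symmetry is immediate: writing $h \vc= d^2(\cdot, x)$, since $x$ is a critical point of $h$, the excerpt's formula gives $\mathrm{Hess}_x h(X, Y) = X(Yh)(x)$, and in any chart this equals $\partial_i \partial_j h(x)$, which is symmetric in $i, j$ by equality of mixed partials (valid because $h$ is $C^2$, as $d^2$ is $C^{k+2}$ with $k \geq 0$).

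For positive semidefiniteness, fix $v \in T_x\M$ and pick any smooth curve $\gamma$ with $\gamma(0) = x$ and $\gamma'(0) = v$; for small $t$ the pair $(\gamma(t), x)$ stays in the neighborhood of $D_\M$ on which $d^2$ is smooth, so $t \mapsto h(\gamma(t)) = d^2(\gamma(t), x)$ is $C^2$ near $0$. Differentiating twice in coordinates, the term involving the acceleration $\ddot{\gamma}$ drops out because $dh_x = 0$, leaving $\tfrac{d^2}{dt^2}\big|_{t=0} h(\gamma(t)) = \mathrm{Hess}_x h(v, v)$. But this one-variable function is nonnegative and vanishes at $t = 0$, so $t = 0$ is a global minimum and its second derivative there is $\geq 0$. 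Hence $g_x(v, v) = \tfrac12 \mathrm{Hess}_x h(v, v) \geq 0$.

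For the regularity claim, I would fix a chart $\varphi = (x^1, \ldots, x^m) \colon U \to V \subset \R^m$ and introduce the two-variable function $F(a, b) \vc= d^2\!\left(\varphi^{-1}(a), \varphi^{-1}(b)\right)$, which is $C^{k+2}$ on a neighborhood of the diagonal of $V \times V$. For $p \in U$ with $b = \varphi(p)$, the map $y \mapsto d^2(y, p)$ reads $a \mapsto F(a, b)$ in coordinates, and since $p$ is its critical point,
\[
g_p\!\left(\tfrac{\partial}{\partial x_i}, \tfrac{\partial}{\partial x_j}\right) = \tfrac{1}{2}\, \partial_{a^i}\partial_{a^j} F(a, b)\big|_{a = b}.
\]
The mixed partials $\partial_{a^i}\partial_{a^j} F$ are $C^k$ on $V \times V$, precomposing with the smooth diagonal map $b \mapsto (b, b)$ preserves $C^k$ regularity, and composing with $\varphi$ then shows $p \mapsto g_p\!\left(\partial/\partial x_i, \partial/\partial x_j\right)$ is $C^k$ on $U$.

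Finally, since the coefficients $g_{ij}$ are continuous, the symmetric matrix $[g_{ij}(p)]$ depends continuously on $p$; as the cone of positive definite symmetric matrices is open, $\widetilde{\M} = \{x \in \M \colon g_x \text{ is positive definite}\}$ is open in $\M$, hence an open submanifold, and on it $g$ is symmetric, positive definite, and $C^k$ — that is, a $C^k$ Riemannian metric. The main obstacle is not conceptual but bookkeeping in the regularity step: one must track how the Hessian varies with the base point $p$ rather than evaluating it at a single fixed critical point, which is precisely what the two-variable function $F$ and the restriction to its diagonal are designed to handle; the positive semidefiniteness, though the conceptual heart of the statement, follows cleanly from the global-minimum observation.
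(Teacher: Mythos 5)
Your proof is correct and follows essentially the same route as the paper's: both work in a chart, use that $x$ is a global minimizer of $d^2(\cdot,x)$ to get positive semidefiniteness, obtain $C^k$ regularity by viewing $g_{ij}(p)$ as second partials of the two-variable coordinate representation of $d^2$ restricted to the diagonal, and deduce openness of $\widetilde{\M}$ from continuity of the coefficients. The only cosmetic difference is that the paper certifies openness via the determinant criterion (positive definite iff $\det > 0$ for a PSD matrix) while you invoke the openness of the positive-definite cone — these are interchangeable.
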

\begin{proof}
    % First 
    That (\ref{eq:hessmetric}) defines a symmetric, bilinear form on $T_x\M$ for all $x \in \M$ is a standard property of the Hessian associated to a Levi-Civita connection. 
    % We can also view (\ref{eq:hessmetric})
    Another way to see this is in coordinates: suppose $\varphi = \left(x^1, \cdots, x^m \right) \colon U \to V \subset \R^m$ is a chart on $\M$. 
    % By making $U$ smaller if necessary, we can assume that $d^2$ is $C^{k+2}$ on $U \times U$. 
    For convenience, define $f_p \vc = d^2(\cdot, p)$ for each $p \in U$. Recall from our discussion in \cref{subsec:notation} that since $f_p = d^2(\cdot, p)$ attains a minimum at $p$,
    \[ g_p\left( \frac{\partial}{\partial x_i} \bigg|_p, \frac{\partial}{\partial x_j} \bigg|_p \right) = \frac{1}{2} \mathrm{Hess}_p f_p \left( \frac{\partial}{\partial x_i} \bigg|_p, \frac{\partial}{\partial x_j} \bigg|_p \right) = \frac{1}{2} \frac{\partial^2 f_p}{\partial x_i\partial x_j}(p) \]
    for all $p \in U$. In other words, in coordinates, $\mathrm{Hess}_p f_p$ agrees with $\mathrm{Hess}^{\left(\R^m\right)}_{\varphi(p)} (f_p \circ \varphi^{-1} )$ for all $p \in U$, where, for clarity, we use $\mathrm{Hess}$ for the Hessian as defined on $\M$ (with respect to the Levi-Civita connection of some smooth Riemannian metric) and $\mathrm{Hess}^{\left(\R^m\right)}$ for the typical definition of the Hessian on $\R^m$. This is equivalent to taking the top left (or bottom right, because of the symmetry of $d^2$) $m \times m$ block of $\mathrm{Hess}_{(\varphi(p), \varphi(p))}^{\left(\R^{2m}\right)} \left(\widetilde{d}^2\right)$, where $\widetilde{d}(y, z) \vc = d(\varphi^{-1}(y), \varphi^{-1}(z))$ for $y, z \in V$ (see also \cref{rmk:compare-w-arias}).
    
    The remaining statements can also be seen from representing (\ref{eq:hessmetric}) in coordinates: for any $p \in U$, (\ref{eq:hessmetric}) is $C^k$ around $p$ because $d^2$ is $C^{k+2}$ around $(p, p)$. Since $d^2(\cdot, p)$ attains a minimum at $p$, (\ref{eq:hessmetric}) is positive semidefinite for all $p \in \M$. Therefore, for $p \in U$, $g_p$ is positive definite if and only if
    \[ \det \left( \left[ \frac{\partial^2 f_p}{\partial x_i \partial x_j}(p) \right]_{i, j} \right) > 0. \]
    Since $\det$ is continuous, it follows that $\{ p \in U \colon g_p \text{ is positive definite} \}$ is open, so $\widetilde{\M}$ is an open subset of $\M$ as well. 
    Thus $\widetilde{\M}$ is a manifold, and $g$ is a $C^k$ Riemannian metric on $\widetilde{\M}$.
\end{proof}

The following property also follows from the definition of (\ref{eq:hessmetric}). We state it explicitly since it is important for the results in this section.

\begin{lemma}
\label{lemma:ot2}
    If $d^2$ is $C^2$ on a neighborhood of $D_\M$ and $g$ is defined as in (\ref{eq:hessmetric}), then
    \begin{equation}
    \label{eq:ot2}
    \lim_{t \to 0} \frac{d^2(\gamma(t), \gamma(0))}{t^2} = g_{\gamma(0)}\left( \gamma'(0), \gamma'(0) \right) 
    \end{equation}
    for all $C^1$ paths $\gamma \colon (-\delta, \delta) \to \M$.
\end{lemma}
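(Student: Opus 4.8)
The plan is to fix the basepoint, reduce everything to a single chart, and combine a second-order Taylor expansion of the (fixed-basepoint) squared distance with the first-order expansion of the $C^1$ path $\gamma$. Set $p \vc= \gamma(0)$ and consider the function $h \vc= d^2(\cdot, p)$. Since $d^2$ is $C^2$ on a neighborhood of the diagonal and $(p,p) \in D_\M$, the map $h$ is $C^2$ on a neighborhood of $p$; moreover $h(y) \geq 0 = h(p)$, so $p$ is a critical point of $h$ with $h(p) = 0$. Writing $F(t) \vc= d^2(\gamma(t), \gamma(0)) = h(\gamma(t))$, the goal becomes to show $F(t)/t^2 \to g_p(\gamma'(0), \gamma'(0))$.

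Next I would pass to coordinates. Choose a chart $\varphi = (x^1, \ldots, x^m)$ around $p$, set $\tilde h \vc= h \circ \varphi^{-1}$ and $c(t) \vc= \varphi(\gamma(t))$, so that $F(t) = \tilde h(c(t))$, $c(0) = \varphi(p)$, and $c$ is $C^1$ with $c'(0)$ the coordinate vector of $\gamma'(0)$. Since $\tilde h$ is $C^2$ with $\tilde h(\varphi(p)) = 0$ and $\nabla \tilde h(\varphi(p)) = 0$, the second-order Taylor expansion with Peano remainder gives
\[ \tilde h(c(t)) = \tfrac{1}{2}\,(c(t)-\varphi(p))^{\top} H\,(c(t)-\varphi(p)) + o\!\left(|c(t)-\varphi(p)|^2\right), \]
where $H \vc= \mathrm{Hess}^{(\R^m)}_{\varphi(p)} \tilde h$. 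The differentiability of $\gamma$ at $0$ yields $c(t) - \varphi(p) = c'(0)\,t + o(t)$, whence $|c(t)-\varphi(p)| = O(t)$ and the Peano remainder is $o(t^2)$. Substituting and expanding the quadratic form, every cross term of the shape $c'(0)\,t \cdot o(t)$ and every term $o(t)\cdot o(t)$ is $o(t^2)$, so
\[ F(t) = \tfrac{t^2}{2}\, c'(0)^{\top} H\, c'(0) + o(t^2), \]
and dividing by $t^2$ gives $\lim_{t\to 0} F(t)/t^2 = \tfrac{1}{2}\,c'(0)^{\top} H\, c'(0)$.

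Finally I would identify the right-hand side with $g_p(\gamma'(0), \gamma'(0))$. As recorded in the proof of \cref{lemma:psd}, because $p$ is a critical point of $h = d^2(\cdot,p)$, its Hessian in coordinates coincides with the Euclidean Hessian of $\tilde h$ at $\varphi(p)$, i.e. $\mathrm{Hess}_p h\big(\tfrac{\partial}{\partial x^i},\tfrac{\partial}{\partial x^j}\big) = H_{ij}$. Hence $g_p(\gamma'(0),\gamma'(0)) = \tfrac{1}{2}\mathrm{Hess}_p h(\gamma'(0),\gamma'(0)) = \tfrac{1}{2}\sum_{i,j} c'(0)_i\, c'(0)_j\, H_{ij} = \tfrac{1}{2}\, c'(0)^{\top} H\, c'(0)$, which is exactly the computed limit. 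Note that the limit is chart-independent a priori, since the left-hand side of (\ref{eq:ot2}) is; the chart serves only as a computational device.

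I expect the only genuine subtlety to be the interplay between the $C^2$ regularity of $h$ and the mere $C^1$ regularity of $\gamma$: one cannot simply differentiate $F$ twice, so the argument must route through the Peano form of Taylor's theorem and carefully verify that all remainder and cross terms collapse to $o(t^2)$. Everything else is coordinate bookkeeping together with the Hessian-at-a-critical-point identity already used for \cref{lemma:psd}.
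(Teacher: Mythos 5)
Your proposal is correct and follows essentially the same route as the paper's proof: pass to a chart, apply the second-order Taylor expansion (with Peano remainder) of the fixed-basepoint squared distance, use the $C^1$ regularity of $\gamma$ to absorb all error terms into $o(t^2)$, and identify the coordinate Hessian with $g_{\gamma(0)}$ via the critical-point identity. The only cosmetic difference is that you make explicit the vanishing of the linear term and the chart-independence, which the paper leaves implicit.
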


\begin{proof}
    Fix a chart $\varphi \colon U \to V \subset \R^m$. Without loss of generality, assume that $d^2$ is $C^2$ on $U \times U$. Define $\widetilde{d}(y, z) \vc = d(\varphi^{-1}(y), \varphi^{-1}(z))$ for $y, z \in V$. Fix any $y \in V$, so that by Taylor's theorem
    \[ \widetilde{d}^2(z, y) = (z-y)^TA(z-y) + R(z) \|z-y\|^2  \]
    for $A = \frac{1}{2} \mathrm{Hess}_{y}^{(\R^m)}\widetilde{d}^2(\cdot, y)$ and some continuous $R \colon V \to \R$ such that $R(y) = 0$.
    Therefore, for any
    $C^1$ path $\gamma \colon (-\delta, \delta) \to V$ such that $\gamma(0) = y$,
    \[ \lim_{t \to 0} \frac{\widetilde{d}^2(\gamma(t), \gamma(0))}{t^2} = \lim_{t \to 0} \frac{(\gamma(t) - \gamma(0))^T A(\gamma(t) - \gamma(0)) + R(\gamma(t)) \|\gamma(t) - \gamma(0)\|^2}{t^2} = \gamma'(0)^T A \gamma'(0), \]
    which is (\ref{eq:ot2}) in coordinates.
\end{proof}

\begin{remark}
\label{rmk:compare-w-arias}
    The theoretical analysis of both Isomap and Laplacian-based methods for a compact Riemannian submanifold $\M\subset \R^N$ relies on being able to approximate the geodesic distance on $\M$ from the Euclidean distance, at least locally \cite{bernstein2000graph, belkin2005towards, belkin2008towards}. Therefore, it should not be surprising that some of our assumptions overlap with Arias-Castro and Qiao's assumptions when studying Isomap in \cite{arias2025embedding}. Under their model, $\Theta \subset \R^m$ is open and connected, $X \vc= \{ f_\theta \colon \theta \in \Theta\}$ is a family of probability densities with respect to a fixed measure $\mu$ on $\R^N$, and $d$ is a metric on $X$. They assume that their model is identifiable; in our notation, taking $\M = \Theta$, this amounts to requiring
    \begin{align*}
        \iota \colon \M &\to X \\
        \theta &\mapsto f_\theta
    \end{align*}
    to be injective. They ask for the topology generated by $d$ to be the same as the Euclidean topology on $\Theta$; we require $\iota$ to be a topological embedding. Furthermore, they note that if $d^2$ is $C^2$ on $\Theta \times \Theta$, we can define $A(x) \in \R^{m \times m}$ as the top left $m \times m$ block of $\frac{1}{2} \mathrm{Hess}_{(x, x)}(d^2) \in \R^{2m \times 2m}$ for each $x \in \Theta$; if $A(x)$ is non-singular for all $x \in \Theta$, they then consider the following continuous Riemannian metric on $\Theta$:
    \begin{equation}
    \label{eq:hessmetric-euc}
    \widetilde{g}_x(v, v) \vc = v^T A(x) v .
    \end{equation}
    This is a special case of (\ref{eq:hessmetric}). 
    % As discussed in the proof of \cref{lemma:psd}, 
    On the other hand, even in the general case where the manifold $\M$ is not necessarily an open subset of $\R^m$, locally (i.e., by using a coordinate chart) (\ref{eq:hessmetric}) can be viewed as taking the top left $m \times m$ block of $\frac{1}{2} \mathrm{Hess}_{(x, x)}(d^2)$.
\end{remark}

\begin{remark}
\label{rmk:degen-example}
    For an example of when (\ref{eq:hessmetric}) degenerates, consider $\M = (-1, 1)$, $X = (-1, 1)$ with the Euclidean distance $d$, and $\iota \colon \M \to X$ defined by
    $\iota(x) \vc = x^3,$
    so
    \[ d^2(\iota(x), \iota(0)) = x^6,\]
    whose second derivative vanishes at $x = 0$.
\end{remark}
Nonetheless, we can show that $\M \setminus \widetilde{\M}$, 
the subset of $\M$ where (\ref{eq:hessmetric}) degenerates,
cannot contain any nonempty open subsets of $\M$.
\begin{proposition}
\label{prop:no-nonempty-open-subset}
    Suppose $d^2$ is $C^{2}$ on a neighborhood of the diagonal $D_\M$ of $\M \times \M$. Define $g$ as in (\ref{eq:hessmetric}) and $\widetilde{\M}$ as in (\ref{eq:def_Mtilde}). Then $\M \setminus\widetilde{\M}$ does not contain any nonempty open subset of $\M$.    
\end{proposition}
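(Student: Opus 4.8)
The plan is to argue by contradiction: if $\M \setminus \widetilde{\M}$ contained a nonempty open set $U$, I would produce two distinct points of $U$ joined by a $C^1$ curve of zero $d$-length, contradicting the fact that $d$ is a genuine metric on $\M$ (it is positive on distinct points because $\iota$ is a topological embedding, hence injective). Working in a single coordinate chart, I identify a neighborhood with an open set $U \subseteq \R^m$ on which $\widetilde{d}^2$ is $C^2$ and on which $A(x) := \frac{1}{2}\mathrm{Hess}^{(\R^m)}_x \widetilde{d}^2(\cdot, x)$, which represents $g_x$ in coordinates (as in the proof of \cref{lemma:psd}), is positive semidefinite but singular for every $x \in U$.

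The first step is to locate a smaller ball on which $A$ has constant rank. Since $\widetilde{d}^2 \in C^2$, the matrix-valued map $x \mapsto A(x)$ is continuous, so $x \mapsto \mathrm{rank}(A(x))$ is lower semicontinuous (a nonzero minor persists under small perturbations). All these ranks lie in $\{0, \dots, m-1\}$ because $A(x)$ is singular; letting $r_0$ be their maximum (attained, as the values are integers) and $x_0$ a point where it is attained, lower semicontinuity forces $\mathrm{rank}(A(x)) = r_0$ on a neighborhood $B$ of $x_0$. Hence $\ker A(x)$ has constant dimension $m - r_0 \ge 1$ on $B$, and, using for instance continuity of the Moore--Penrose pseudoinverse under locally constant rank, the orthogonal projection onto $\ker A(x)$ depends continuously on $x$. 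Shrinking $B$, this yields a continuous, nowhere-vanishing unit vector field $v$ with $v(x) \in \ker A(x)$ for all $x \in B$.

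Next I integrate this field. Since $v$ is continuous (though perhaps not Lipschitz), Peano's existence theorem provides a $C^1$ integral curve $\gamma \colon [0, t_*] \to B$ with $\gamma(0) = p$ and $\gamma' = v \circ \gamma$; because $\|v\| \equiv 1$ and $v$ is continuous, $\gamma$ leaves $p$ immediately, so $q := \gamma(t_*) \neq p$ for a suitable small $t_* > 0$. By construction $g_{\gamma(t)}(\gamma'(t), \gamma'(t)) = v(\gamma(t))^T A(\gamma(t)) v(\gamma(t)) = 0$ for every $t$, so \cref{lemma:ot2}, applied to the $C^1$ curve $s \mapsto \gamma(t + s)$ based at $\gamma(t)$, gives $\lim_{s\to 0} d(\gamma(t+s), \gamma(t))/|s| = 0$ for all $t$.

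Finally I convert the vanishing metric derivative into vanishing distance. Setting $F(t) := d(p, \gamma(t))$, the reverse triangle inequality gives $|F(t+s) - F(t)| \le d(\gamma(t+s), \gamma(t))$, so the previous step shows $F$ is differentiable everywhere with $F'(t) = 0$; since $F$ is continuous with $F(0) = 0$, the mean value theorem forces $F \equiv 0$, whence $d(p, q) = F(t_*) = 0$. This contradicts $p \neq q$ together with the positivity of $d$, completing the argument. I expect the main obstacle to be the construction in the second step: because $A$ is merely continuous and its rank may a priori vary across $U$, one cannot directly choose a smooth kernel field, and the reduction to constant rank via lower semicontinuity (together with the use of Peano rather than Picard--Lindel\"of for the non-Lipschitz field) is precisely what makes the zero-length curve available.
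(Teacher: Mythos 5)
Your proof is correct, and while it follows the same overall contradiction strategy as the paper --- produce a $C^1$ curve of everywhere-zero $g$-speed joining two distinct points, then deduce that the $d$-distance between its endpoints vanishes --- both of the key technical steps are implemented differently. For the null vector field, the paper proceeds by induction on the rank strata $M_k = \{x \in \M \colon \mathrm{rank}(g_x) \le k\}$, picks a point $p \in U \cap (M_k \setminus M_{k-1})$, and writes the field down explicitly from a block decomposition of the coordinate matrix of $g$ (taking $q \mapsto (-\Lambda(q)^T v, v)$ with $\Lambda(q)$ obtained from the invertible top-left block); your argument --- take a point of maximal rank in $U$, use lower semicontinuity of rank to get locally constant rank, then continuity of the orthogonal projection onto $\ker A(x)$ via the Moore--Penrose pseudoinverse --- reaches the same locally-constant-rank situation without any induction, and is arguably cleaner (the paper's choice of $p \in M_k \setminus M_{k-1}$ plays exactly the role of your maximal-rank point). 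For the second half, the paper invokes its integral estimate \cref{lemma:dist-upper-bound}, namely $d(\gamma(0),\gamma(\delta)) \le \int_0^\delta \sqrt{g_{\gamma(t)}(\gamma'(t),\gamma'(t))}\,dt$, whose proof requires a uniform Taylor/continuity argument; you instead use only the pointwise statement \cref{lemma:ot2}, the reverse triangle inequality, and the mean value theorem to show that $t \mapsto d(p,\gamma(t))$ has identically vanishing derivative. Your route is more elementary and self-contained for this proposition (the paper loses nothing by proving \cref{lemma:dist-upper-bound}, since it is needed elsewhere, e.g.\ for the absolute continuity claim in \cref{subsec:wass}). Two minor spots worth tightening: make the construction of the continuous unit kernel field explicit (fix $w$ with $P(x_0)w \neq 0$, where $P(x) = I - A(x)^{+}A(x)$, and normalize $P(x)w$ on a neighborhood of $x_0$), and note that at the endpoints $t = 0, t_*$ you only obtain one-sided derivatives --- which is all the mean value theorem requires.
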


We postpone the proof to \cref{app:no-nonempty-open-subset}. For notational convenience, from now on, we will assume that (\ref{eq:hessmetric}) is positive definite for all $x \in \M$, 
so that $g$ is a Riemannian metric on $\M$. 

\begin{assumption}[Nondegeneracy assumption]
\label{assump:nondegen}
(\ref{eq:hessmetric}) is positive definite for all $x \in \M$.
\end{assumption}
% \begin{remark}
% \end{remark}
If $g$ as in (\ref{eq:hessmetric}) is a $C^2$ Riemannian metric, we can apply \cref{lemma:ot2} to the geodesics on $(\M, g)$.
\begin{lemma}
\label{prop:pw-approx}
    Suppose that $d^2$ is $C^4$ on a neighborhood of $D_\M$ and that the nondegeneracy assumption holds, so that (\ref{eq:hessmetric}) defines a $C^2$ Riemannian metric $g$ on $\M$. Let $\exp$ denote the exponential map on $(\M, g)$. Then
        \begin{equation}
    \label{eq:ot2-along-geodesics}
        \lim_{t \to 0} \frac{d(x, \exp_x(tv))}{|t|} = \sqrt{g_x(v, v)}
    \end{equation}
    for all $(x, v) \in T\M$.
    Moreover, (\ref{eq:hessmetric}) is the unique $C^2$ Riemannian metric which satisfies (\ref{eq:ot2-along-geodesics}), i.e., if $\widetilde{g}$ is a $C^2$ Riemannian metric with exponential map $\widetilde{\exp}$ and
    \begin{equation}
    \label{eq:ot2-uniq}
    \lim_{t \to 0} \frac{d(x, \widetilde{\exp}_x(tv))}{|t|} = \sqrt{\widetilde{g}_x(v, v)}, 
    \end{equation}
    then $g = \widetilde{g}$.
\end{lemma}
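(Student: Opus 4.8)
The plan is to derive both statements directly from \cref{lemma:ot2}, which asserts that for the metric $g$ defined by (\ref{eq:hessmetric}), the limit $\lim_{t\to 0} d^2(\gamma(t),\gamma(0))/t^2$ equals $g_{\gamma(0)}(\gamma'(0),\gamma'(0))$ for \emph{every} $C^1$ path $\gamma$. The crucial observation I would exploit is that this limit depends on $\gamma$ only through its initial velocity $\gamma'(0)$ and always recovers the \emph{same} metric $g$, no matter which path is used.

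For (\ref{eq:ot2-along-geodesics}), I would fix $(x,v)\in T\M$ and take $\gamma(t)=\exp_x(tv)$. Since $g$ is a $C^2$ Riemannian metric by \cref{lemma:psd} and \cref{assump:nondegen}, its Christoffel symbols are $C^1$, so geodesics exist locally, are (at least) $C^1$, and satisfy $\gamma(0)=x$ and $\gamma'(0)=v$. Applying \cref{lemma:ot2} to this path gives $d^2(\exp_x(tv),x)/t^2\to g_x(v,v)$; taking square roots (legitimate since $d\ge 0$, $g_x(v,v)\ge 0$, and $\sqrt{\cdot}$ is continuous, using also the symmetry $d(x,y)=d(y,x)$) yields (\ref{eq:ot2-along-geodesics}).

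For the uniqueness claim, suppose $\widetilde g$ is a $C^2$ Riemannian metric with exponential map $\widetilde\exp$ satisfying (\ref{eq:ot2-uniq}). I would fix $(x,v)\in T\M$ and feed the $\widetilde g$-geodesic $\widetilde\gamma(t)=\widetilde\exp_x(tv)$ --- again a $C^1$ curve with $\widetilde\gamma(0)=x$ and $\widetilde\gamma'(0)=v$ --- into \cref{lemma:ot2}. This is the conceptual heart of the argument: \cref{lemma:ot2} holds for an \emph{arbitrary} $C^1$ path and its value is intrinsic to $g$, so evaluating it along a curve adapted to $\widetilde g$ rather than $g$ still produces $g_x(v,v)$. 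Taking square roots gives $\lim_{t\to 0} d(x,\widetilde\exp_x(tv))/|t|=\sqrt{g_x(v,v)}$, whereas hypothesis (\ref{eq:ot2-uniq}) forces this same limit to equal $\sqrt{\widetilde g_x(v,v)}$. Hence $g_x(v,v)=\widetilde g_x(v,v)$ for all $(x,v)\in T\M$, and by polarization (a symmetric bilinear form is determined by its associated quadratic form) $g_x=\widetilde g_x$ for every $x$, i.e. $g=\widetilde g$.

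The proof is short because \cref{lemma:ot2} carries the analytic weight; the only points I would take care to justify are (i) that $C^2$ regularity of the metric suffices for $t\mapsto\exp_x(tv)$ and $t\mapsto\widetilde\exp_x(tv)$ to be genuine $C^1$ curves with the prescribed initial velocity, so that \cref{lemma:ot2} applies, and (ii) the key conceptual point that the limit in \cref{lemma:ot2} is blind to which connection's geodesics are substituted. I expect the latter --- recognizing that the same limit recovers $g$ regardless of the path --- to be the only real idea in the argument rather than a genuine obstacle.
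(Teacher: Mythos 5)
Your proposal is correct and follows essentially the same route as the paper: both parts are obtained by feeding the appropriate geodesic ($\exp_x(tv)$ for the limit, $\widetilde{\exp}_x(tv)$ for uniqueness) into \cref{lemma:ot2}, using exactly the observation you call the "conceptual heart" — that \cref{lemma:ot2} applies to any $C^1$ path and always returns $g$. Your added details (square roots, $C^1$ regularity of geodesics for a $C^2$ metric, polarization) are minor elaborations of steps the paper leaves implicit.
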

\begin{proof}
    (\ref{eq:ot2-along-geodesics}) follows from applying \cref{lemma:ot2} to the geodesic $\gamma(t) \vc = \exp_x(tv)$:
    \[  \lim_{t \to 0} \frac{d^2(x, \exp_x(tv))}{t^2} = g_x(v, v). \]
    For uniqueness, if a $C^2$ Riemannian metric $\widetilde{g}$ on $\M$ satisfies (\ref{eq:ot2-uniq}), then for any $(x, v) \in T\M$,
        \[ \widetilde{g}_x(v, v) = \lim_{t\to 0} \frac{d^2(x, \widetilde{\exp}_x(tv))}{t^2} = g_x(v, v) \]
    by \cref{lemma:ot2},
    since $\widetilde{\gamma}(t) \vc = \widetilde{\exp}_x(tv)$
    is also a $C^1$ path.
\end{proof}

(\ref{eq:ot2-along-geodesics})
is a pointwise version of Assumption \ref{assump:firstorder};
for Assumption \ref{assump:firstorder} to hold, we would like to bound $|d(x, \exp_x(tv)) - |t||$ \textit{uniformly} over 
all $(x, v) \in T\M$ such that $g_x(v, v) = 1$ and $\gamma(s) \vc= \exp_x(sv)$ is length-minimizing from $\gamma(0) = x$ to $\gamma(t) = \exp_x(tv)$. It turns out that with a few additional regularity assumptions, Assumption \ref{assump:firstorder} and Assumption \ref{assump:thirdorder} both hold.
\begin{theorem}
\label{thm:c4}
    Suppose $d^2$ is $C^5$ on a neighborhood of $D_\M$ and that the nondegeneracy assumption holds. Equip $\M$ with the Riemannian metric $g$ as defined in (\ref{eq:hessmetric}). For each $(x, v) \in T\M$ such that $g_x(v, v) = 1$, define
    \begin{equation}
    \label{eq:defh}
    h_{(x, v)}(t) \vc = d^2(x, \exp_x(tv)) 
    \end{equation}
    for all $t$ where $\exp_x(tv)$ is well-defined.
    Suppose moreover that there exists $\kappa, \rho > 0$
    satisfying the following:
    \begin{enumerate}
    \item for any $x, y \in \M$ such that $d_g(x, y) < \rho$, there exists a length-minimizing geodesic segment $\gamma \colon [0, T] \to \M$ from $\gamma(0) = x$ to $\gamma(T) = y$ such that $d^2$ is $C^4$ on a neighborhood of $\Ima \gamma \times \Ima \gamma$;
    \item if $|t| < \rho$, then the absolute value of the fourth derivative $h^{(4)}_{(x, v)}(t)$ is bounded by $\kappa$:
    \[ \left|h^{(4)}_{(x, v)}(t)\right| < \kappa \]
    for all $(x, v) \in T\M$ such that $g_x(v, v) = 1$ and $h^{(4)}_{(x, v)}(t)$ is well-defined.
    \end{enumerate}
    Then Assumption \ref{assump:firstorder} holds and Assumption \ref{assump:thirdorder} holds for $K = \frac{\kappa}{24}$ and $\eps_0 = \rho$. In addition, (\ref{eq:hessmetric}) is the unique $C^2$ Riemannian metric such that Assumptions \ref{assump:firstorder} and \ref{assump:thirdorder} hold.
\end{theorem}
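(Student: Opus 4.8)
The plan is to reduce both assumptions to a single Taylor estimate along geodesics and to isolate the one genuinely nontrivial fact, namely that the third derivative of $t\mapsto d^2(x,\exp_x(tv))$ vanishes at $t=0$. Fix a length-minimizing unit-speed geodesic $\gamma$ (for a pair $x,y$ with $d_g(x,y)<\rho$ I take the one supplied by hypothesis (1), so that $d^2$ is $C^4$ near $\Ima\gamma\times\Ima\gamma$), and set $G(s,t)\vc=d^2(\gamma(s),\gamma(t))$ and $E(s,t)\vc=G(s,t)-(t-s)^2$; I denote by $G_{(i)},G_{(i)(j)},\dots$ the partial derivatives of $G$ in its two slots. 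Since a subsegment of a minimizing geodesic is minimizing, $d_g(\gamma(s),\gamma(t))=|t-s|$, so $E=d^2-d_g^2$ along $\gamma$ and $h_{(x,v)}(t)=G(0,t)=t^2+E(0,t)$. Everything then follows once I show that $E$ vanishes to third order on the diagonal $\{s=t\}$.

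To prove that vanishing I would proceed as follows. Trivially $E(s,s)=0$. Since $q\mapsto d^2(\gamma(s),q)$ has a minimum at $q=\gamma(s)$, we have $G_{(2)}(s,s)=0$, and \cref{lemma:ot2} applied to the $C^1$ path $\tau\mapsto\gamma(s+\tau)$ gives $\tfrac12 G_{(2)(2)}(s,s)=g_{\gamma(s)}(\gamma'(s),\gamma'(s))=1$; hence $E_{(2)(2)}(s,s)=0$, and by symmetry $E_{(1)(1)}(s,s)=0$. Differentiating $E(s,s)=0$ twice in $s$ and using that the two pure second derivatives vanish forces the mixed term $E_{(1)(2)}(s,s)=0$, so all second derivatives of $E$ vanish on the diagonal. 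Writing $P\vc=E_{(2)(2)(2)}(s,s)=E_{(1)(1)(1)}(s,s)$ and $Q\vc=E_{(1)(2)(2)}(s,s)=E_{(1)(1)(2)}(s,s)$ (equalities from the symmetry $E(s,t)=E(t,s)$), I differentiate the three diagonal identities $E_{(2)(2)}(s,s)\equiv0$, $E_{(1)(1)}(s,s)\equiv0$, $E_{(1)(2)}(s,s)\equiv0$ in $s$ to get $P+Q=0$, $P+Q=0$, and $2Q=0$ respectively, whence $Q=0$ and $P=0$. In particular $h'''_{(x,v)}(0)=G_{(2)(2)(2)}(0,0)=E_{(2)(2)(2)}(0,0)=0$. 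This is the crux of the whole argument, since it is precisely where the symmetry of $d^2$ interacts with the unit-speed normalization coming from \cref{lemma:ot2}.

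With the first three Taylor coefficients of $E(0,\cdot)$ at $0$ all vanishing, Taylor's theorem with remainder gives $h_{(x,v)}(t)=t^2+\tfrac1{24}h^{(4)}_{(x,v)}(\xi)\,t^4$ for some $\xi$ strictly between $0$ and $t$; since $|t|<\rho$ makes $|h^{(4)}_{(x,v)}(\xi)|<\kappa$ by hypothesis (2), I obtain $|d^2(\gamma(0),\gamma(t))-t^2|\le\tfrac{\kappa}{24}t^4$. For \cref{assump:thirdorder} I take $x,y$ with $T\vc=d_g(x,y)<\rho$, use the geodesic from hypothesis (1), and read off $d_g^2(x,y)-d^2(x,y)=T^2-h(T)=-\tfrac1{24}h^{(4)}(\xi)T^4<\tfrac{\kappa}{24}T^4=Kd_g^4(x,y)$ with $K=\kappa/24$ and $\eps_0=\rho$. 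For \cref{assump:firstorder}, since $d(x,y)$ depends only on the endpoints I may again compute along the nice geodesic, so that $|d(\gamma(0),\gamma(t))-t|=|\sqrt{h(t)}-t|=|h(t)-t^2|/(\sqrt{h(t)}+t)\le\tfrac{\kappa}{24}t^3$, and $R(t)\vc=\tfrac{\kappa}{24}t^3$ (extended by $+\infty$ for $t\ge\rho$) satisfies $R(t)/t\to0$.

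Finally, for uniqueness I would invoke \cref{prop:pw-approx}, which already identifies $g$ as the unique $C^2$ metric satisfying (\ref{eq:ot2-uniq}); it therefore suffices to check that any $C^2$ metric $\widetilde g$ for which \cref{assump:firstorder,assump:thirdorder} hold must satisfy (\ref{eq:ot2-uniq}). For a unit-speed $\widetilde g$-geodesic $\widetilde\gamma(t)=\widetilde\exp_x(tv)$, which is length-minimizing for small $t$, \cref{assump:firstorder} gives $|d(x,\widetilde\gamma(t))-t|\le R(t)$ with $R(t)/t\to0$, so $d(x,\widetilde\exp_x(tv))/|t|\to1=\sqrt{\widetilde g_x(v,v)}$, and rescaling $v$ handles the non-unit case; hence $\widetilde g=g$. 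The only points requiring care are the regularity bookkeeping, namely that $h_{(x,v)}$ is genuinely $C^4$ along the chosen geodesic (this is exactly where hypothesis (1) and the $C^5$ assumption on $d^2$ near $D_\M$, together with the resulting $C^3$ regularity of $g$ from \cref{lemma:psd}, are used), and the harmless degenerate case $x=y$ in \cref{assump:thirdorder}.
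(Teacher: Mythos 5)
Your proof is correct, and its overall skeleton matches the paper's: Taylor-expand $h_{(x,v)}$ to fourth order along the geodesic supplied by hypothesis (1), show the third-order coefficient vanishes, read off \cref{assump:firstorder} and \cref{assump:thirdorder} from the Lagrange remainder with the bound of hypothesis (2), and reduce uniqueness to \cref{prop:pw-approx}. Where you genuinely diverge is at the crux, the vanishing of $h'''_{(x,v)}(0)$. The paper argues geometrically: it extends $\gamma$ backwards to $[-\delta, d_g(x,y)]$, proves $d(\gamma(t_0),\gamma(t_1))\le|t_1-t_0|$ by the same subdivision/triangle-inequality device as \cref{lemma:upper} (using a uniform cubic error bound on the compact square $I\times I$), and then uses the resulting sign $h(t)-t^2\le 0$ on \emph{both} sides of $0$ to kill the cubic term — this two-sided sign argument is precisely why the backward extension is needed. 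You instead argue algebraically: with $E(s,t)=d^2(\gamma(s),\gamma(t))-(t-s)^2$, the minimality of $d^2(\gamma(s),\cdot)$ at $\gamma(s)$, \cref{lemma:ot2} applied at \emph{every} $\gamma(s)$ (not just $s=0$), and the symmetry $E(s,t)=E(t,s)$ annihilate all derivatives of $E$ up to order two on the diagonal, and differentiating these identities along the diagonal yields $P+Q=0$ and $2Q=0$, hence $P=Q=0$; this is a valid and rather cleaner route that needs neither the subdivision argument nor the sign of $d_g^2-d^2$. The one point where your bookkeeping must match the paper's care is that you differentiate the diagonal identities at the endpoint $s=0$ of $[0,T]$, so you need one-sided derivatives or a small extension of $\gamma$ past its endpoints (available because the geodesic-flow domain is open, and $d^2$ stays $C^4$ on a neighborhood of the slightly enlarged image by compactness) — the analogue of the paper's explicit $[-\delta,\cdot]$ extension. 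In exchange, the paper's route also delivers the one-sided comparison $d\le d_g$ along $\gamma$, of independent interest, whereas yours gives only the two-sided bound $\left|d_g^2-d^2\right|\le \frac{\kappa}{24}d_g^4$, which is all the theorem requires. Finally, your uniqueness paragraph is slightly more complete than the paper's: the paper simply cites \cref{prop:pw-approx}, leaving implicit the bridging step that \cref{assump:firstorder} for a competing metric $\widetilde g$ implies (\ref{eq:ot2-uniq}); you spell that step out, including the rescaling to non-unit vectors.
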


Before proceeding with the proof of \cref{thm:c4}, we remark that if $\M$ is compact, $d^2$ is $C^7$ on a neighborhood of $D_\M$ and the nondegeneracy assumption is satisfied, then there always exist $\kappa, \rho > 0$ satisfying the required conditions for \cref{thm:c4}.

    \begin{lemma}
        \label{lemma:compactness-lemma}
        Suppose $\M$ is compact, $d^2$ is $C^7$ on a neighborhood of $D_\M$, and the nondegeneracy assumption holds. Then there exist $\kappa, \rho > 0$ satisfying the required conditions for \cref{thm:c4}.
    \end{lemma}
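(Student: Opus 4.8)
The plan is to choose $\rho$ small enough that every short minimizing geodesic, together with the associated geodesic endpoints, lies inside the region where $d^2$ is smooth, and then to extract the uniform fourth-derivative bound $\kappa$ from a compactness argument over the unit tangent bundle.

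First I would record the two regularity inputs. Since $d^2$ is $C^7$ on a neighborhood $\mathcal{U}$ of $D_\M$, \cref{lemma:psd} (with $k = 5$) together with the nondegeneracy assumption shows that $g$ is a $C^5$ Riemannian metric; in particular its Christoffel symbols are $C^4$, so the geodesic spray is a $C^4$ vector field on $T\M$ and, by standard ODE regularity, its flow $(x, v, t) \mapsto \exp_x(tv)$ is jointly $C^4$. Because $\M$ is compact it is complete, so by Hopf--Rinow any two points are joined by a minimizing geodesic and $\exp_x(tv)$ is defined for all $t$. Next I would produce a uniform tube around the diagonal: the set $(\M \times \M) \setminus \mathcal{U}$ is closed in the compact space $\M \times \M$, hence compact, and the continuous function $(p, q) \mapsto d_g(p, q)$ vanishes exactly on $D_\M \subset \mathcal{U}$, so it attains a positive minimum $2\delta_0 > 0$ there (take $\delta_0 := 1$ if $\mathcal{U} = \M \times \M$). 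Thus $d_g(p, q) < 2\delta_0$ implies $(p, q) \in \mathcal{U}$. Now fix any $\rho \in (0, \delta_0)$.

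For condition (1), given $x, y$ with $d_g(x, y) < \rho$, take a unit-speed minimizing geodesic $\gamma \colon [0, T] \to \M$ with $T = d_g(x, y) < \rho$. Every subsegment of a minimizing geodesic is minimizing, so for $p = \gamma(s)$, $q = \gamma(s')$ we have $d_g(p, q) = |s - s'| \leq T < \delta_0 < 2\delta_0$, whence $(p, q) \in \mathcal{U}$. Therefore $\Ima \gamma \times \Ima \gamma \subset \mathcal{U}$, and since $\mathcal{U}$ is open and $d^2$ is $C^7$ (hence $C^4$) on it, $\mathcal{U}$ itself serves as a neighborhood of $\Ima \gamma \times \Ima \gamma$ on which $d^2$ is $C^4$. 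For condition (2), the length bound gives $d_g(x, \exp_x(tv)) \leq |t| < \delta_0$ whenever $g_x(v, v) = 1$ and $|t| < \delta_0$, so $(x, \exp_x(tv)) \in \mathcal{U}$; consequently $h_{(x, v)}(t) = d^2(x, \exp_x(tv))$ is the composition of the jointly $C^4$ map $(x, v, t) \mapsto (x, \exp_x(tv))$ with the $C^7$ (hence $C^4$) function $d^2$ on $\mathcal{U}$, and is therefore jointly $C^4$ in $(x, v, t)$ on $\{(x, v) \in T\M,\ |t| < \delta_0\}$. In particular $\partial_t^4 h_{(x, v)}(t)$ exists and is jointly continuous there. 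The unit tangent bundle $T^1\M := \{(x, v) \in T\M : g_x(v, v) = 1\}$ is a sphere bundle over the compact base $\M$, hence compact, so $T^1\M \times [-\rho, \rho]$ is a compact subset of the domain and $|\partial_t^4 h|$ attains a finite maximum $M$ on it. Setting $\kappa := M + 1$, we obtain $|h^{(4)}_{(x, v)}(t)| \leq M < \kappa$ whenever $g_x(v, v) = 1$ and $|t| < \rho$, which is condition (2).

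The main obstacle is the regularity bookkeeping in the last paragraph: one must verify that the $C^7$ hypothesis on $d^2$ propagates, via \cref{lemma:psd}, to enough regularity of $g$ — and hence of the geodesic spray and its flow — that $h_{(x,v)}(t)$ is genuinely $C^4$ jointly in all of $(x, v, t)$, since only then is $\partial_t^4 h$ continuous and the compactness bound legitimate. This is exactly why the hypothesis is stated with $C^7$ rather than the $C^5$ used in \cref{thm:c4} itself (two orders are spent passing from $d^2$ to $g$, and the remaining regularity must survive one more derivative in the chain rule for $h^{(4)}$). The remaining steps are routine applications of Hopf--Rinow and of compactness of $\M \times \M$ and of $T^1\M \times [-\rho, \rho]$.
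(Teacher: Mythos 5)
Your proof is correct, and it reaches the conclusion by a genuinely different route than the paper's. Both arguments share the same regularity bookkeeping ($d^2 \in C^7$ gives, via \cref{lemma:psd}, a $C^5$ metric $g$, hence a $C^4$ geodesic flow, hence $h_{(x,v)}(t) = d^2(x, \exp_x(tv))$ jointly $C^4$), and both end by taking a supremum of $\left|h^{(4)}\right|$ over a compact set. The difference is in how the geodesics of condition (1) are produced and how uniformity is obtained. The paper works locally: for each $p \in \M$ it constructs a strongly convex geodesic ball $B_r(p)$ such that $d^2$ is $C^4$ on a neighborhood of $\overline{B_r(p) \times B_r(p)}$ and $F(q, v) = (q, \exp_q(v))$ has a $C^4$ local inverse there, then covers $\M$ by finitely many balls $B_{r/2}(p)$; strong convexity supplies a minimizing geodesic that stays inside $B_r(p)$, and the bound on $h^{(4)}$ is assembled ball by ball from the finite cover. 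You work globally: a Lebesgue-number-type argument (the minimum of $d_g$ on the compact set $(\M \times \M) \setminus \mathcal{U}$) yields a uniform tube $\{ d_g < 2\delta_0 \} \subset \mathcal{U}$; Hopf--Rinow (compactness implies completeness) supplies the minimizing geodesics; the elementary fact that subsegments of minimizing geodesics are minimizing places $\Ima \gamma \times \Ima \gamma$ inside the tube; and the derivative bound is a single compactness argument over $T^1\M \times [-\rho, \rho]$. Your route buys simplicity: it dispenses entirely with strong convexity --- which the paper must justify separately, since do Carmo's construction is stated for smooth metrics --- and with the local inverse $F^{-1}$, and it avoids the final patching over a finite cover. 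The price is an appeal to Hopf--Rinow and geodesic completeness for a $C^5$ metric, likewise a classical theorem usually stated for smooth metrics but whose proof goes through at this regularity, whereas the paper's purely local argument never needs completeness. One small inaccuracy in your closing commentary (not in the proof itself): the derivative count is that two orders are lost passing from $d^2$ to $g$ and one more passing from $g$ to the Christoffel symbols, after which the flow is as regular as the spray; no additional order is lost "in the chain rule for $h^{(4)}$."
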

    We postpone the proof of \cref{lemma:compactness-lemma} to \cref{app:compact-lemma-proof}.  \cref{lemma:compactness-lemma} provides us with some examples of manifolds and embeddings which fit into our framework:
    \begin{corollary}
    \label{cor:subman}
        Suppose $\M$ is a compact, smoothly embedded submanifold of either 
        \begin{enumerate}[label=(\alph*)]
        \item a smooth, connected finite-dimensional manifold $\widetilde{\M}$, or
        \item a real Hilbert space $\mathbb{H}$. 
        \end{enumerate}
        
        For (a), if $\widetilde{g}$ is a smooth Riemannian metric on $\widetilde{\M}$ and we take $d$ to be the geodesic distance $d_{\widetilde{g}}$ associated to $\widetilde{g}$, then $g$ as in (\ref{eq:hessmetric}) is the restriction of $\widetilde{g}$ to $\M$, i.e., $(\M, g)$ is a Riemannian submanifold of $(\widetilde{\M}, \widetilde{g}).$
        
        For (b), taking $d$ to be the distance arising from the norm $\| \cdot\|_{\mathbb{H}}$ on $\mathbb{H}$, $g$ as in (\ref{eq:hessmetric}) is the restriction of the inner product on $\mathbb{H}$ to $\M$. 
        
        In either case, the nondegeneracy assumption holds, Assumption \ref{assump:firstorder} holds and Assumption \ref{assump:thirdorder} holds for some $K, \eps_0 > 0$.
    \end{corollary}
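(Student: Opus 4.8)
The plan is to reduce both cases to \cref{lemma:compactness-lemma} and \cref{thm:c4}. Since $\M$ is compact in (a) and (b), it suffices to verify two things: (i) $d^2$ is $C^7$ (in fact smooth) on a neighborhood of $D_\M$, and (ii) the nondegeneracy \cref{assump:nondegen} holds. Once these are in place, \cref{lemma:compactness-lemma} produces constants $\kappa, \rho > 0$ meeting the hypotheses of \cref{thm:c4}, which in turn yields \cref{assump:firstorder} and \cref{assump:thirdorder} (with $K = \kappa/24$ and $\eps_0 = \rho$) together with the asserted uniqueness. The substance of the proof is therefore in checking (i) and in computing the metric (\ref{eq:hessmetric}) explicitly; identifying $g$ with the restricted metric, respectively the restricted inner product, is simultaneously the nondegeneracy check and the main geometric content of the statement.

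For the smoothness in (i), case (b) is immediate: $d^2(u, v) = \|u - v\|_{\mathbb{H}}^2 = \langle u - v, u - v\rangle_{\mathbb{H}}$ is a bounded (real-analytic) quadratic function on $\mathbb{H} \times \mathbb{H}$, so its pullback along the smooth embedding $\M \hookrightarrow \mathbb{H}$ is smooth on all of $\M \times \M$. For case (a) I would invoke the classical fact that, for a smooth Riemannian metric $\widetilde{g}$, the squared geodesic distance $d_{\widetilde{g}}^2$ is smooth on an open neighborhood $\mathcal{U}$ of the diagonal of $\widetilde{\M} \times \widetilde{\M}$; this follows from $\exp$ being a smooth diffeomorphism within the injectivity radius, together with the continuity and positivity of the injectivity radius on the compact set $\M$. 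Restricting along the inclusion, $\mathcal{U} \cap (\M \times \M)$ is then an open neighborhood of $D_\M$ on which $d^2 = d_{\widetilde{g}}^2|_{\M \times \M}$ is smooth.

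To compute (\ref{eq:hessmetric}) and obtain (ii) I would use the characterization recalled in \cref{subsec:notation}: at a critical point $x$ of a $C^2$ function, $\mathrm{Hess}_x f(X, Y) = X(Yf)(x)$, independent of any connection. Applying this with tangential extensions of $X, Y \in T_x\M$ shows that, because $x$ is the minimizer of $d^2(\cdot, x)$, the intrinsic Hessian on $\M$ of the restriction equals the restriction to $T_x\M$ of the ambient Hessian (the second-fundamental-form correction drops out precisely since the differential vanishes at $x$). In case (b), the ambient function $q \mapsto \|q - x\|_{\mathbb{H}}^2$ has constant Hessian $2\langle \cdot, \cdot\rangle_{\mathbb{H}}$, so $g_x = \tfrac{1}{2}\mathrm{Hess}_x\!\left(d^2(\cdot, x)\right) = \langle \cdot, \cdot\rangle_{\mathbb{H}}|_{T_x\M}$. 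In case (a), Palais's identity (\ref{eq:palais-hess}) applied to $\widetilde{g}$ gives $\mathrm{Hess}_x\!\left(d_{\widetilde{g}}^2(\cdot, x)\right) = 2\widetilde{g}_x$ on $T_x\widetilde{\M}$, so restricting to $T_x\M$ yields $g_x = \widetilde{g}_x|_{T_x\M}$. In both cases $g_x$ is the restriction of a genuine inner product to the linear subspace $T_x\M$, hence positive definite, which establishes (ii) and identifies $(\M, g)$ as a Riemannian submanifold exactly as claimed.

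The step I expect to be the main obstacle is the Hessian identification in case (a): one must justify rigorously that the intrinsic Hessian on $\M$ of $d^2(\cdot, x)|_\M$ coincides with the restriction of the $\widetilde{\M}$-Hessian of $d_{\widetilde{g}}^2(\cdot, x)$, which relies on $x$ being a critical point and on $d_{\widetilde{g}}^2$ being at least $C^2$ near the diagonal (valid since $\widetilde{g}$ is smooth, in particular $C^3$, so that (\ref{eq:palais-hess}) applies). Everything else is bookkeeping: with (i) and (ii) verified, \cref{lemma:compactness-lemma} and \cref{thm:c4} close out both (a) and (b) simultaneously.
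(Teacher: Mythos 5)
Your proposal is correct and shares the paper's overall architecture: verify that $d^2$ is smooth near $D_\M$, establish Assumption \ref{assump:nondegen} by identifying $g$ from (\ref{eq:hessmetric}) as the restricted ambient metric, and then invoke \cref{lemma:compactness-lemma} followed by \cref{thm:c4}. The one place you genuinely diverge is the mechanism for that identification. The paper argues at \emph{first} order, differentiating $d^2$ along $C^1$ curves via \cref{lemma:ot2} and Palais's identity (\ref{eq:palais}): in case (a), $\lim_{t\to 0} d_{\widetilde{g}}^2(\widetilde{\iota}\gamma(t),\widetilde{\iota}\gamma(0))/t^2 = \widetilde{g}\left((\widetilde{\iota}\circ\gamma)'(0),(\widetilde{\iota}\circ\gamma)'(0)\right)$, and in case (b) the same difference quotient computes $\|(\widetilde{\iota}\circ\gamma)'(0)\|_{\mathbb{H}}^2$ directly. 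You instead argue at \emph{second} order, restricting the ambient Hessian at the critical point and using (\ref{eq:palais-hess}). Your route is valid: the critical-point identity $\mathrm{Hess}_x(F|_\M)(X,Y) = \mathrm{Hess}_x F(d_x\widetilde{\iota}X, d_x\widetilde{\iota}Y)$ holds precisely because the second-fundamental-form correction is paired against $\nabla F(x) = 0$, and positive definiteness then follows from injectivity of $d_x\widetilde{\iota}$ — the immersion property that, as the paper stresses, is the real crux packed into the phrase ``smoothly embedded.'' What the paper's first-order route buys is that it sidesteps ambient Hessians altogether: in case (b), where $\mathbb{H}$ may be infinite-dimensional, one only needs the difference quotient of a curve rather than second-order Fr\'echet calculus, so both cases are handled by one uniform computation. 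What your route buys is that the identification becomes a one-line consequence of (\ref{eq:palais-hess}) once the restriction lemma is in place, and it makes explicit why no curvature terms can enter. Either way, the reduction to \cref{lemma:compactness-lemma} and \cref{thm:c4} closes the proof exactly as you describe.
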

    \begin{proof}
    The crux actually lies in the terminology here: by \textit{smoothly embedded submanifold}\footnote{The term \textit{embedded submanifold} is used more often. We add the descriptor \textit{smoothly} to emphasize that $\M$ is not just topologically embedded into $\widetilde{\M}$.} of a smooth manifold $\widetilde{\M}$, we mean that there exists a smooth map $\widetilde{\iota} \colon \M \to \widetilde{\M}$ which is a topological embedding \textit{and an immersion}, i.e., $d_x\widetilde{\iota}$ is injective for all $x \in \M$, or, equivalently, $(\widetilde{\iota} \circ \gamma)'(0) \neq 0$ for any smooth $\gamma \colon (-\delta, \delta) \to \M$ such that $\gamma'(0) \neq 0$. Therefore, for any $x \in \M$, $T_x\M$ can be identified with a subspace of $T_{\widetilde{\iota}(x)}\widetilde{\M}$ via $d_x \widetilde{\iota} \colon T_x\M \to T_{\widetilde{\iota}(x)}\widetilde{\M}$. From here, the fact that (\ref{eq:hessmetric}) is the restriction of $\widetilde{g}$ to $\M$ follows from (\ref{eq:palais}); since $\widetilde{g}$ is a Riemannian metric, the nondegeneracy assumption holds. Since the geodesic distance $d_{\widetilde{g}}$ is smooth around the diagonal of $\widetilde{\M} \times \widetilde{\M}$ (see \cref{subsec:diff-geo}), 
    the remaining statements in \cref{cor:subman} for (a) follow from \cref{thm:c4} and \cref{lemma:compactness-lemma}.
    
    The argument when $\M$ is instead smoothly embedded into a real Hilbert space $\mathbb{H}$ is similar; we refer to \cite{lang1999diffgeo} for more details on differentiability in Banach spaces\footnote{Technically, a function $\widetilde{\gamma} \colon (-\delta, \delta) \to \mathbb{H}$ is differentiable at $0$ if there exists a continuous linear map $\Lambda \colon \R \to \mathbb{H}$ such that
    \[ \frac{\|\widetilde{\gamma}(t) - \widetilde{\gamma}(0) - \Lambda(t)\|_{\mathbb{H}}}{|t|} \to 0 \]
    as $t\to 0$, and $\widetilde{\gamma}'(0)$ refers to the map $\Lambda$ \cite{lang1999diffgeo}. By a slight abuse of notation, we use $\widetilde{\gamma}'(0)$ to denote $\Lambda(1)$ instead, i.e., if $\widetilde{\gamma}$ is differentiable at $0$,
    \[ \widetilde{\gamma}'(0) = \lim_{t \to 0} \frac{\widetilde{\gamma}(t) - \widetilde{\gamma}(0)}{t}. \]
    }. In particular, $\M$ being a smoothly embedded submanifold of a real Hilbert space $\mathbb{H}$ means that there exists a smooth topological embedding $\widetilde{\iota} \colon \M \to \mathbb{H}$ such that $(\widetilde{\iota} \circ \gamma)'(0) \neq 0$ for all smooth $\gamma \colon (-\delta, \delta) \to \M$ such that $\gamma'(0) \neq 0$ \cite[Chapter II, Proposition 2.2]{lang1999diffgeo}, and by (\ref{eq:hessmetric}), we have
    \[ g_{\gamma(0)} (\gamma'(0), \gamma'(0)) = \lim_{t \to 0} \frac{\| (\widetilde{\iota} \circ \gamma)(t) - (\widetilde{\iota} \circ \gamma)(0) \|^2_{\mathbb{H}}}{t^2} = \|(\widetilde{\iota} \circ \gamma)'(0)\|_{\mathbb{H}}^2. \]
    Hence (\ref{eq:hessmetric}) is nondegenerate for case (b) as well, and Assumptions \ref{assump:firstorder} and \ref{assump:thirdorder} hold by \cref{thm:c4} and \cref{lemma:compactness-lemma}.
    \end{proof}
    \begin{remark}
    In \cref{subsec:notation-and-assumptions}, we only required $\iota \colon \M \to (X, d)$ to be a topological embedding, because in general $X$ is not a manifold. Even when $X = \widetilde{\M}$ for some smooth manifold $\widetilde{\M}$ and $d$ is the geodesic distance with respect to some smooth Riemannian metric $\widetilde{g}$ on $\widetilde{\M}$, a topological embedding $\iota \colon \M \to (\widetilde{\M}, d)$ does not imply that $\M$ is a smoothly embedded submanifold of $\widetilde{\M}$, even if $\iota$ is smooth. This is why the example in \cref{rmk:degen-example} has a degeneracy.
    \end{remark}

    We now prove \cref{thm:c4}. The idea is to consider the Taylor expansion of $h_{(x, v)}$ around $0$.

\begin{proof}[Proof of \cref{thm:c4}]
We have already discussed uniqueness in \cref{prop:pw-approx}, so it suffices to prove that Assumptions \ref{assump:firstorder} and \ref{assump:thirdorder} hold.
    Fix any $x, y \in \M$ such that $d_g(x, y) < \rho$. Let $\gamma \colon I \to \M$ be a unit-speed geodesic segment which is length-minimizing from $\gamma(0) = x$ to $\gamma(d_g(x, y)) = y$, where $I = [-\delta, d_g(x, y)]$ for $\delta \in (0, \min(\mathrm{inj}_{(\M, g)}(x),\rho))$ small enough such that $d^2$ is $C^4$ on a neighborhood of $\Ima \gamma \times \Ima \gamma$. 
    Let $v \vc= \gamma'(0).$

    Since $d^2$ is $C^5$ around $D_\M$, $g$ is $C^3$, so geodesics on $(\M, g)$ are $C^4$ (see \cref{subsec:diff-geo}).
    Since $d^2$ is $C^4$ on a neighborhood of $\Ima \gamma \times \Ima \gamma$, the fourth derivative $h_{(x, v)}^{(4)}$ is well-defined on $I$ and by assumption, $\left| h_{(x, v)}^{(4)} \right|$ is uniformly bounded by $\kappa$ on $I$. 
    In addition, we have the following Taylor expansion: for $t \in I$,
    \begin{equation}
    h_{(x, v)}(t) = t^2 + \frac{h^{(3)}_{(x, v)}(0) t^3}{6} + \frac{h^{(4)}_{(x, v)}(s) t^4}{24} 
    \end{equation}
    for some $s$ between 0 and $t$. To prove that Assumption \ref{assump:thirdorder} holds, we want to show that the third order term vanishes. To do this, we claim that, similar to \cref{lemma:upper}, 
        \begin{equation}
        \label{eq:distance_comparison}
        d(\gamma(t_0), \gamma(t_1)) \leq |t_1 - t_0| 
        \end{equation}
    for all $t_0, t_1 \in I$.
    Indeed,
    \begin{align*}
    H \colon I \times I &\to \R \\
    t_0, t_1 &\mapsto d^2(\gamma(t_0), \gamma(t_1)) 
    \end{align*}
    is $C^4$ on the compact set $I \times I$. 
    The absolute values of the third derivatives of $H$ can then be uniformly bounded, so there exists $C > 0$ such that
    \[ \left| d^2(\gamma(t_0), \gamma(t_1)) - (t_1 - t_0)^2 \right| \leq C |t_1 - t_0|^3 \]
    for all $t_0, t_1 \in I$. It follows that for any $t_0, t_1 \in I$,
    \begin{equation}
    \label{eq:diff-from-dist}
    \left| d(\gamma(t_0), \gamma(t_1)) - |t_1 - t_0| \right| \leq \frac{C|t_1-t_0|^3}{d(\gamma(t_0), \gamma(t_1)) + |t_1 - t_0|} \leq C |t_1-t_0|^2, 
    \end{equation}
    and (\ref{eq:distance_comparison}) follows from a similar argument as the proof of \cref{lemma:upper}: for any $N \in \N$ and $t_0, t_1 \in I$ such that $t_0 < t_1$, we can define $s_k \vc = t_0 + \frac{k}{N} |t_1-t_0|$ for $k = 0, \ldots, N$, so
    \begin{align*}
    d(\gamma(t_0), \gamma(t_1)) &\leq \sum_{k \in [N]} d(\gamma(s_{k-1}), \gamma(s_k)) \\
    &\leq \sum_{k \in [N]} |s_k - s_{k-1}| + C|s_k - s_{k-1}|^2  \\
    &= N \left(\frac{|t_1-t_0|}{N} + \frac{C|t_1-t_0|^2}{N^2} \right)
    \end{align*}
    by the triangle inequality and (\ref{eq:diff-from-dist}). Since $N$ can be arbitrarily large, the inequality in (\ref{eq:distance_comparison}) holds.
     Hence
    \[ h_{(x, v)}(t) - t^2 = d^2(\gamma(0), \gamma(t)) - t^2 \leq 0 \]
    for all $t \in I$, so $(h_{(x, v)}(t) - t^2)/t^3 \leq 0$ for $t \in I$ such that $t > 0$ and $(h_{(x, v)}(t) - t^2)/t^3 \geq 0$ for $t \in I$ such that $t < 0$. In particular,
    \[ h_{(x, v)}^{(3)}(0) = \lim_{t \to 0} \frac{6(h_{(x, v)}(t) - t^2)}{t^3} \]
    must be 0.
    Then for any $t \in I$,
    \[ 0 \leq t^2 - h_{(x, v)}(t) \leq \left| \frac{h^{(4)}_{(x, v)}(s) t^4}{24} \right| \leq \frac{\kappa t^4}{24} \]
    for some $s$ between 0 and $t$. Taking $t = d_g(x, y)$ gives us
    \[ 0 \leq d_g^2(x, y) - d^2(x, y) \leq \frac{\kappa d_g^4(x, y)}{24} \]
    for any $x, y \in \M$ such that $d_g(x, y) < \rho$. This implies that Assumption \ref{assump:firstorder} holds (take $R(u) = \frac{\kappa u^3}{24}$ if $u \in [0, \rho),$ and $R(u) = \infty$ for $u \geq \rho$) and that Assumption \ref{assump:thirdorder} holds for $K = \frac{\kappa}{24}$ and $\eps_0 = \rho$.
\end{proof}

We emphasize that uniqueness in \cref{thm:c4} is for a given embedding $\iota \colon \M \to (X, d)$: if we have several different embeddings $\iota_1 \colon \M \to (X_1, d_1), ..., \iota_k \colon \M \to (X_k, d_k)$ which satisfy the assumptions for \cref{thm:c4}, applying (\ref{eq:hessmetric}) may result in \emph{different} Riemannian metrics $g^{(1)}, ..., g^{(k)}$ on $\M$, and therefore the graph Laplacian may converge to a different operator. We provide an example of this in Section \ref{subsec:weightedl1}. 
As observed in \cite{lederman2023manifold}, the learned Riemannian metric may not agree with what the user thinks of as the ``ground truth" Riemannian metric, and in general, this is unavoidable regardless of the metric we use to measure the distance between data points.

\section{Examples}
\label{sec:examples}
It follows from \cref{cor:subman} that \cref{thm:glconv_unif} and \cref{thm:glconv_nonunif} (pointwise convergence of graph Laplacians) applies to smoothly embedded compact submanifolds of Euclidean space and more generally, smoothly embedded compact submanifolds of a real Hilbert space. This is already quite useful, since a number of metrics which have been used in data science and machine learning are Hilbertian, i.e., the associated metric space can be isometrically embedded into a Hilbert space.
Examples of Hilbertian metrics include the Wasserstein-2 distance on $\mathcal{P}_2(\R)$ \cite{peyre2019computational}, the sliced-Wasserstein distance 
\cite{bonneel2015sliced} and the energy distance \cite{szekely2003statistics}\footnote{The energy distance between two measures $\mu, \nu$ on $\R^N$ is a scalar multiple of the homogeneous negative Sobolev norm $\|\mu - \nu \|_{\dot{H}^{-\frac{N+1}{2}}}$\cite{szekely2003statistics, peyre2019computational} and has also been called the sliced-Cram\'er distance \cite{kolourisliced, nadjahi2020statistical}. See also the sliced-Volterra distance \cite{leeb2024metricsrobustnoisedeformations}, which is defined on integrable, compactly supported functions $f, g$ on $\R^N$ and agrees with the energy/sliced-Cram\'er distance if $f, g$ are probability densities.}.

However, not all metrics are Hilbertian. A result from Schoenberg tells us that a metric space $(X, d)$ can be isometrically embedded into a Hilbert space if and only if $y^T [d^2(x_i, x_j)]_{i, j} y \leq 0$ for all $x_1, ..., x_n \in X$ and $y \in \R^n$ such that $\sum_{i} y_i = 0$ \cite{schoenberg1938metric}. In fact, for $d \geq 2$, the Wasserstein-2 distance on $\mathcal{P}_2(\R^d)$ is not Hilbertian \cite{peyre2019computational}. In this section, we provide additional examples of manifolds and embeddings into metric spaces which satisfy the required conditions for \cref{thm:glconv_unif} and \cref{thm:glconv_nonunif}.

\subsection{Weighted $\ell_1$-norm}
\label{subsec:weightedl1}
Consider the weighted $\ell_1$-norm on $\R^2$ given by
\[ \|x\|_{w, 1} = w_1 |x_1| + w_2 |x_2| \]
for some $w_1, w_2 > 0$ and the embedding
\begin{align*}
    \iota \colon \S^1 &\to \R^2 \\
    \theta &\mapsto (\cos\theta, \sin \theta).
\end{align*}
In other words, consider the following metric on $\S^1$: 
\[d(\psi, \theta) \vc= w_1|\cos \psi - \cos \theta| + w_2 |\sin \psi - \sin \theta|\]
for $\psi, \theta \in [0, 2\pi).$
Let $g$ be the standard Riemannian metric on $\S^1$ arising from the restriction of the Euclidean metric on $\R^2$. In \cite[Section 3.7]{kileel2021manifold}, Kileel et al. prove that if $\theta_1, ..., \theta_n$ are drawn i.i.d. uniformly with respect to $g$, then the (appropriately scaled) graph Laplacians constructed using $d$ converge pointwise to a scalar multiple of
\begin{equation}
    \label{eq:weightedl1lim}
    \mathrm{sign}(\cos \theta \sin \theta) \frac{-w_1 |\cos \theta| + w_2 |\sin \theta|}{(w_1 |\sin \theta| + w_2 |\cos \theta|)^4} \frac{d}{d\theta} + \frac{1}{3(w_1 |\sin \theta| + w_2 |\cos \theta|)^3} \frac{d^2}{d\theta^2},
\end{equation}
which they describe as an extrinsic operator. Here we show that we can also understand (\ref{eq:weightedl1lim}) as an intrinsic operator on $\S^1 \setminus \left\{ 0, \frac{\pi}{2}, \pi, \frac{3\pi}{2} \right\}$ with a different Riemannian metric and a non-uniform probability distribution.

To prove the pointwise convergence of graph Laplacians, we check that the assumptions of \cref{thm:c4} hold. The key is to observe that for $(\psi, \theta) \in \left(0, \frac{\pi}{2} \right)^2 \cup \left(\frac{\pi}{2}, \pi \right)^2 \cup \left(\pi, \frac{3\pi}{2} \right)^2 \cup \left(\frac{3\pi}{2}, 2\pi \right)^2$,
\[ \psi, \theta \mapsto d^2(\psi, \theta) = \left\| \left(\cos \psi -\cos\theta, \sin\psi - \sin\theta \right) \right\|_{w, 1}^2 \]
is smooth. We can now compute
\[ \widetilde{g}_\theta \vc= \frac{1}{2}\mathrm{Hess}_\theta\left(d^2(\cdot, \theta)\right) \]
for $\theta \in [0, 2\pi) \setminus \left\{ 0, \frac{\pi}{2}, \pi, \frac{3\pi}{2} \right\}$
by using \cref{lemma:ot2}:
\begin{align*}
\widetilde{g}_\theta \left( \frac{d}{d\theta}, \frac{d}{d\theta} \right) & =  \lim_{\eps \to 0} \frac{\|(\cos (\theta + \eps) - \cos(\theta), \sin(\theta + \eps) - \sin (\theta))\|_{w, 1}^2}{\eps^2} \\
&= \left( \lim_{\eps \to 0} \frac{w_1 |\cos(\theta + \eps) - \cos(\theta)| + w_2 |\sin(\theta + \eps) - \sin(\theta)|}{|\eps|} \right)^2 \\
&= \left( w_1 |\sin \theta| + w_2 |\cos \theta| \right)^2.
\end{align*}
Since $\widetilde{g}_\theta$ is nondegenerate for all $\theta \in [0, 2\pi) \setminus \left\{ 0, \frac{\pi}{2}, \pi, \frac{3\pi}{2} \right\}$, $\widetilde{g}$ is a Riemannian metric on $\S^1 \setminus \left\{ 0, \frac{\pi}{2}, \pi, \frac{3\pi}{2} \right\}$.
Moreover, $\S^1 \setminus \left\{ 0, \frac{\pi}{2}, \pi, \frac{3\pi}{2} \right\}$ has finite volume with respect to $\widetilde{g}$. If $\psi, \theta \in [0, 2\pi) \setminus \left\{ 0, \frac{\pi}{2}, \pi, \frac{3\pi}{2} \right\}$ lie in different connected components, then the geodesic distance $d_{\widetilde{g}}(\psi, \theta)$ is infinite, so 
it suffices to check that the assumptions for \cref{thm:c4} hold on each connected component. Take any $\psi \in \left(0, \frac{\pi}{2} \right)$; the argument for the other quadrants is similar. Define
\begin{align*}
    \gamma \colon \left( - \psi, \frac{\pi}{2}- \psi  \right)&\to \left( 0 , \frac{\pi}{2}\right)\\
    t &\mapsto \psi + t
\end{align*}
and its (signed) length $l \colon \left( - \psi, \frac{\pi}{2}- \psi  \right) \to \R$ by
\begin{equation*}
    l(t) \vc = 
        \int_0^t \sqrt{\widetilde{g}_{\gamma(\theta)} \left( \frac{d}{d\theta}, \frac{d}{d\theta} \right)} d\theta = \int_0^t w_1\sin(\psi + \theta) + w_2 \cos(\psi + \theta) d\theta.
\end{equation*}
for $t \in \left( - \psi, \frac{\pi}{2}- \psi  \right).$
Since $l$ is strictly increasing and smooth on $\left( - \psi, \frac{\pi}{2}- \psi  \right)$, it has a smooth inverse $l^{-1} \colon \Ima l \to \left( - \psi, \frac{\pi}{2}- \psi  \right)$. Reparametrizing $\gamma$ by arc-length (with respect to $\widetilde{g}$) gives us
\[ \widetilde{\gamma} \vc = \gamma \circ l^{-1}, \]
a length-minimizing unit-speed geodesic (with respect to $\widetilde{g}$) that connects $\psi$ to each $\theta \in \left(0, \frac{\pi}{2}\right)$. Moreover, $d^2$ is smooth on $\Ima \widetilde{\gamma} \times \Ima \widetilde{\gamma} = \left(0, \frac{\pi}{2} \right) \times \left(0, \frac{\pi}{2} \right)$, and its derivatives up to fourth order can be uniformly bounded in terms of $w_1$ and $w_2$. Since the first four derivatives of $l^{-1}$ can also be uniformly bounded in terms of $w_1$ and $w_2$, e.g.,
\begin{align*}
\left|(l^{-1})'(s)\right| =\frac{1}{|w_1\sin(\psi + l^{-1}(s)) + w_2 \cos(\psi + l^{-1}(s))|} \leq \frac{1}{\min_{\theta \in \left[0 , \frac{\pi}{2}\right]} |w_1 \sin\theta + w_2 \cos \theta|} 
\end{align*}
for all $s \in \Ima l$, the fourth derivative of $h_{(\psi, \widetilde{\gamma}'(0))}(t) \vc = d^2(\psi, \widetilde{\gamma}(t))$, $t \in \Ima l$ can be bounded in terms of $w_1$ and $w_2$. Hence the assumptions for \cref{thm:c4} hold for some $\kappa, \rho > 0$.

To identify the limiting operator, observe that
the uniform distribution with respect to $g$ is a \textit{non-uniform} probability distribution with respect to $\widetilde{g}$ with probability density
\[ P(\theta) = \frac{1}{2\pi(w_1|\sin \theta| + w_2 |\cos \theta|)}, \]
and by \cref{thm:glconv_nonunif}, the limiting operator of the graph Laplacians constructed with $d$ is given by
\begin{align*}
    P(\theta) \left( \Delta_{\widetilde{g}}f(\theta) - \frac{2}{P(\theta)} \widetilde{g}_\theta \left( \grad_{\widetilde{g}} f, \grad_{\widetilde{g}}P \right) \right)
\end{align*}
for any $\theta \in [0, 2\pi) \setminus \left\{ 0, \frac{\pi}{2}, \pi, \frac{3\pi}{2} \right\}$ and smooth $f \colon \S^1 \to \R$. Using the formula for the Laplacian and gradient in coordinates, we have
\begin{align*}
\Delta_{\widetilde{g}} &= -\frac{1}{w_1 |\sin \theta| + w_2 |\cos \theta|} \frac{d}{d\theta} \left( \frac{1}{w_1 |\sin \theta| + w_2 |\cos \theta|} \frac{d}{d\theta}\right) 
= -(2\pi)^2P(\theta) \frac{d}{d\theta}\left( P(\theta) \frac{d}{d\theta} \right)
\end{align*}
\begin{align*}
    \widetilde{g}_\theta \left( \grad_{\widetilde{g}} f, \grad_{\widetilde{g}}P \right) &= \widetilde{g}_\theta \left( \left(\frac{1}{w_1 |\sin \theta| + w_2 |\cos \theta|}\right)^2\frac{df}{d\theta} \frac{d}{d\theta}, \left(\frac{1}{w_1 |\sin \theta| + w_2 |\cos \theta|}\right)^2 \frac{dP}{d\theta} \frac{d}{d\theta} \right) \\
    &= \left(\frac{1}{w_1 |\sin \theta| + w_2 |\cos \theta|}\right)^2 \frac{df}{d\theta}(\theta) \frac{dP}{d\theta}(\theta) \\
    &= (2\pi)^2P(\theta)^2 \frac{df}{d\theta}(\theta) \frac{dP}{d\theta}(\theta)
\end{align*}
for $\theta \in [0, 2\pi) \setminus \left\{ 0, \frac{\pi}{2}, \pi, \frac{3\pi}{2} \right\}$.
Therefore, we obtain that the limiting operator is
\begin{align*}
    P(\theta) \left( \Delta_{\widetilde{g}}f(\theta) - \frac{2}{P(\theta)} \widetilde{g}_\theta \left( \grad_{\widetilde{g}} f, \grad_{\widetilde{g}}P \right) \right) &= -(2\pi)^2\left((P(\theta))^3\frac{d^2 f}{d\theta^2}(\theta) + 3P(\theta)^2\frac{dP}{d\theta}(\theta) \frac{df}{d\theta}(\theta)\right)
\end{align*}
for $\theta \in [0, 2\pi) \setminus \left\{ 0, \frac{\pi}{2}, \pi, \frac{3\pi}{2} \right\}$,
which, upon computation, is a scalar multiple of (\ref{eq:weightedl1lim}).

\subsection{Wasserstein distance}
\label{subsec:wass}
By \cref{cor:subman}, if $\M$ is compact and smoothly embedded into a smooth, connected finite-dimensional Riemannian manifold $(\widetilde{\M}, \widetilde{g})$, Assumptions \ref{assump:firstorder} and \ref{assump:thirdorder} are satisfied for $(\M, g)$ and $(X, d) = (\widetilde{\M}, d_{\widetilde{g}})$, where $g$ is the restriction of $\widetilde{g}$ to $\M$ and $d_{\widetilde{g}}$ is the geodesic distance on $(\widetilde{\M}, \widetilde{g})$.
As introduced in \cref{subsubsec:wass_riem}, the Wasserstein space $(\mathcal{P}_{2}(\R^N), W_2)$ is equipped with a formal Riemannian structure.
Now suppose that we have a topological embedding $\iota \colon \M \to (\mathcal{P}_{2}(\R^N), W_2)$. Recall that our results in \cref{sec:glconv} depend on the particular embedding $\iota$ of $\M$ into $(\mathcal{P}_{2}(\R^N), W_2)$, but for convenience of notation, we treat $W_2$ as a metric on $\M$. Take any $C^1$ path $\gamma \colon (-\delta, \delta) \to \M$. 
If $W_2^2$ is $C^2$ around the diagonal $D_{\M}$ of $\M\times \M$, then we can compute (\ref{eq:hessmetric}) via \cref{lemma:ot2}: for all $s \in (-\delta, \delta)$,
\begin{equation}
\label{eq:induced-w2-1}
    \frac{1}{2}\mathrm{Hess}_{\gamma(s)}\left(W_2^2(\cdot, \gamma(s)) \right)\left( \gamma'(s), \gamma'(s) \right) = \lim_{t \to 0} \frac{W_2^2\left( \gamma(s+t), \gamma(s) \right)}{t^2} = |(\iota \circ \gamma)'|^2(s)
\end{equation}
 where $|(\iota \circ \gamma)'|$ is the metric derivative of $\iota \circ \gamma$, defined in \cref{subsubsec:wass_riem}. One can show (\cref{lemma:dist-upper-bound}) that $\iota \circ \gamma$ is absolutely continuous. Therefore, as discussed in \cref{subsubsec:wass_riem}, there is a Borel measurable time-dependent vector field $\{v_s\}_{s \in (-\delta, \delta)}$ which satisfies both the continuity equation for $\iota \circ \gamma$ in the distributional sense and $v_s \in T_{\iota(\gamma(s))}\mathcal{P}_2(\R^N)$ for a.e. $s \in (-\delta, \delta)$. Moreover, for this vector field $\{v_s\}_{s \in (-\delta, \delta)}$ (which is uniquely determined for a.e. $s \in (-\delta, \delta)$), we have 
 \begin{equation}
\label{eq:induced-w2}
    \frac{1}{2}\mathrm{Hess}_{\gamma(s)}\left(W_2^2(\cdot, \gamma(s)) \right)\left( \gamma'(s), \gamma'(s) \right)= |(\iota \circ \gamma)'|^2(s) = \|v_s\|_{L^2(\iota(\gamma(s)))}^2
\end{equation}
for a.e. $s \in (-\delta, \delta)$. If the nondegeneracy assumption (Assumption \ref{assump:nondegen}) also holds, then $\M$ can be interpreted (formally) as a Riemannian submanifold of Wasserstein space. In comparison, Hamm et al. consider a bi-Lipschitz embedding $\iota \colon (\M, g) \to \left( \mathcal{P}_{ac}(\Omega), W_2 \right)$ of a Riemannian manifold $(\M, g)$ into the space of absolutely continuous probability measures $\mathcal{P}_{ac}(\Omega)$ on a compact, convex subset $\Omega \subset \R^N$ (along with several other assumptions) \cite{hamm2024manifoldlearningwassersteinspace}. They then (formally) pull back the Riemannian structure from Wasserstein-2 space to $\M$ \cite[Section 2.3]{hamm2024manifoldlearningwassersteinspace}. If $W_2^2$ is $C^2$ on a neighborhood of the diagonal $D_\M$ of $\M \times \M$ and the assumptions in \cite{hamm2024manifoldlearningwassersteinspace} are satisfied, then (\ref{eq:induced-w2}) corresponds with the Riemannian structure which is pulled-back from Wasserstein-2 space in \cite{hamm2024manifoldlearningwassersteinspace}. Similarly, (\ref{eq:induced-w2}) corresponds to the (formal) pull-back of the Wasserstein-2 Riemannian structure in \cite{chen2020optimal}, the Wasserstein information matrix in \cite{li2023wasserstein} and the intrinsic Wasserstein metric in \cite{arias2025embedding}, if $W_2^2$ is $C^2$ on a neighborhood of $D_\M$ and the assumptions in the respective works are satisfied. 

We now provide a few examples of embeddings of compact Riemannian manifolds into $(\mathcal{P}_2(\R^N), W_2)$ which satisfy Assumptions \ref{assump:firstorder} and \ref{assump:thirdorder}.

\begin{example}[Translations]
\label{example:translations}
    This example is similar to \cite[Section 3.3]{hamm2023wassmap} and \cite[Example 2.6]{hamm2024manifoldlearningwassersteinspace}. Fix any $\lambda \in \mathcal{P}_{2}(\R^N)$ which is absolutely continuous with respect to the Lebesgue measure. For each $\theta \in \R^N$, define the translation map $T_\theta \colon \R^N \to \R^N$ by $T_\theta(x) = x + \theta$. Then for any $\theta, \theta' \in \R^N,$ it is known (e.g., \cite[Lemma 3.5]{hamm2023wassmap}) that
    \[ W_2((T_\theta)_{\#} \lambda, (T_{\theta'})_{\#} \lambda) = |\theta - \theta'|, \]
    so if we let $\M$ be a compact, smoothly embedded submanifold of $\R^N$, 
    \begin{align*}
    \iota \colon \M &\to (\mathcal{P}_{2}(\R^N),W_2) \\
    \theta &\mapsto (T_{\theta})_{\#}\lambda
    \end{align*}
    satisfies Assumptions \ref{assump:firstorder} and \ref{assump:thirdorder} with $\M$ inheriting the typical Riemannian metric on $\R^N$.
    \end{example}

    \begin{example}[Dilations] 
    \label{example:dilations}
    This example is similar to \cite[Section 3.4]{hamm2023wassmap} and \cite[Example 2.7]{hamm2024manifoldlearningwassersteinspace}. Again fix any $\lambda \in \mathcal{P}_{2}(\R^N)$ which is absolutely continuous with respect to the Lebesgue measure. For each $\theta = [\theta_j]_{j \in [N]} \in \R^N$ such that $\theta_j > 0$ for all $j = 1, ..., N$, define $D_\theta \colon \R^N \to \R^N$ by $D_\theta(x) = [\theta_jx_j]_{j \in [N]}.$ Then \cite[Lemma 3.7]{hamm2023wassmap} states that
        \[ W_2^2((D_\theta)_{\#}(\lambda), (D_{\theta'})_{\#}(\lambda)) = \sum_{j = 1}^N |\theta_j - \theta'_j|^2 \int_{\R^N}|x_j|^2 d\lambda(x)\]
    for any $\theta, \theta' \in \R^N$ such that $\theta_j, \theta_j' > 0$ for all $j.$
    Therefore, if $\M$ is a compact, smoothly embedded submanifold of $\{ \theta \in \R^N \colon \theta_j > 0 \ \forall j \}$,
    \begin{align*}
    \iota \colon \M &\to (\mathcal{P}_{2}(\R^N),W_2) \\
    \theta &\mapsto (D_{\theta})_{\#}\lambda
    \end{align*}
    satisfies Assumptions \ref{assump:firstorder} and \ref{assump:thirdorder}, but with $\M$ equipped with the restriction of the Riemannian metric
    \[ g\left( \frac{\partial}{\partial x^i}, \frac{\partial}{\partial x^j} \right) = \delta_{ij} \int_{\R^N} |x_j|^2 d\lambda(x) \]
    defined on $\R^N.$ (Here $\left\{\frac{\partial}{\partial x^j}\right\}_{j \in [N]}$ are the standard coordinate vectors on $\R^N$.)
    \end{example}

    \begin{remark}
        For convenience, we have taken $\M$ to be a smooth manifold (without boundary) in this paper, though \cref{example:translations} and \cref{example:dilations} also work if $\M$ is a compact convex subset of $\R^N$, as in \cite[Example 2.6]{hamm2024manifoldlearningwassersteinspace}. If $x$ belongs to the interior of $\M$, then \cref{thm:glconv_unif} and \cref{thm:glconv_nonunif} also hold, following almost the same proof.
    \end{remark}

    \begin{remark}
        In \cite{arias2025embedding}, Arias-Castro and Qiao study location-scale families, which include both translations and dilations of some fixed $\lambda \in \mathcal{P}_2(\R^N)$ which is absolutely continuous with respect to the Lebesgue measure. If $N = 1$, or if $\lambda$ has a spherically symmetric density, they are able to derive a closed-form expression for the squared Wasserstein-2 distance between two elements of the location-scale family \cite[Section 5.1.5]{arias2025embedding}. One can then use these closed-form expressions to check the required smoothness for \cref{lemma:compactness-lemma} and construct examples similar to \cref{example:translations} and \cref{example:dilations}.
    \end{remark}
    
    \begin{example} [Submanifolds of the Bures-Wasserstein manifold] For a fixed $N \in \N$, the space of nondegenerate $N$-dimensional Gaussians can equipped with the Wasserstein-2 Riemannian structure in a way that makes it a finite-dimensional smooth Riemannian manifold $(\widetilde{\M}, \widetilde{g})$ (not just formal) with $d_{\widetilde{g}} = W_2$; this is known as the Bures-Wasserstein manifold, and we refer to \cite{chewi2024statisticaloptimaltransport} for more details (see also \cite{bhatia2019bures}). Hence, by \cref{cor:subman}, if $\M$ is a compact smooth manifold which is smoothly embedded into the Bures-Wasserstein manifold $(\widetilde{\M}, \widetilde{g})$ via $\iota \colon \M \to \widetilde{\M}$, taking $g$ to be the restriction of $\widetilde{g}$ to $\M$ makes it so that Assumptions \ref{assump:firstorder} and \ref{assump:thirdorder} are satisfied for $(\M, g)$ and the embedding $\iota$.
\end{example}

Additionally, in the case $N = 1$, $(\mathcal{P}_{2}(\R), W_2)$ is Hilbertian (see, e.g., \cite[Remark 2.30]{peyre2019computational} or \cite[Proposition 1.18]{chewi2024statisticaloptimaltransport}).

\subsection{The rotating ball (non-)example}
\label{subsec:cursed_example}
We warn that a topological embedding $\iota \colon \M \to X$ will not always satisfy Assumptions \ref{assump:firstorder} and \ref{assump:thirdorder}, even when $\M$ is compact and the ambient space $(X, d)$ is a real Hilbert space. More specifically, if $\iota$ is not smooth, then we cannot view $\M$ as a smoothly embedded submanifold of a real Hilbert space and apply \cref{cor:subman}. In fact, the embedding given in the introduction
\begin{align*}
    \iota \colon \S^1 &\to L^2(\R^2) \\
    \theta &\mapsto \frac{1}{\pi r^2} \ind_{B_r((\cos \theta, \sin \theta))},
\end{align*}
where $r \in (0, 1]$ is fixed,
is an example of this. Let $(\M, g)$ be $\S^1$ with its standard Riemannian metric. Consider a ball $B_1$ of radius $r$ centered at $(1, 0)$ and another ball $B_2$ of radius $r$ centered at $(\cos \theta, \sin \theta)$ for $\theta \in (0, \pi)$. If $B_1 \cap B_2$ is nonempty, the area of $B_1 \cap B_2$ can be computed via
\begin{align*}
\mathrm{area}(B_1 \cap B_2) &= 2\left(r^2 \arccos\left( \frac{\sin\left( \frac{\theta}{2} \right)}{r} \right) - \sin \left( \frac{\theta}{2}\right)\sqrt{r^2 - \sin^2\left( \frac{\theta}{2} \right)} \right) \\
&= 2\left( r^2 \left( \frac{\pi}{2} - \frac{\sin\left( \frac{\theta}{2} \right)}{r} + O\left( \left( \frac{\sin\left( \frac{\theta}{2} \right)}{r} \right)^3 \right) \right) - \sin \left( \frac{\theta}{2}\right)\sqrt{r^2 - \sin^2\left( \frac{\theta}{2} \right)} \right) \\
&= \pi r^2 - 2r\theta + O(\theta^3),
\end{align*}
from which it follows that
\[ \|\iota(\theta) - \iota(0)\|_{L^2}^2 = 2\left( \frac{1}{\pi r^2} \right)^2 \left(\pi r^2 - \mathrm{area}(B_1 \cap B_2) \right) = \Theta\left( \theta \right) \]
as $\theta \to 0^{+}$,
so $\|\iota(\theta) - \iota(0)\|_{L^2} = \Theta(\sqrt{\theta})$ cannot possibly satisfy Assumption \ref{assump:firstorder}. A similar observation was made in \cite{donoho2005image, grimes2003new} while studying Isomap when the ambient space is $L^2(\R^2)$. In contrast, with this embedding $\iota$, the Wasserstein-2 distance
\[ W_2^2(\iota(\theta), \iota(\varphi)) = |(\cos \theta, \sin \theta)-(\cos \varphi, \sin \varphi)|^2, \]
% for $z, w \in \S^1 \subset \R^2$
satisfies both Assumption \ref{assump:firstorder} and \ref{assump:thirdorder}; this is equivalent to considering the typical embedding of $\S^1$ as a unit circle in $\R^2$.
Perhaps a bit less obvious is that the $L^1(\R^2)$ norm satisfies both Assumption \ref{assump:firstorder} and \ref{assump:thirdorder} (with respect to $\left(\frac{4}{\pi r}\right)^2g$): for small $\theta > 0$,
\[ \|\iota(\theta) - \iota(0)\|_{L^1} = \frac{2}{\pi r^2}  \left(\pi r^2 - \mathrm{area}(B_1 \cap B_2)   \right) =\frac{2}{\pi r^2}  \left( 2r\theta + O(\theta^3) \right),  \]
so there exists $K, \eps_0 > 0$ such that
\[ \left| \| \iota(\theta) - \iota(0)\|_{L^1} - \frac{4|\theta|}{\pi r} \right| < K |\theta|^3 \]
for all $\theta \in [-\pi, \pi)$ such that $|\theta| < \eps_0$, which implies that there exists $\widetilde{K}, \widetilde{\eps}_0 > 0$ such that
\[ \| \iota(\theta) - \iota(0)\|_{L^1}^2 - \left(\frac{4|\theta|}{\pi r} \right)^2 = \left(  \| \iota(\theta) - \iota(0)\|_{L^1} - \frac{4|\theta|}{\pi r}\right)\left( \| \iota(\theta) - \iota(0)\|_{L^1} + \frac{4|\theta|}{\pi r} \right) \leq \widetilde{K}|\theta|^4 \]
for all $\theta \in [-\pi, \pi)$ such that $|\theta| < \widetilde{\eps}_0$. 
\begin{remark}
\label{rmk:laplace}
Since for all $\theta \in [0, 2\pi)$,
\[  \|\iota(\theta) - \iota(0)\|_{L^1} = \pi r^2 \|\iota(\theta) - \iota(0)\|_{L^2}^2, \]
applying the Laplacian eigenmaps algorithm with the $L^2(\R^2)$ norm and Gaussian kernel $k_\eps$ with parameter $\eps$ is equivalent to applying the Laplacian eigenmaps algorithm with the $L^1(\R^2)$ norm and \textit{Laplace} kernel
\[ \widetilde{k}_{\beta}(x,y) \vc = e^{-\frac{d(x, y)}{2\beta}} \]
with $\beta = \pi r^2\eps.$
We leave the study of the convergence of the graph Laplacians with other kernels to future work, though our numerical experiments in Section \ref{sec:numexp} suggest that the Gaussian kernel exhibits better performance than the Laplace kernel in the Laplacian eigenmaps algorithm.
\end{remark}

\section{Numerical experiments}
\label{sec:numexp}

In this section, we demonstrate the pointwise convergence of the graph Laplacian in \cref{thm:glconv_unif} with numerical experiments, using the rotating ball example from \cref{subsec:cursed_example}. For all experiments, we discretize $[-2, 2] \times [-2, 2]$ into images of size $128 \times 128$ and take $(\M, g)$ to be $\S^1$ with its standard Riemannian metric. Additionally, for convenience, we use the embeddings
\begin{align*}
    \iota_r \colon \S^1 &\to L^2\left( \overline{B_2(0)} \right) \\
    \theta &\mapsto \frac{1}{4r}\ind_{B_r((\cos \theta, \sin \theta))}
\end{align*}
instead of the normalization in (\ref{eq:iota}) so that by our calculations in \cref{subsec:cursed_example}, the $L^1(\R^2)$ norm $\theta, \varphi \mapsto \|\iota_r(\theta) - \iota_r(\varphi)\|_{L^1}$ locally approximates the geodesic distance $d_g$ on $(\M, g)$ for all $r \in (0, 1]$. We confirm this in \cref{fig:exp2_dist}. 
We test pointwise convergence on $f(\theta) \vc= \sin(\theta)$, whose Laplacian can be explicitly computed as $\Delta_{g}f(\theta) = \sin(\theta)$ for all $\theta \in [0, 2\pi)$. Each graph Laplacian constructed from $n$ samples is multiplied by $c_n \vc = \frac{4\pi}{\eps_n(2\pi \eps_n)^{1/2}n}$, where $\eps_n = 2n^{-\frac{1}{m+2 + \alpha}}$ and $\alpha = 0.01$, so that if the assumptions of \cref{thm:glconv_unif} are satisfied, for each $\theta \in [0, 2\pi)$, the limit of $c_nL^{(\eps_n, n)} f(\theta)$ as $n \to \infty$ should be $\Delta_g f(\theta)$ (almost surely).
The code to reproduce the results in this section can be found at \href{https://github.com/lzx23/manifold_learning_metric_spaces}{https://github.com/lzx23/manifold\_learning\_metric\_spaces}.
\subsection{Comparison of different metrics}
\label{subsec:diffmetrics}
For this experiment, we fix $r = 0.75.$
We compare several different metrics which we have already discussed:
\begin{itemize}[label = {--}]
    \item \textbf{$\boldsymbol{L^1(\R^2)}$ norm:} as discussed in \cref{subsec:cursed_example}, 
    \[ d(\theta, \varphi)= \|\iota_r(\theta) - \iota_r(\varphi)\|_{L^1} \]
    locally approximates the geodesic distance on $\M$ with the standard metric.

    \item \textbf{$\boldsymbol{L^2(\R^2)}$ norm:} for our experiments, this refers to the metric induced by the $L^2(\R^2)$ norm \textit{with the following normalization}:
    \[ d(\theta, \varphi) = \sqrt{4r} \|\iota_r(\theta) - \iota_r(\varphi)\|_{L^2} . \]
    This normalization satisfies
    \[ \left( \sqrt{4r} \|\iota_r(\theta) - \iota_r(\varphi)\|_{L^2} \right)^2 = \| \iota_r(\theta) - \iota_r(\varphi)\|_{L^1} \]
    for all $\theta, \varphi \in \S^1$, which allows us to simultaneously investigate the effect of using a Laplace kernel instead of a Gaussian kernel (see \cref{rmk:laplace}).

    \item \textbf{$\boldsymbol{\R^2}$ norm:} this refers to the metric
    \[ d(\theta, \varphi )= \left|(\cos \theta, \sin \theta) - (\cos \varphi, \sin \varphi) \right|. \]
    As discussed in the introduction, for each $\theta, \varphi \in \S^1$, this agrees with the Wasserstein-2 metric between $\mu_\theta,\mu_\varphi \in \mathcal{P}_2(\R^2)$ which have densities given by normalizing $\iota_r(\theta)$ and $\iota_r(\varphi)$, respectively.
\end{itemize}
\begin{figure}[h!]
    \centering
    \includegraphics[width=0.7\linewidth]{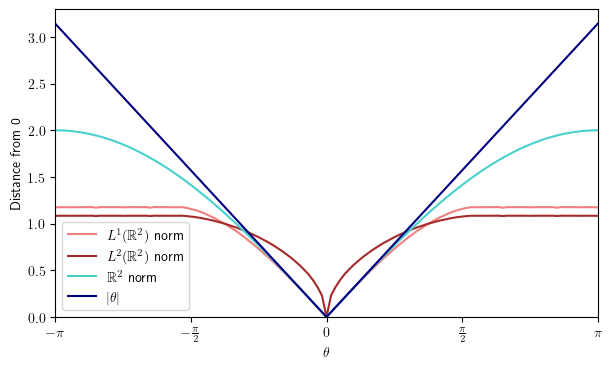}
    \caption{Comparison of different metrics.}
    \label{fig:exp1_dist}
\end{figure}
In \cref{fig:exp1_dist}, we plot $\theta \mapsto d(\theta, 0)$ for $\theta \in [-\pi,\pi]$ for each of the aforementioned metrics. We also include the geodesic distance, which for $\theta \in [-\pi, \pi]$ is given by $d_g(\theta, 0) = |\theta|$, for comparison.
\cref{fig:exp1_dist} confirms our calculations in \cref{subsec:cursed_example}: the $L^1(\R^2)$ norm approximates $d_g(\theta, 0) = |\theta|$ around 0, whereas the $L^2(\R^2)$ norm behaves like $\sqrt{|\theta|}$ around 0. 

Although the $L^1(\R^2)$ and $\R^2$ norm both approximate the geodesic distance $d_g$ around 0, \cref{fig:exp1_dist} suggests that the $\R^2$ norm provides a better approximation. 
In \cref{fig:exp1_pwconv}, we draw $n$ samples $\theta_1, ..., \theta_n$ i.i.d. from the uniform distribution on $(\M, g)$ and plot the discrete Laplacian of $f$ constructed with each of the metrics in \cref{fig:exp1_dist} (and the Gaussian kernel). We see that the discrete Laplacian with the $\R^2$ norm converges to $\Delta_g f$ much faster than the discrete Laplacian with $L^1(\R^2)$ norm, whereas the discrete Laplacian with the $L^2(\R^2)$ norm (equivalently, the discrete Laplacian with a Laplace kernel and $L^1(\R^2)$ norm) does not converge to $\Delta_g f$. 
\begin{figure}[h!]
\hspace{80pt}
    \includegraphics[width=0.8\linewidth]{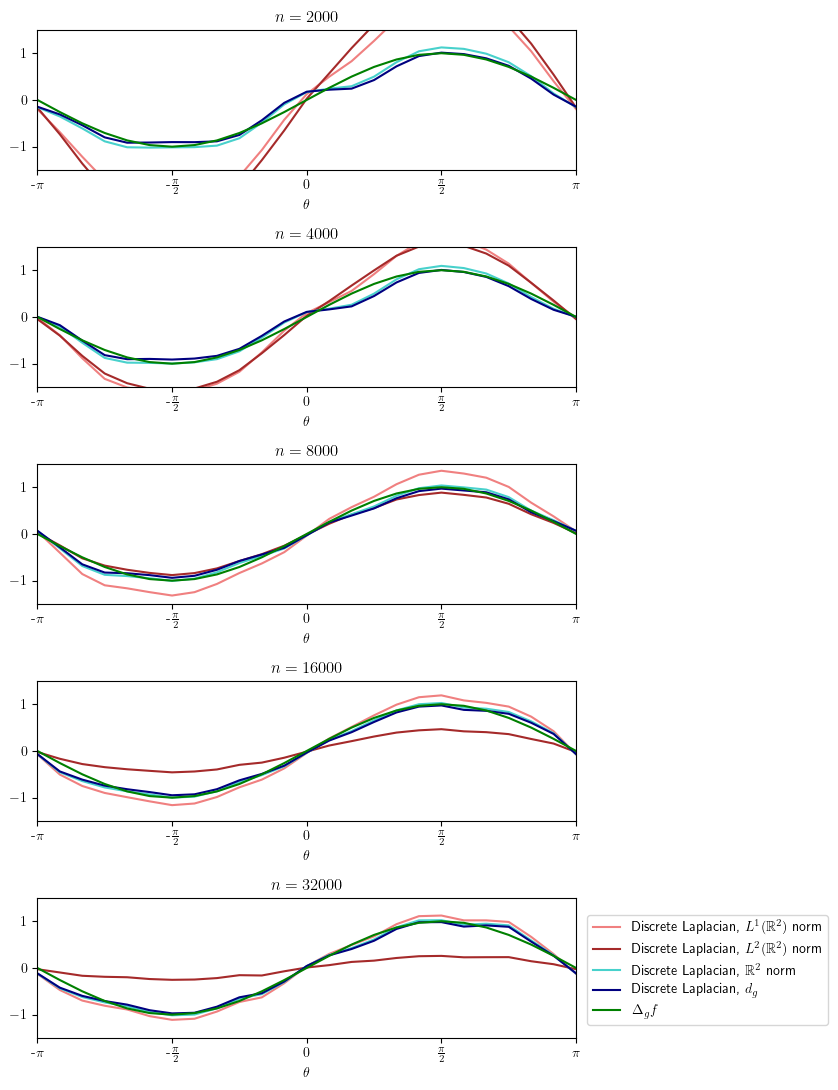}
    \caption{Comparison of the discrete Laplacian for different metrics.}
    \label{fig:exp1_pwconv}
\end{figure}

Although we did not study spectral convergence in this work, we are ultimately motivated by the use of various metrics in the Laplacian eigenmaps and diffusion maps algorithms. Therefore, we also plot the Laplacian eigenmap embeddings obtained from the first two non-trivial eigenvectors of the normalized graph Laplacian in \cref{fig:exp1_eigen}. Again we see that using metrics which better approximate $d_g$ results in embeddings which better capture the geometry of $(\M, g)$, especially at lower sample sizes.

\begin{figure}[h!]
    \centering
    \includegraphics[width=.7\linewidth]{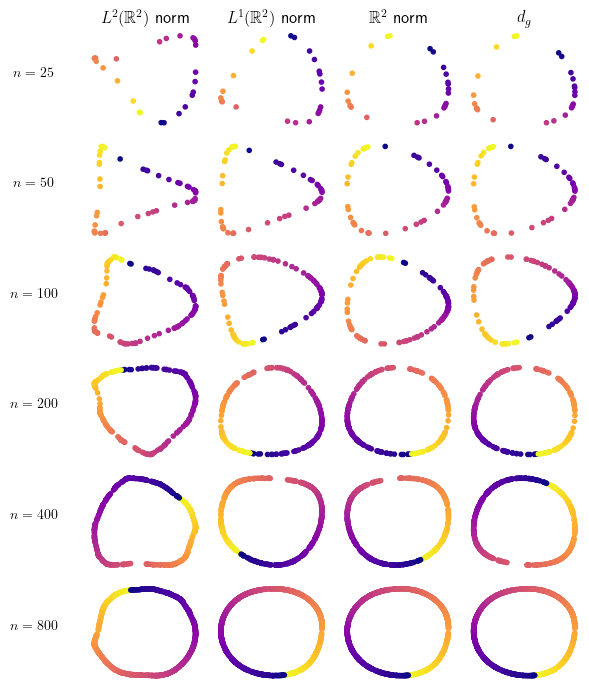}
    \caption{Laplacian eigenmap embeddings using the first two non-trivial eigenvectors $v^{(1)}, v^{(2)}$ of the normalized graph Laplacian $\mathcal{L}^{(\eps_n, n)}$ constructed with different metrics. More specifically, $\theta_1, ..., \theta_n$ are once again drawn i.i.d. from the uniform distribution on $\S^1$. The first two non-trivial eigenvectors $v^{(1)}, v^{(2)}$ of $\mathcal{L}^{(\eps_n, n)}$ with the specified metric are computed, and each sample $\theta_i$ is mapped to $\left(v^{(1)}_i, v^{(2)}_i\right)$ and colored by the true angle $\theta_i$.}
    \label{fig:exp1_eigen}
\end{figure}

\subsection{Changing radii}

To obtain finer control over the quality of the approximation of $d_g,$ we now consider varying the radius $r$ of our rotating ball. Indeed, we have the following lemma.

\begin{lemma}
    \label{lemma:incradii}
    If $0 < r_1 < r_2 \leq 1$, then $\|\iota_{r_1}(\theta) - \iota_{r_1}(0)\|_{L^1} \leq \|\iota_{r_2}(\theta) - \iota_{r_2}(0)\|_{L^1}$ for any $\theta \in [-\pi, \pi]$, and the inequality is strict for $\theta \neq 0$.
\end{lemma}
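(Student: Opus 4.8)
The plan is to reduce the claim to the monotonicity in $r$ of a one-variable quantity obtained by a scaling argument. Write $c_\theta \vc= (\cos\theta, \sin\theta)$ and note that $\iota_r(\theta) - \iota_r(0) = \frac{1}{4r}\bigl(\ind_{B_r(c_\theta)} - \ind_{B_r(c_0)}\bigr)$, so that
\[ \|\iota_r(\theta) - \iota_r(0)\|_{L^1} = \frac{1}{4r}\,\mathrm{area}\!\left(B_r(c_\theta) \,\triangle\, B_r(c_0)\right), \]
where $\triangle$ denotes symmetric difference (for $r \le 1$ both disks lie inside $\overline{B_2(0)}$, so the domain of integration is immaterial). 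By the symmetry $\theta \mapsto -\theta$ I may assume $\theta \in [0,\pi]$; the case $\theta = 0$ gives equality on both sides, so fix $\theta \in (0,\pi]$ and set $D \vc= |c_\theta - c_0| = 2\sin(\theta/2) > 0$, which is independent of $r$. First I would rescale by $x \mapsto x/r$: since $B_r(c) = r\,B_1(c/r)$, area scales by $r^2$, and the symmetric-difference area of two unit disks depends only on the distance between their centers, this gives
\[ \|\iota_r(\theta) - \iota_r(0)\|_{L^1} = \frac{r}{4}\, g\!\left(\tfrac{D}{r}\right) = \frac{D}{4}\, h\!\left(\tfrac{D}{r}\right), \qquad h(t) \vc= \frac{g(t)}{t}, \]
where $g(d)$ is the symmetric-difference area of two unit disks whose centers are at distance $d$.

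With this normalization, $r \mapsto \|\iota_r(\theta)-\iota_r(0)\|_{L^1}$ is strictly increasing in $r$ (for fixed $D>0$) if and only if $h$ is strictly decreasing on $(0,\infty)$, because $r_1 < r_2$ gives $D/r_1 > D/r_2$. Writing $a(d)$ for the intersection area of two unit disks at distance $d$, one has $g(d) = 2(\pi - a(d))$, with $a(d) = 0$ for $d \ge 2$ and $a(d) = 2\arccos(d/2) - \frac{d}{2}\sqrt{4-d^2}$ for $d \in [0,2]$. For $t \ge 2$ I get $h(t) = 2\pi/t$, which is manifestly strictly decreasing, so the only real content is the overlapping regime $t \in (0,2)$. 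A direct differentiation yields
\[ h'(t) = \frac{2\bigl(a(t) - t\,a'(t) - \pi\bigr)}{t^2}, \]
so it suffices to show $\phi(t) \vc= a(t) - t\,a'(t) < \pi$ on $(0,2)$.

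The key step that keeps the computation short is the chord identity $a'(t) = -\sqrt{4-t^2}$ (the rate of change of the lens area equals minus the length of the common chord of the two circles), which collapses $\phi$ to $\phi(t) = 2\arccos(t/2) + \frac{t}{2}\sqrt{4-t^2}$. Substituting $t = 2\sin\alpha$ with $\alpha \in (0,\pi/2]$ then gives
\[ \phi(t) - \pi = \sin(2\alpha) - 2\alpha < 0, \]
using $\sin x < x$ for $x > 0$, so $h' < 0$ on $(0,2)$. Since $h$ is continuous at $t = 2$ and strictly decreasing on each of $(0,2)$ and $[2,\infty)$, it is strictly decreasing on all of $(0,\infty)$, which finishes the argument. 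I expect the main obstacle to be bookkeeping in the intersection-area computation, and the chord identity together with the substitution $t = 2\sin\alpha$ is what avoids it; one should also be careful that the \emph{global} monotonicity of $h$ (needed precisely when $r_1$ and $r_2$ straddle the tangency threshold $D = 2r$) comes from continuity across $t = 2$, not merely from piecewise monotonicity.
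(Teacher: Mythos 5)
Your proof is correct. At its core it is the same computation as the paper's: both differentiate the lens-area formula with respect to the radius parameter, and your final inequality is the paper's in disguise — the paper reduces to showing $g(x) = \frac{\pi}{2} - \arccos(x) - x\sqrt{1-x^2} > 0$ on $(0,1)$, which under your substitution $x = \sin\alpha$ becomes $\frac{1}{2}\left(2\alpha - \sin(2\alpha)\right) > 0$, i.e., exactly your statement $\phi(t) - \pi < 0$; the paper proves it by a second differentiation ($g(0) = 0$, $g'(x) = 2x^2/\sqrt{1-x^2} > 0$) where you quote $\sin u < u$. The organization, however, is genuinely different, and yours buys something. The paper fixes $\theta$, restricts at the outset to the regime where the disks of radius $r_1$ overlap, writes out $\|\iota_r(\theta) - \iota_r(0)\|_{L^1}$ explicitly, and shows $\partial f/\partial r > 0$ by direct differentiation in $r$ followed by the change of variables $x = \sin(\theta/2)/r$. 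You instead rescale to unit disks, factor the norm as $\frac{D}{4}h(D/r)$ with $h(t) = g(t)/t$, and prove global strict monotonicity of the single-variable function $h$; the chord identity $a'(t) = -\sqrt{4-t^2}$ (which I verified) collapses the differentiation that the paper grinds through term by term. More substantively, your continuity argument at $t = 2$ handles all regimes at once, including the mixed one where the disks are disjoint at radius $r_1$ but overlap at radius $r_2$. The paper's opening reduction (``it suffices to prove the statement for $\theta \in (0,\pi]$ and when $\supp\iota_{r_1}(\theta) \cap \supp\iota_{r_1}(0)$ is non-empty'') silently delegates that mixed case and the trivial disjoint-disjoint case to the reader; justifying the mixed case requires precisely the continuity-across-tangency observation you make explicit, so your treatment is the more complete of the two on this point.
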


We prove \cref{lemma:incradii} in \cref{app:incradii}.
As a direct corollary, if $r_1 < r_2$, then $\|\iota_{r_2}(\theta) - \iota_{r_2}(0)\|_{L^1}$ is a better approximation of $d_g(\theta, 0)$ than $\|\iota_{r_1}(\theta) - \iota_{r_1}(0)\|_{L^1}$ in the following sense:

\begin{corollary}
    Suppose $0 < r_1 < r_2 \leq 1$, and that $\theta, \varphi \mapsto \|\iota_{r_1}(\theta) - \iota_{r_1}(\varphi)\|_{L^1}^2$ satisfies Assumption \ref{assump:thirdorder} for some $K > 0$ and $\eps_0 \in (0, \pi)$, i.e., 
    \[ |\theta|^2 - \|\iota_{r_1}(\theta) - \iota_{r_1}(0)\|_{L^1}^2 < K |\theta|^4 \]
    for all $|\theta| < \eps_0$. Then there exists $\widetilde{\eps_0} > \eps_0$ such that
    \[ |\theta|^2 - \|\iota_{r_2}(\theta) - \iota_{r_2}(0)\|_{L^1}^2 < K |\theta|^4 \]
    for all $|\theta| < \widetilde{\eps_0}.$
\end{corollary}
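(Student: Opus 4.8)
The plan is to compare the two radii pointwise using the strict monotonicity of \cref{lemma:incradii} and then to invoke continuity to open up the interval slightly past $\eps_0$. For $i \in \{1,2\}$ define
\[ \Phi_i(\theta) \vc= |\theta|^2 - \|\iota_{r_i}(\theta) - \iota_{r_i}(0)\|_{L^1}^2 - K|\theta|^4, \]
so that the hypothesis on $r_1$ reads $\Phi_1(\theta) < 0$ for $0 < |\theta| < \eps_0$, and the desired conclusion is that $\Phi_2(\theta) < 0$ for $0 < |\theta| < \widetilde{\eps_0}$ for some $\widetilde{\eps_0} > \eps_0$. (At $\theta = 0$ all terms vanish, so that value is excluded, exactly as in \cref{assump:thirdorder} when $x = y$.) Each $\Phi_i$ is even in $\theta$ because the overlap geometry defining $\|\iota_r(\theta)-\iota_r(0)\|_{L^1}$ is invariant under $\theta \mapsto -\theta$, so it suffices to work on $(0,\pi]$.

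First I would record the pointwise comparison. By \cref{lemma:incradii}, for every $\theta \in (0,\pi]$ we have $\|\iota_{r_1}(\theta)-\iota_{r_1}(0)\|_{L^1} < \|\iota_{r_2}(\theta)-\iota_{r_2}(0)\|_{L^1}$; both sides are nonnegative, so squaring preserves the strict inequality and hence
\[ \Phi_2(\theta) < \Phi_1(\theta) \quad \text{for all } \theta \in (0,\pi]. \]
Combined with the hypothesis this immediately gives $\Phi_2(\theta) < \Phi_1(\theta) < 0$ for $0 < |\theta| < \eps_0$, so the inequality already holds on the original interval. The entire content of the corollary is therefore the extension slightly past $\eps_0$.

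The key step uses strictness at the endpoint. Since $\Phi_1$ is continuous and negative on $(0,\eps_0)$, letting $\theta \to \eps_0^{-}$ gives $\Phi_1(\eps_0) \le 0$; as $\eps_0 \neq 0$, the strict comparison above then yields
\[ \Phi_2(\eps_0) < \Phi_1(\eps_0) \le 0, \]
that is, $\Phi_2(\eps_0) < 0$ \emph{strictly}. This is precisely where the strictness in \cref{lemma:incradii} is indispensable: without it one could only conclude $\Phi_2(\eps_0) \le 0$, which is too weak to push the open interval further. Now $\Phi_2$ is continuous (the overlap area $\mathrm{area}(B_1 \cap B_2)$, and hence $\theta \mapsto \|\iota_{r_2}(\theta)-\iota_{r_2}(0)\|_{L^1}^2$, depends continuously on $\theta$, as computed in \cref{subsec:cursed_example}), so from $\Phi_2(\eps_0) < 0$ there is some $\delta \in (0,\pi - \eps_0]$ with $\Phi_2 < 0$ on $[\eps_0, \eps_0 + \delta)$. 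Together with $\Phi_2 < 0$ on $(0,\eps_0)$ and evenness, this gives $\Phi_2(\theta) < 0$ for all $0 < |\theta| < \eps_0 + \delta$, and setting $\widetilde{\eps_0} \vc= \eps_0 + \delta > \eps_0$ completes the argument. The only genuinely delicate point is securing the \emph{strict} sign at the endpoint $\eps_0$; everything else reduces to the comparison from \cref{lemma:incradii} and a one-sided continuity estimate.
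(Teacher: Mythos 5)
Your proposal is correct and follows essentially the same route as the paper's own proof: extend the $r_1$ inequality to the closed interval $|\theta|\leq\eps_0$ by continuity, use the strict inequality of \cref{lemma:incradii} at the endpoints to get a strictly negative value for $r_2$ there, and then push past $\eps_0$ by continuity. Your write-up merely makes explicit (via $\Phi_i$, evenness, and the choice $\delta\leq\pi-\eps_0$) steps the paper leaves terse.
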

\begin{proof}
    This follows from \cref{lemma:incradii} and the fact that $\theta \mapsto \|\iota_{r}(\theta) - \iota_{r}(0)\|_{L^1}^2$ is continuous for any fixed $r$. If $0 < r_1 < r_2 \leq 1$ and 
    \[ |\theta|^2 - \|\iota_{r_1}(\theta) - \iota_{r_1}(0)\|_{L^1}^2 < K |\theta|^4 \]
    for all $|\theta| < \eps_0$, then, since both sides are continuous,
    \[ |\theta|^2 - \|\iota_{r_1}(\theta) - \iota_{r_1}(0)\|_{L^1}^2 \leq K |\theta|^4 \]
    for all $|\theta| \leq \eps_0$. In particular, for $\theta \in \{-\eps_0, \eps_0\}$,
     \[ |\theta|^2 - \|\iota_{r_2}(\theta) - \iota_{r_2}(0)\|_{L^1}^2 < |\theta|^2 - \|\iota_{r_1}(\theta) - \iota_{r_1}(0)\|_{L^1}^2 \leq K |\theta|^4 \]
     where the first inequality is strict by \cref{lemma:incradii}. Hence there exists $\widetilde{\eps_0} > \eps_0$ such that
     \[ |\theta|^2 - \|\iota_{r_2}(\theta) - \iota_{r_2}(0)\|_{L^1}^2 < K |\theta|^4 \]
    for all $|\theta| < \widetilde{\eps_0}.$

\end{proof}

\begin{figure}[h!]
    \centering
    \includegraphics[width=0.7\linewidth]{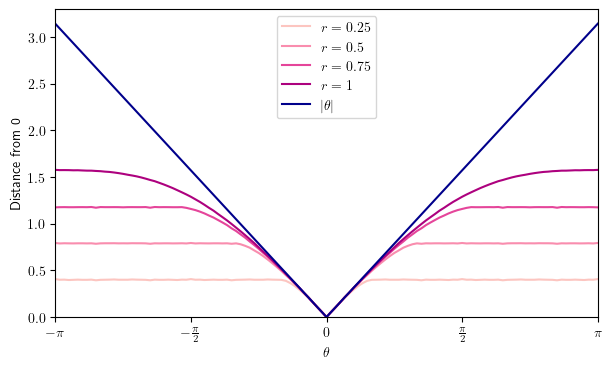}
    \caption{The $L^1(\R^2)$ norm $\|\iota_r(\theta) - \iota_r(0)\|_{L^1}$ with balls of varying radii $r$.}
    \label{fig:exp2_dist}
\end{figure}

\begin{figure}[h!]
    \hspace{35pt}
    \includegraphics[width=0.8\linewidth]{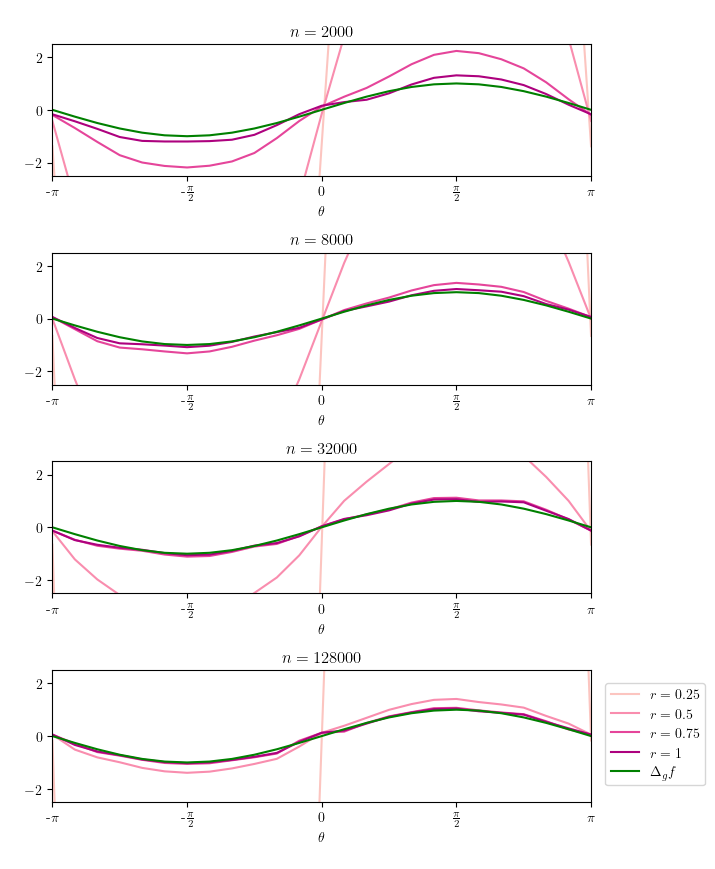}
    \caption{Comparison of the discrete Laplacian with the metric induced by the $L^1(\R^2)$ norm $\theta, \varphi \mapsto \|\iota_r(\theta) - \iota_r(\varphi)\|_{L^1}$ for different values of $r$.}
    \label{fig:exp2_pwconv}
\end{figure}

We confirm in \cref{fig:exp2_dist} that the $L^1(\R^2)$ norm $\|\iota_r(\theta) - \iota_r(0)\|_{L^1}$ locally approximates the geodesic distance $d_g(\theta, 0) = |\theta|$ for each $r$, but that larger $r$ result in better approximations. 

As in \cref{subsec:diffmetrics}, we draw $\theta_1, ..., \theta_n$ i.i.d. from the uniform distribution on $(\M, g)$.
We compare the resulting discrete Laplacians in \cref{fig:exp2_pwconv} and see that a larger radius $r$, which gives us a better approximation of $d_g$, also results in faster convergence of the discrete Laplacians. Similarly, in \cref{fig:exp2_eigen}, we observe that the Laplacian eigenmap embeddings capture the geometry of $(\M, g)$ much better for larger $r$, and this effect is most pronounced for lower sample sizes.

\begin{figure}[h!]
    \centering
    \includegraphics[width=.7\linewidth]{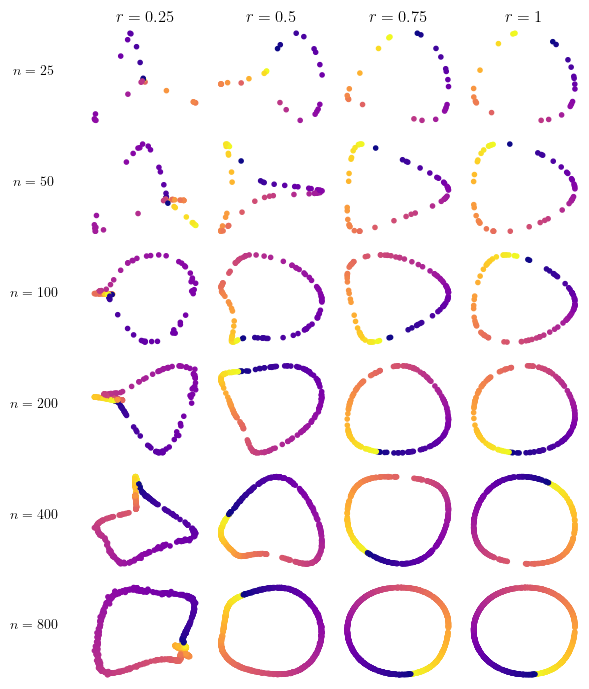}
    \caption{Laplacian eigenmaps embedding, again using the first two non-trivial eigenvectors of the normalized graph Laplacian $\mathcal{L}^{(\eps_n, n)}$. The procedure is the same as \cref{fig:exp1_eigen}, except that we use the metrics $\theta, \varphi \mapsto \|\iota_r(\theta) - \iota_r(\varphi)\|_{L^1}$ for $r = 0.25, 0.5, 0.75, 1.$}
    \label{fig:exp2_eigen}
\end{figure}

\section{Conclusion}
In this paper, we provided sufficient conditions for the pointwise convergence of graph Laplacians when a manifold $\M$ is embedded in a metric space $(X, d).$ We also included non-asymptotic upper bounds on the bias error and demonstrated numerically that due to limited sample sizes, the choice of metric can have a sizable impact on the convergence of graph Laplacians in practice.

Our work suggests several directions for future research. One direction would be to generalize other results from the Euclidean case, e.g., \cite{garcia2020error, cheng2022eigen, hein2005graphs, coifman2005geometric, coifman2006diffusion, garcia2020error, cheng2022eigen,ting2010analysis, calder2022improved, cheng2022knnselftune, cheng2024bi}. Of particular interest for manifold learning is obtaining non-asymptotic bounds for spectral convergence similar to \cref{prop:nonasymp} and determining whether they are tight. Another direction would be to study the stability of the Laplacian eigenmap embedding if we use an approximation of the metric $d$ instead of $d$ itself in the graph Laplacian. For example, in practice, it may be preferable to use an entropy-regularized approximation of the Wasserstein distance \cite{cuturi2013sinkhorn} or a linearization of the Wasserstein-2 distance \cite{wang2013linear, letrouit2024gluing, greengard2022linearization}, since computing the pairwise Wasserstein-2 distance between many measures can be computationally expensive. Lastly, for an embedding $\iota \colon \M \to (\mathcal{P}_2(\R^N), W_2)$, it would be of interest to characterize when $W_2^2$ is regular around the diagonal of $\M \times \M$ and to better understand whether this is a good assumption for various practical applications.

\section*{Acknowledgments}
A.S. and L.X. were supported in part by AFOSR FA9550-23-1-0249, the Simons Foundation Math+X Investigator Award, NSF DMS 2009753, NSF DMS 2510039 and NIH/NIGMS R01GM136780-01. L.X. was supported in part by NSF Graduate Research Fellowship DGE-2039656 and DGE-2444107. Any opinions, findings,
and conclusions or recommendations expressed in this material are those of the authors
and do not necessarily reflect the views of the National Science Foundation. We thank Ruiyi Yang and Roy Lederman for helpful discussions. We also thank the anonymous reviewers for their comments, which greatly improved the manuscript.

\clearpage

\bibliographystyle{abbrv}
\bibliography{refs, gl_refs}
\newpage
\appendix

\section{Review of Riemannian geometry}
    \label{subsec:diff-geo}
    We review some concepts from Riemannian geometry that we use in the paper. We refer to standard textbooks in Riemannian geometry, e.g., \cite{do1992riemannian,petersen, lee-riem-mani}, for more details; these textbooks usually assume that the Riemannian metric $g$ is $C^\infty$, though several results also hold when $g$ is just $C^2$. We say a Riemannian metric $g$ on $\M$ is $C^k$ for $k \in \Z_{\geq 0}$ if for any chart $\varphi = (x^1, \ldots, x^m) \colon U \to V \subset \R^m$,
\begin{equation}
\label{eq:metric-in-coords}
g_{ij} \vc = g\left( \frac{\partial}{\partial x_i}, \frac{\partial}{\partial x_j} \right) 
\end{equation}
is $C^k$ on $U$ for all $i, j \in [m]$. 
We will focus on when $k \geq 2$, since in this setting, the existence, uniqueness and regularity of geodesics (with respect to the Levi-Civita connection) follows from standard ODE theory. 
See \cite{samann2018geodesics} for results when $g$ is not $C^2$.

To see this, fix a chart $\varphi = (x^1, \ldots, x^m) \colon U \to V$, define $g_{ij}$ as in (\ref{eq:metric-in-coords}) and define $g^{ij}$ by $[g^{ij}] = [g_{ij}]^{-1}$. Let $I \subset \R$ be an open interval.
If $g$ is $C^k$ for $k \geq 2$, then the Christoffel symbols
\[ \Gamma_{ij}^l = \frac{1}{2} \sum_{p =1}^m g^{pl} \left( \frac{\partial g_{jp}}{\partial x_i} + \frac{\partial g_{pi}}{\partial x_j} - \frac{\partial g_{ij}}{\partial x_p} \right) \]
are $C^{k-1}$, and a $C^2$ path $\gamma \colon I \to U$ is a \textit{geodesic} if and only if the geodesic equation
\begin{equation}
\label{eq:geodesic-eq}
\frac{d^2 \gamma^l}{dt^2} + \sum_{i, j \in [m]} (\Gamma_{ij}^l \circ \gamma) \frac{d\gamma^i}{dt} \frac{d \gamma^j}{dt} = 0, 
\end{equation}
where $\gamma^i \vc = x^i \circ \gamma$,
holds for all $l \in [m]$. (In Assumption \ref{assump:firstorder} and \cref{thm:c4}, we use the terminology \textit{geodesic segment}; following \cite{do1992riemannian}, a geodesic segment is the restriction of a geodesic $\gamma \colon I \to \M$ to some compact interval $[a, b] \subset I$.) Upon rearranging (\ref{eq:geodesic-eq}), we see that finding a geodesic $\gamma \colon I \to U$ such that $\gamma(0) = p, \gamma'(0) = v$ for $p \in U, v = \sum_{l} v^l \frac{\partial}{\partial x_l} \big|_p \in T_p\M$ is equivalent to 
finding a $C^2$ solution $(\gamma, \widetilde{\gamma}) \colon I \to TU$ to 
the initial value problem
\begin{equation}
\label{eq:geo-flow}
\begin{cases}
    \frac{d\gamma^l}{dt} = \widetilde{\gamma}^l & \\
    \frac{d\widetilde{\gamma}^l}{dt} = -\sum_{i, j} (\Gamma_{ij}^l \circ \gamma) \widetilde{\gamma}^i \widetilde{\gamma}^j & \text{ \ \ \ \ \ for }l \in [m] \\
    \gamma^l(0) = x^l(p), \widetilde{\gamma}^l(0) = v^l& 
    \end{cases}.
\end{equation}
For a fixed initial condition $p \in U, v \in T_p\M$, since the Christoffel symbols $\Gamma_{ij}^l$ are $C^{k-1}$ for $k \geq 2$ (hence locally Lipschitz), by Picard-Lindel\"of, there exists a unique solution $(\gamma, \widetilde{\gamma}) \colon \left(T_{-}, T_{+}\right) \to TU$ to (\ref{eq:geo-flow}), where $\left(T_{-}, T_{+}\right)$ is the maximal interval on which the solution exists. Since the Christoffel symbols are $C^{k-1}$, $\widetilde{\gamma}$ is $C^k$, so our geodesic $\gamma$ is $C^{k+1}$ on $(T_{-}, T_{+})$. 

If we allow our initial conditions to vary, i.e., we want to solve
\begin{equation}
\label{eq:geo-flow-2}
\begin{cases}
    \frac{\partial \gamma^l}{\partial t}(t, p, v) = \widetilde{\gamma}^l(t, p, v) & \\
    \frac{\partial \widetilde{\gamma}^l}{\partial t} = -\sum_{i, j} (\Gamma_{ij}^l \circ \gamma) \widetilde{\gamma}^i \widetilde{\gamma}^j & \text{ \ \ \ \ \ for }l \in [m] \\
    \gamma^l(0, p, v) = x^l(p), \widetilde{\gamma}^l(0, p, v) = v^l& 
    \end{cases},
\end{equation}
then results from ODE theory \cite[Chapter V]{hartman2002ordinary} imply that the solution $(\gamma, \widetilde{\gamma})$ is $C^{k-1}$ on its domain of existence $D \vc= \{ (t, p, v): p \in U, v \in T_p\M, T_{-}(p, v) < t < T_{+}(p, v) \}$. Since $T_{-}$ is upper semicontinuous and $T_{+}$ is lower semicontinuous \cite[Chapter V, Theorem 2.1]{hartman2002ordinary}, $D$ is open. For any $p \in U$, $(1, p, 0) \in D$, so the \textit{exponential map}
\[ \exp_p(v) \vc= \gamma(1, p, v) \]
is well-defined and $C^{k-1}$ on a neighborhood of the zero section of $T\M$. 

The exponential map has several nice properties, which can be found in a standard textbook on Riemannian geometry (e.g., \cite{do1992riemannian,petersen, lee-riem-mani}). We list some of the ones that we use in this paper:
\begin{itemize}[label = {--}]
\item The path $\eta(t)\vc = \exp_p(tv)$ is the unique geodesic such that $\eta(0) = p, \eta'(0) = v.$ The speed $t \mapsto \sqrt{g_{\eta(t)}(\eta'(t), \eta'(t))}$ along a geodesic $\eta$ is constant.
    \item One can check (e.g., \cite[Proposition 5.5.1]{petersen}) that for any $p \in \M$,
$d\exp_p|_{0}$ is invertible (in fact, it is the canonical isomorphism from $T_0T_p\M$ to $T_p\M$),
so by the inverse function theorem, there exists an open neighborhood of $0$ on which $\exp_p$ is a $C^{k-1}$ diffeomorphism. Hence the \textit{injectivity radius}
\[\mathrm{inj}_{(\M, g)}(p) \vc = \sup \{ \delta > 0 \colon \exp_p \text{ is a } C^{k-1}\text{-diffeomorphism on } B_\delta(0) \subset T_p \M \}\]
is strictly positive for all $p \in \M$. Here, for $\delta > 0$, $B_\delta(0) \subset T_p(\M)$ denotes the set $\{v \in T_p(\M) \colon \sqrt{g_p(v, v)} < \delta \}.$
The use of $\exp_p$ as a coordinate chart around $p$ is known as \textit{normal coordinates} around $p$; we use this in the proof of \cref{prop:nonasymp}.
\item For any $p \in \M$, if $\sqrt{g_p(v, v)} < \mathrm{inj}_{(\M, g)}(p)$, then 
$\eta(t) \vc = \exp_p\left(tv\right), t \in [0, 1]$ is the unique (up to reparametrization) length-minimizing piecewise $C^1$ path from $p$ to $\exp_p(v),$ so
\begin{equation}
\label{eq:geo-dist}
d_g^2(p, \exp_p(v)) = g_p(v, v).
\end{equation}
Moreover, $\eta$ must be length-minimizing between $\eta(s)$ and $\eta(t)$ for any $s, t \in [0, 1]$, so
\[ d_g^2(\eta(s), \eta(t)) = |t-s|^2 g_p(v, v) \]
for all $s, t \in [0, 1]$ such that $s < t$.
    \item We also have that for any $p \in \M$, there exists an open neighborhood $V$ of $(p, 0) \in T \M$ such that $F(p, v) \vc= (p, \exp_p(v))$
     is a $C^{k-1}$ diffeomorphism on $V$, so the local inverse $F^{-1} \colon F(V) \to V$ is a $C^{k-1}$ diffeomorphism on an open neighborhood of $F(p, 0) = (p,p)$. Together with (\ref{eq:geo-dist}), this implies that $d_g^2$ is $C^{k-1}$ on a neighborhood of the diagonal of $\M\times \M$.
\end{itemize}

    \section{Proofs for \cref{sec:glconv}}

\subsection{Proof of \cref{prop:nonasymp}}
\label{app:glconv}

\begin{proof}[Proof of Proposition \ref{prop:nonasymp}]
    For this proof, when $\lesssim$ and big-O notation are used, the multiplicative constant can depend on $(\M, g)$, $x$, $f$, $\delta$ and $c$, but not $K$ and $\eps_0$.
    The proof is almost the same as \cite{belkin2008towards}. 

    \textbf{1. Approximation of integrals on $\M$ by an integral over a geodesic ball.}
    We first approximate $f(x)A_\eps(x) - G_\eps f(x)$ with the integral
    \[ I_1(x) \vc = \frac{1}{(2\pi \eps)^{m/2}} \int_{B_{r(x)}(x)} k_\eps(x,y) (f(x) - f(y)) dV_g(y). \]
    This gives us an approximation error of
        \begin{align*}
        \bigg|  f(x) A_\eps(x) - G_\eps f(x) - I_1(x) \bigg| 
        &= \frac{1}{(2\pi \eps)^{m/2}}
        \left|\int_{\M \setminus B_{r(x)}(x)} k_\eps(x, y) (f(x)-f(y)) \ dV_g (y) \right| \\
        &\leq \frac{2\mathrm{vol}_g\left( \M \setminus B_{r(x)}(x)\right)e^{- \frac{\beta(x, r(x))^2}{2\eps}} \|f\|_\infty}{(2\pi \eps)^{m/2}}.
    \end{align*}
    The right-hand side is precisely $R_1(\eps).$ 
    
    \textbf{2. Integrate in normal coordinates.}
    Since $r(x) < \mathrm{inj}_{(\M, g)}(x)$, we can evaluate $I_1(x)$ in normal coordinates:
    \[ I_1(x) = \frac{1}{(2\pi \eps)^{m/2}} \int_{\|v\| < r(x)} k_\eps(x, \exp_x(v))(f(x) - f(\exp_x(v)) \sqrt{|\det [g(v)]|} dv,\]
    where $\|v\| \vc = \sqrt{g_x(v, v)}$ for $v \in T_x\M$.
    We would like to approximate the integral on the right hand side with
    \[I_2(x) \vc =  -\frac{1}{2(2\pi \eps)^{m/2}}\int_{\|v\| < r(x)} e^{-\frac{\|v\|^2}{2\eps}} \mathrm{Hess}_xf(v, v) dv.\]
    Recall that we assumed that the Riemannian metric $g$ is $C^{k+1}$ for $k \geq 3$, and since $r(x) \leq c \ \mathrm{inj}_{(\M, g)}(x) < \mathrm{inj}_{(\M, g)}(x)$, the exponential map $\exp_x$ belongs to $C^3\left(\overline{\widetilde{B}_{r(x)}(0)}\right)$, where $\widetilde{B}_{r(x)}(0) \vc= \{ v \in T_x\M \colon \|v\| < r(x) \}$.
    Since $f \in C^3(\M)$, we can use the Taylor expansions
    \begin{equation}
    \label{eq:fapprox}
    f(\exp_x(v))-f(x) = \langle \grad_g f(x), v\rangle + \frac{1}{2}\mathrm{Hess}_x f(v, v) + O(\|v\|^3)
    \end{equation}
    \begin{equation}
    \label{eq:jacapprox}
    \sqrt{|\det [g(v)]|} = 1 - \frac{\mathrm{Ric}_x(v, v)}{6} + O(\|v\|^3)
    \end{equation}
    for $v \in T_x\M$ such that $\|v\| < r(x)$, where $\mathrm{Ric}_x$ denotes the Ricci curvature tensor at $x$. Moreover, by Assumption \ref{assump:thirdorder}, 
    \begin{equation*}
    \label{eq:kernelest}
    e^{- \frac{\|v\|^2}{2\eps}} \leq k_\eps(x, \exp_x(v)) \leq e^{ - \frac{\|v\|^2}{2\eps}} e^{\frac{K\|v\|^4}{2\eps} }
    \end{equation*}
    for all $v \in T_x\M$ such that $\|v\| < r(x)$. By the mean value theorem and convexity of the exponential function,
    \[ e^t \leq 1 + te^t \]
    for all $t \geq 0$, so for all $v \in T_x\M$ such that $\|v\| < r(x)$,
    \[ e^{\frac{K\|v\|^4}{2\eps}} \leq 1 + \left(\frac{K\|v\|^4}{2\eps} \right)e^{\frac{K\|v\|^4}{2\eps}} \leq 1 + \left(\frac{K\|v\|^4}{2\eps} \right) e^{\frac{\|v\|^2}{4\eps}}, \]
    where the second inequality follows from the assumption $\eps_0^2 < \frac{1}{2K}$ (see \cref{rmk:eps_0-K assump}).
    Therefore, we have
    \[ e^{- \frac{\|v\|^2}{2\eps}} \leq k_\eps(x, \exp_x(v)) \leq e^{-\frac{\|v\|^2}{2\eps}} + \left(\frac{K\|v\|^4}{2\eps} \right)e^{-\frac{\|v\|^2}{4\eps}}\]
    and in particular,
    \small
    \begin{align*}
        \bigg| \int_{\|v\| < r(x)} \left(k_\eps(x, \exp_x(v)) - e^{-\frac{\|v\|^2}{2\eps}}\right)( f(x) - f(\exp_x(v))) \sqrt{|\det [g(v)]|} dv \bigg| &\leq \frac{K}{2\eps} \int_{\|v\| < r(x)} e^{-\frac{\|v\|^2}{4\eps}} O(\|v\|^5)dv \\
        &\lesssim \frac{K}{\eps} \int_{\R^m} e^{-\frac{\|v\|^2}{4\eps}} \|v\|^5 dv \\
        &\lesssim K \eps^{\frac{m+3}{2}}.
    \end{align*}
    \normalsize
    Plugging in the Taylor expansions (\ref{eq:fapprox}) and (\ref{eq:jacapprox}),
    \small
    \begin{align*}
        \int_{\|v\| < r(x)}&  e^{-\frac{\|v\|^2}{2\eps}}( f(x) - f(\exp_x(v))) \sqrt{|\det [g(v)]|} dv \\
        &= - \int_{\|v\| < r(x)} e^{-\frac{\|v\|^2}{2\eps}} \left( \langle \grad_g f(x), v\rangle + \frac{1}{2}\mathrm{Hess}_x f(v, v) + O(\|v\|^3) \right) \left( 1 - \frac{\mathrm{Ric}_x(v, v)}{6} + O(\|v\|^3) \right) dv \\
        &= -\frac{1}{2}\int_{\|v\| < r(x)} e^{-\frac{\|v\|^2}{2\eps}} \mathrm{Hess}_xf(v, v) dv + O\left( \eps^{\frac{m+3}{2}} \right)\\
        &= (2\pi \eps)^{m/2}I_2(x) + O\left( \eps^{\frac{m+3}{2}} \right),
    \end{align*}
    \normalsize
    where
    \[ \int_{\|v\| < r(x)} e^{-\frac{\|v\|^2}{2\eps}} \langle \grad_g f(x), v\rangle \left( 1 - \frac{\mathrm{Ric}_x(v, v)}{6} \right) dv = 0 \]
    by symmetry. It follows that there exists $C > 0$ such that
    \[ \left| I_1(x) - I_2(x) \right| \leq C\max(1, K)\eps^{3/2} =\vc R_2(\eps).  \]

    \textbf{3. Approximation of an integral on a ball with an integral over $\R^m$.}
    By Lemma \ref{lemma:traceint},
        \[ I_2(x) = \frac{1}{2(2\pi \eps)^{m/2}}\left( \frac{\mathrm{area}(\S^{m-1})\Delta_g f(x)}{m} \right)\int_0^{r(x)} e^{-\frac{y^2}{2\eps}}y^{m+1} dy. \]
    Now observe that
        \begin{align*}
        \frac{\mathrm{area}(\S^{m-1})}{m} \int_0^{r(x)} e^{-\frac{y^2}{2\eps}}y^{m+1} dy &=  \frac{\mathrm{area}(\S^{m-1})}{m} \left(\int_0^\infty e^{-\frac{y^2}{2\eps}}y^{m+1} dy - \int_{r(x)}^\infty e^{-\frac{y^2}{2\eps}}y^{m+1} dy \right)\\
        &= \frac{1}{m}\int_{\R^m} e^{-\frac{\|v\|^2}{2\eps}}\|v\|^2 dv - \frac{\mathrm{area}(\S^{m-1})}{m} \int_{r(x)}^\infty e^{-\frac{y^2}{2\eps}}y^{m+1} dy \\
        &= (2\pi \eps)^{m/2} \eps - \frac{\mathrm{area}(\S^{m-1})}{m} \int_{r(x)}^\infty e^{-\frac{y^2}{2\eps}}y^{m+1} dy,
    \end{align*}
    where the last line is obtained by computing the variance of a multi-dimensional Gaussian. It follows that
    \begin{align*}
        \left| I_2(x) - \frac{\eps}{2} \Delta_g f(x) \right| \leq R_3(\eps),
    \end{align*}
    and (\ref{eq:nonasymp}) follows from the triangle inequality.

\end{proof}

\subsection{Proof of \cref{prop:no-nonempty-open-subset}}
\label{app:no-nonempty-open-subset}
The idea is that if (\ref{eq:hessmetric}) degenerates on a nonempty open set, we can find a continuously differentiable path $\gamma \colon [0, \delta] \to \M$ such that $\gamma(0) \neq \gamma(\delta)$, but $\gamma'(t) \neq 0$ and $g_{\gamma(t)}\left( \gamma'(t), \gamma'(t) \right) = 0$ for all $t \in [0, \delta]$. Then we can use the following lemma to show that $d(\gamma(0), \gamma(\delta)) = 0$, which is a contradiction. 
\begin{lemma}
\label{lemma:dist-upper-bound}
Suppose $d^2$ is $C^{2}$ on a neighborhood of the diagonal $D_\M$ of $\M \times \M$.
    For $g$ defined as in (\ref{eq:hessmetric}) and any $C^1$ path $\gamma \colon [0, \delta] \to \M$,
    \begin{equation}
    \label{eq:dist-upper-bound}
    d(\gamma(0),\gamma(\delta)) \leq \int_0^\delta \sqrt{g_{\gamma(t)}(\gamma'(t), \gamma'(t))} dt.
    \end{equation}
\end{lemma}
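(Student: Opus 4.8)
The plan is to prove the standard fact that the $g$-length of a $C^1$ curve dominates the $d$-distance between its endpoints, using only the infinitesimal comparison furnished by \cref{lemma:ot2}. First I would record that the right-hand side is well defined: since $d^2$ is $C^2$ on a neighborhood of $D_\M$, \cref{lemma:psd} (with $k = 0$) shows that $g$ is continuous and positive semidefinite, so $L(t) \vc= \sqrt{g_{\gamma(t)}(\gamma'(t), \gamma'(t))}$ is a nonnegative continuous function on $[0, \delta]$ and $\int_0^\delta L(t)\, dt$ makes sense.

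The key difficulty is that \cref{lemma:ot2} only gives a \emph{pointwise} (fixed-basepoint) comparison $\lim_{\tau \to 0} d(\gamma(s+\tau), \gamma(s))/|\tau| = L(s)$, which is not a priori uniform in $s$, so a direct Riemann-sum estimate does not obviously close. To bypass this I would run a continuity/supremum argument. Fix $\eps > 0$ and set
\[ S \vc= \Big\{ t \in [0, \delta] \colon d(\gamma(0), \gamma(t)) \leq \int_0^t L(u)\, du + \eps t \Big\}. \]
Since $t \mapsto d(\gamma(0), \gamma(t))$ is continuous (as $d$ and $\gamma$ are) and $t \mapsto \int_0^t L + \eps t$ is continuous, $S$ is closed, and $0 \in S$. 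Let $s \vc= \sup S \in S$ and suppose toward a contradiction that $s < \delta$. Applying \cref{lemma:ot2} to the reparametrized path $\tau \mapsto \gamma(s + \tau)$ gives $\eta > 0$ with $d(\gamma(s), \gamma(s+\tau)) \leq (L(s) + \tfrac{\eps}{2})\tau$ for $0 < \tau < \eta$; shrinking $\eta$ if needed, continuity of $L$ at $s$ also gives $\int_s^{s+\tau} L \geq (L(s) - \tfrac{\eps}{2})\tau$. The triangle inequality then yields, for such $\tau$,
\[ d(\gamma(0), \gamma(s+\tau)) \leq \Big(\int_0^s L + \eps s\Big) + \Big(L(s) + \tfrac{\eps}{2}\Big)\tau \leq \int_0^{s+\tau} L + \eps(s + \tau), \]
so $s + \tau \in S$, contradicting $s = \sup S$. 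Hence $s = \delta$, and letting $\eps \to 0$ in $d(\gamma(0), \gamma(\delta)) \leq \int_0^\delta L + \eps \delta$ proves (\ref{eq:dist-upper-bound}).

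The one genuine subtlety, beyond the lack of uniformity addressed above, is that \cref{lemma:ot2} is stated for paths defined on a two-sided interval, whereas at $s = 0$ (or $s$ near $\delta$) I only have a one-sided increment $\tau > 0$. I would handle this by first extending $\gamma$ to a $C^1$ path on a slightly larger open interval $(-\delta', \delta + \delta')$ — e.g.\ extending linearly in a fixed chart at each endpoint so as to match the value and derivative of $\gamma$ — after which \cref{lemma:ot2} applies verbatim at every $s \in [0, \delta]$. (Alternatively, one checks that the Taylor-expansion proof of \cref{lemma:ot2} yields the one-sided limit directly.) I expect this bookkeeping, together with confirming the closedness of $S$, to be the only points requiring care; the heart of the argument is the single-step estimate combining the triangle inequality, the local comparison from \cref{lemma:ot2}, and continuity of $L$.
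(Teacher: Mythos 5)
Your proof is correct, and it takes a genuinely different route from the paper's. The paper proves the lemma by working in a coordinate chart (first reducing to $\Ima \gamma \subset U$ via a finite cover and the triangle inequality), applying Taylor's theorem with the Lagrange remainder to the squared distance $\widetilde{d}^2$ along straight chords to represent $\widetilde{d}^2\left(\gamma(t_k), \gamma(t_{k+1})\right)$ exactly as a quadratic form in the Hessian block $A$ evaluated at an intermediate point, making this estimate uniform over the partition via uniform continuity of $A$ on a compact set, and then passing to the limit of Riemann sums. You instead treat \cref{lemma:ot2} as a black box and defeat its lack of uniformity in the basepoint with a continuous-induction (supremum) argument: $S$ is closed and contains $0$, and the single-step estimate (triangle inequality, the pointwise limit from \cref{lemma:ot2}, and continuity of $t \mapsto \sqrt{g_{\gamma(t)}(\gamma'(t),\gamma'(t))}$) forbids $\sup S < \delta$; letting $\eps \to 0$ finishes. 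Your approach buys modularity and economy — it reuses \cref{lemma:ot2}, needs only continuity of $g$ (\cref{lemma:psd} with $k=0$), and dispenses entirely with the chart-covering, the uniform-continuity step, and the Riemann-sum machinery; the one-sided endpoint issue you flag is real and is correctly handled either by your $C^1$ extension in a chart or by observing that the Taylor argument in the proof of \cref{lemma:ot2} already yields one-sided limits. What the paper's approach buys is a more quantitative, self-contained estimate: the discretization error is controlled explicitly by the modulus of continuity of the Hessian on a compact set, which is the sort of bound one could upgrade to explicit rates, whereas your argument is purely qualitative. Both arguments establish the lemma.
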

\begin{proof}
    Let $\varphi \colon U \to V \subset \R^m$ be a chart on $\M$ such that $d^2$ is $C^2$ on $U \times U$ and $V$ is convex.
    It suffices to prove (\ref{eq:dist-upper-bound}) when $\Ima \gamma \subset U$; the general case follows by covering $\Ima \gamma$ with a finite number of such charts and using the triangle inequality.
    Define
    $\widetilde{d} \colon V \times V \to \R$
    by
    \[ \widetilde{d}(y,z) \vc= d(\varphi^{-1}(y), \varphi^{-1}(z)) \]
    for all $y, z \in V$. Let $A(y, z)$ be the top left $m \times m$ block of $\frac{1}{2}\mathrm{Hess}^{(\R^{2m})}_{(y, z)}\left(\widetilde{d}^2\right)$ for $y, z \in V$, so that
    it suffices to prove that
    \begin{equation}
        \widetilde{d}(\gamma(0), \gamma(\delta)) \leq \int_0^\delta \sqrt{\gamma'(t)^T A(\gamma(t), \gamma(t)) \gamma'(t)} dt 
    \end{equation}
    for all $C^1$ paths $\gamma \colon [0, \delta] \to V$. 

    Let $\gamma \colon [0, \delta] \to V$ be a $C^1$ path. By the triangle inequality, for any $N \in \N$, 
    \begin{equation}
    \label{eq:upper-bound-dist}
    \widetilde{d}(\gamma(0), \gamma(\delta)) 
    \leq \sum_{k = 0}^{N-1} \widetilde{d}\left( \gamma\left( t_k \right),\gamma\left( t_{k+1} \right)  \right),
    \end{equation}
    where $t_k \vc= \frac{k\delta}{N}$.
    Now observe that if we fix any $y, z \in V$ and define $\widetilde{\gamma}(t) \vc = (1-t)y + tz$ and $f(t) \vc = \widetilde{d}^2(\widetilde{\gamma}(t), y)$ for $t \in [0,1]$, then by Taylor's theorem with the Lagrange remainder,
    \[ \widetilde{d}^2(z, y) = \frac{1}{2} \frac{\partial^2 f}{\partial t^2}(\alpha) = (z-y)^T A((1-\alpha)y + \alpha z,y)(z-y) \]
    for some $\alpha \in [0, 1]$. In particular, fixing any $u, v \in [0, \delta]$ such that $u< v$, we can additionally use the mean value theorem to obtain
    \begin{align*}
    \widetilde{d}^2(\gamma(u), \gamma(v)) &= (\gamma(v) - \gamma(u))^T A((1-\alpha)\gamma(u) + \alpha \gamma(v), \gamma(u))(\gamma(v) - \gamma(u)) \\
    &= (v-u)^2 \gamma'(s)^T A((1-\alpha)\gamma(u) + \alpha \gamma(v), \gamma(u)) \gamma'(s)
    \end{align*}
    for some $\alpha \in [0, 1]$ and $s \in (u, v)$. Therefore
\[\sum_{k = 0}^{N-1} \widetilde{d}\left( \gamma\left( t_k \right),\gamma\left( t_{k+1} \right) \right) = \frac{\delta}{N} \sum_{k = 0}^{N-1} \sqrt{\gamma'(s_k)^T A\left((1-\alpha_k)\gamma\left( t_k \right) + \alpha_k \gamma\left( t_{k+1} \right), \gamma\left( t_k \right)  \right) \gamma'(s_k)}\]
    for some $\alpha_k \in [0, 1]$ and $s_k \in (t_k, t_{k+1})$. 
    Since
    \[ W \vc = \{ ((1-\alpha)\gamma(t_1) + \alpha \gamma(t_2), \gamma(t_1)) \colon \alpha \in [0, 1], t_1, t_2 \in [0, \delta] \} \]
    is compact and $A$ is continuous on $W$, $A$ is uniformly continuous on $W$. Since $\|\gamma'\|$ is continuous on $[0, \delta]$, there exists $C > 0$ such that $\|\gamma'(t)\| < C$ for all $t \in [0, \delta]$, and so $\|\gamma(v) - \gamma(u)\| < C|v-u|$ for all $u, v \in [0, \delta]$. Together with the uniform continuity of $A$ on $W$, this implies that for any $\eps > 0$, there exists $M \in \N$ such that for all $N \geq M$,
    \[ \|A\left((1-\alpha_k)\gamma\left( t_k \right) + \alpha_k \gamma\left( t_{k+1} \right), \gamma\left( t_k \right)  \right) - A(\gamma(s_k), \gamma(s_k)) \|_{op} < \eps \]
    for any $\alpha_k \in [0, 1]$, $s_k \in \left(t_k, t_{k+1}\right)$ and $k = 0, ..., N-1$.
    Here $\|\cdot\|_{op}$ denotes the operator norm. Hence, for $N \geq M$,
    \begin{align*} 
    \sum_{k = 0}^{N-1} \widetilde{d}\left( \gamma\left( t_k \right),\gamma\left( t_{k+1} \right) \right) &\leq \frac{\delta}{N} \sum_{k = 0}^{N-1} \sqrt{\gamma'(s_k)^T A(\gamma(s_k), \gamma(s_k)) \gamma'(s_k) + \eps \|\gamma'(s_k)\|^2} \\
    &\leq \frac{\delta}{N} \sum_{k = 0}^{N-1} \sqrt{\gamma'(s_k)^T A(\gamma(s_k), \gamma(s_k)) \gamma'(s_k) + \eps C^2 }
    \end{align*}
    for some $s_k \in \left( t_k, t_{k+1}\right)$,
    which upon substituting into (\ref{eq:upper-bound-dist}) gives us
    \[ \widetilde{d}(\gamma(0), \gamma(\delta)) \leq \frac{\delta}{N} \sum_{k = 0}^{N-1} \sqrt{\gamma'(s_k)^T A(\gamma(s_k), \gamma(s_k)) \gamma'(s_k) + \eps C^2 }. \]
    Since $t \mapsto \gamma'(t)^T A(\gamma(t), \gamma(t)) \gamma'(t)$ is continuous on $[0, \delta]$, the lemma statement follows from taking $N \to \infty$ and observing that $\eps > 0$ was arbitrary.
\end{proof}

We now prove \cref{prop:no-nonempty-open-subset}.

\begin{proof}[Proof of \cref{prop:no-nonempty-open-subset}]
    For $k = 0, ..., m-1$, define
    \[ M_k \vc = \{ x \in \M \colon \mathrm{rank}(g_x) \leq k \},\]
    so that $\M \setminus \widetilde{\M} = M_{m-1}$. We will show by induction that $M_k$ does not contain any nonempty open subset $U \subset \M$ for any $k = 0, ..., m-1$.

    For the base case, suppose for contradiction that there exists a nonempty open $U \subset \M$ such that $U \subset M_0$. Take any $C^1$ path $\gamma \colon [0, \delta] \to U$ such that $\gamma(0) \neq \gamma(\delta)$. Since $U \subset M_0$,
    $g_{\gamma(t)}(\gamma'(t), \gamma'(t)) = 0$
    for all $t \in [0, \delta]$, so \cref{lemma:dist-upper-bound} implies $d(\gamma(0), \gamma(\delta)) = 0$, a contradiction.

    Now suppose $M_{k-1}$ does not contain a nonempty open set for some $k \in \{1, ..., m-1\}$. If $M_k$ does contain a nonempty open set $U$, then there must exist some $p \in U \cap (M_k \setminus M_{k-1})$.
    % If $M_k \setminus M_{k-1}$ is empty, then $M_k$ also does not contain a non-empty open set. If $M_k \setminus M_{k-1}$ is not empty, then take $p \in M_k \setminus M_{k-1}$ and 
    We claim that there exists an open neighborhood $U'$ around $p$ and a continuous vector field $F \colon U' \to T\M$ such that $g_q(F(q), F(q)) = 0$ for each $q \in U'$. Then by Peano's existence theorem, there exists a $C^1$ path $\gamma \colon [0, \delta] \to U'$ such that $\gamma'(t) = F(\gamma(t))$ for all $t \in [0, \delta]$. \cref{lemma:dist-upper-bound} then implies that $d(\gamma(0), \gamma(t)) = 0$ for all $t \in [0, \delta]$, a contradiction.

    For completeness, we detail the construction of such a vector field $F$.\footnote{This construction is a modification of an answer given by an anonymous user on Stack Exchange \cite{1203782}.} Making $U$ smaller if necessary, take
    a chart $\varphi = (x^1, ..., x^m) \colon U \to V$ around $p$. Since $g_p$ is a symmetric, positive semidefinite bilinear form of rank $k$, without loss of generality, we can assume that
    \[ g_p \left( \frac{\partial}{\partial x_i} \bigg|_p, \frac{\partial}{\partial x_j} \bigg|_p \right) = \lambda_i \delta_{ij} \]
    for $\lambda_1 \geq \cdots \geq \lambda_k > 0 = \lambda_{k+1} = \cdots = \lambda_m$. Define $B \colon U \to \R^{k \times k}$ by taking $B(q)$ to be the top left $k \times k$ block of $\left[ g_q\left( \frac{\partial}{\partial x_i}\big|_q, \frac{\partial}{\partial x_j}\big|_q \right) \right]_{i, j}$ for each $q \in U$.
    Since $d^2$ is $C^2$ on a neighborhood of $D_{\M}$, $g$ is continuous, so there exists an open neighborhood $U'$ of $p$ such that $B(q)$ is invertible for all $q \in U'$. But for all $q \in U'$, $g_q$ has rank at most $k$, since $q \in M_k$. Therefore for each $q \in U',$ the $k+1$ to $m$th rows of $\left[ g_q\left( \frac{\partial}{\partial x_i}\big|_q, \frac{\partial}{\partial x_j}\big|_q \right) \right]_{i, j}$ are linear combinations of its first $k$ rows, and together with the symmetry of $\left[ g_q\left( \frac{\partial}{\partial x_i}\big|_q, \frac{\partial}{\partial x_j}\big|_q \right) \right]_{i, j}$, we obtain
    \[ \left[ g_q\left( \frac{\partial}{\partial x_i}\bigg|_q, \frac{\partial}{\partial x_j}\bigg|_q \right) \right]_{i, j} = \begin{bmatrix}
    B(q) & B(q)\Lambda(q)^T \\
    \Lambda(q) B(q) & \Lambda(q) B(q)\Lambda(q)^T
    \end{bmatrix}
    \]
    for some $\Lambda(q) \in \R^{(m-k)\times k}$. We can then take 
    \begin{align*}
        F \colon U' &\to \R^m \\
        q &\mapsto \begin{bmatrix}
        -\Lambda(q)^T v\\
        v
    \end{bmatrix}
    \end{align*}
    for any fixed $v \in \R^{m-k}$. Since $g$ is continuous and $B(q)$ is invertible for all $q \in U'$, $\Lambda$ is continuous on $U'$, so $F$ is continuous on $U'$.

    \end{proof}

\subsection{Proof of \cref{lemma:compactness-lemma}}
\label{app:compact-lemma-proof}
\begin{proof}[Proof of \cref{lemma:compactness-lemma}] 
    Since $d^2$ is $C^7$ around $D_\M$ and the nondegeneracy assumption holds, (\ref{eq:hessmetric}) defines a $C^5$ Riemannian metric $g$, whose associated exponential map $\exp$ is $C^4$ on an open neighborhood $\mathcal{E} \subset T\M$ of the zero section of $T\M$ (see \cref{subsec:diff-geo}). By the inverse function theorem, 
    \begin{align*}
        F \colon \mathcal{E} &\to \M \times \M \\
        (p, v) &\mapsto (p, \exp_p(v))
    \end{align*}
    has a local $C^4$ inverse around $(p, p)$ for each $p \in \M$. Fix any $p \in \M$. 
    Recall that a subset $V \subset \M$ is \textit{strongly convex} if for any $x, y \in \overline{V}$, there exists a unique (up to reparametrization) length-minimizing geodesic segment $\gamma \colon [0, T] \to \M$ from $\gamma(0) = x$ to $\gamma(T) = y$, and $\gamma((0, T))$ is contained in $V$ \cite[p. 74]{do1992riemannian}. For $r > 0$, let 
    \[ B_r(p) \vc = \{ x \in \M \colon d_g(x, p) < r \}, \]
    i.e., $B_r(p)$ is an open ball of radius $r$ with respect to the geodesic distance $d_g$, not $d$.
    We claim that for small enough $r > 0$, $B_r(p)$ is strongly convex; although \cite{do1992riemannian} works with smooth Riemannian manifolds, the argument in \cite[Chapter 3, Section 4]{do1992riemannian} holds here as well. Therefore we can take $r > 0$ such that there exists a $C^4$ inverse $F^{-1}$ for $F$ on an open neighborhood of $\overline{B_r(p) \times B_r(p)}$, $d^2$ is $C^4$ on a neighborhood of $\overline{B_r(p) \times B_r(p)}$ and $B_r(p)$ is strongly convex. 
   
   Since $\M$ is compact, we can cover $\M$ with finitely many balls of the form $B_{r/2}(p)$, where $r$ satisfies the conditions in the previous sentence. For any $x \in B_{r/2}(p)$ and $y \in \M$ such that $d_g(x, y) < \frac{r}{2}$, we have $y \in B_r(p)$, so there exists a length-minimizing geodesic segment $\gamma \colon [0, T] \to B_r(p)$ from $\gamma(0) = x$ to $\gamma(T) = y$ by the strong convexity of $B_r(p)$, and $d^2$ is $C^4$ on a neighborhood of $\Ima \gamma \times \Ima \gamma$ since $\Ima \gamma \subset B_r(p).$
   Thus, it suffices to prove that there exists $\widetilde{\kappa} > 0$ such that
   \[ \left|h_{(x, v)}^{(4)}(t)\right| < \widetilde{\kappa} \]
   for all $x \in B_{r/2}(p)$, unit-norm $v \in T_x\M$ and $|t| < \frac{r}{2}$.
    
    This follows almost immediately from observing that the map $H((x, v), t) \vc = d^2(x, \exp_x(tv)) = d^2(F(x, tv))$ is well-defined and $C^4$ on a neighborhood of the compact set $S \vc = \{ (x, v) \in T\M \colon x \in \overline{B_{r/2}(p)}, g_x(v, v) = 1 \} \times \left[ - \frac{r}{2}, \frac{r}{2} \right]$, so
        \[ \sup_{((x, v), t) \in S} \left|h^{(4)}_{(x, v)}(t) \right| < \infty. \]
\end{proof}

\section{Proof of \cref{lemma:incradii}}
\label{app:incradii}

\begin{proof}[Proof of \cref{lemma:incradii}]
    It suffices to prove the statement for $\theta \in (0, \pi]$ and when $\supp \iota_{r_1}(\theta) \cap \supp \iota_{r_1}(0)$ is non-empty. The latter condition is equivalent to the condition
    \[ \left| \frac{\sin \left( \frac{\theta}{2} \right)}{r_1} \right| < 1. \]
    Using the computations from \cref{subsec:cursed_example}, for any $r \in (0, 1]$ and $\theta \in (0, \pi]$ such that $\supp \iota_{r}(\theta) \cap \supp \iota_{r}(0) \neq \varnothing$,
    \begin{align}
        \left\| \iota_r(\theta) - \iota_r(0) \right\|_{L^1} &= \frac{1}{2r} \left( \pi r^2 - 2 \left(r^2 \arccos\left( \frac{\sin\left( \frac{\theta}{2} \right)}{r} \right) - \sin \left( \frac{\theta}{2}\right)\sqrt{r^2 - \sin^2\left( \frac{\theta}{2} \right)} \right) \right) \\
        &= \frac{\pi r}{2} - r \arccos \left( \frac{\sin\left( \frac{\theta}{2} \right)}{r} \right) + \sin \left( \frac{\theta}{2} \right) \sqrt{1 - \left( \frac{\sin\left( \frac{\theta}{2} \right)}{r} \right)^2}.
        \label{eq:iotar_L1}
    \end{align}
    Taking $f(r, \theta) \vc = \left\| \iota_r(\theta) - \iota_r(0) \right\|_{L^1}$, it now suffices to show that
    \[ \frac{\partial f}{\partial r} (r, \theta) > 0 \]
    for all $\theta \in (0, \pi]$ and $r \in \left( \sin \left( \frac{\theta}{2}\right), 1\right]$. Taking the derivative of (\ref{eq:iotar_L1}) gives us
    \[ \frac{\partial f}{\partial r} (r, \theta) = \frac{\pi}{2} - \arccos\left( \frac{\sin \left( \frac{\theta}{2} \right)}{r} \right) + \frac{1}{\sqrt{1 - \left(\frac{\sin \left( \frac{\theta}{2} \right)}{r} \right)^2}} \left( \left( \frac{\sin \left( \frac{\theta}{2} \right)}{r} \right)^3 - \left( \frac{\sin \left( \frac{\theta}{2} \right)}{r} \right) \right) \]
    for $\theta \in (0, \pi]$ and $r \in \left( \sin \left( \frac{\theta}{2}\right), 1\right]$.
    Making the change of variables $x = \frac{\sin \left( \frac{\theta}{2} \right)}{r}$, it remains to show that
    \[ g(x) \vc = \frac{\pi}{2} - \arccos(x) + \frac{x^3 - x}{\sqrt{1 - x^2}} = \frac{\pi}{2} - \arccos(x) - x \sqrt{1 - x^2} > 0\]
    for all $x \in (0, 1)$. This follows from observing that $g(0) = 0$ and
    \[ g'(x) = \frac{1}{\sqrt{1-x^2}} + \frac{x^2}{\sqrt{1-x^2}} - \sqrt{1-x^2} = \frac{2x^2}{\sqrt{1-x^2}} > 0 \]
    for all $x \in (0, 1).$
\end{proof}

\section{Auxiliary lemmas}
The following lemma is helpful for computations; we include a proof of it for completeness.

\begin{lemma}
\label{lemma:traceint}
    Let $m \in \N$ and $r > 0$. Let $\mathrm{area}(\S^{m-1})$ denote the surface area of $\S^{m-1}$ when embedded as a unit sphere in $\R^m$.
    For any $m \times m$ symmetric matrix $A \in \R^{m \times m}$ and integrable function $h \colon [0, r) \to \R$, 
    \[ \int_{B_{r}(0)} h(\|v\|) v^T Av dv =  \frac{\mathrm{area}(\S^{m-1}) \Tr(A)}{m} \int_0^{r}h(x) x^{m+1}dx. \]
\end{lemma}
\begin{proof}
Let $\mu_{\S^{m-1}}$ denote the uniform probability measure on $\S^{m-1}$ (embedded as a unit sphere in $\R^m$).
    By considering the spectral decomposition $A = U\Lambda U^T$, we obtain
    \begin{align*}
        \int_{B_{r}(0)} h(\|v\|) v^T Av dv &= \mathrm{area}(\S^{m-1})\int_0^{r} h(x) x^2 \left(\int_{\S^{m-1}}  \varphi^T A \varphi d\mu_{\S^{m-1}}(\varphi) \right) x^{m-1} dx \\
        &= \mathrm{area}(\S^{m-1}) \left(\int_{\S^{m-1}}  \varphi^T \Lambda \varphi d\mu_{\S^{m-1}}(\varphi) \right)  \left(\int_0^{r} h(x) x^{m+1} dx \right) \\
        &= \frac{\mathrm{area}(\S^{m-1}) \Tr(A)}{m} \int_0^{r}h(x) x^{m+1}dx.
    \end{align*}
\end{proof}

\begin{lemma}
    \label{lemma:betapos}
    Suppose $(\M, g)$ is a Riemannian manifold where $g$ is $C^2$, $(X, d)$ is a metric space and $\iota \colon \M \to X$ is a topological embedding. Then for any $x \in \M$ and $R > 0$,
    \[   \beta(x, R) \vc = \inf_{\substack{y \in \M \\ d_g(y, x) \geq R
    }} d(\iota(y), \iota(x))  \]
    is strictly positive.
\end{lemma}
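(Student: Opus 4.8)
The plan is to argue by contradiction using a minimizing sequence, with the topological-embedding hypothesis doing all the real work. First I would dispose of a degenerate case: if no $y \in \M$ satisfies $d_g(y,x) \geq R$, then the infimum defining $\beta(x,R)$ is taken over the empty set, hence equals $+\infty$ by convention, which is certainly strictly positive. So I may assume the set $\{y \in \M : d_g(y,x) \geq R\}$ is nonempty, and suppose toward a contradiction that $\beta(x,R) = 0$.

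Then there exists a sequence $\{y_n\}$ in $\M$ with $d_g(y_n, x) \geq R$ for all $n$ and $d(\iota(y_n), \iota(x)) \to 0$, i.e. $\iota(y_n) \to \iota(x)$ in $(X,d)$. This is the step where the embedding hypothesis is essential: since $\iota$ is a topological embedding, it is a homeomorphism of $\M$ onto its image $\iota(\M)$ with the subspace topology, so the inverse map $\iota^{-1} \colon \iota(\M) \to \M$ is continuous. As every $\iota(y_n)$ and $\iota(x)$ lies in $\iota(\M)$, applying $\iota^{-1}$ and using sequential continuity yields $y_n \to x$ in the manifold topology of $\M$.

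Finally I would translate this manifold convergence back into a statement about $d_g$. A connected component of a manifold is open, so for all sufficiently large $n$ the point $y_n$ lies in the connected component of $x$, on which $d_g$ is a genuine metric inducing the manifold topology. Hence $y_n \to x$ forces $d_g(y_n, x) \to 0$, which contradicts $d_g(y_n, x) \geq R > 0$. Therefore $\beta(x,R) > 0$, as claimed.

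The only genuinely delicate points are (i) the continuity of $\iota^{-1}$, which is exactly what the phrase \emph{topological embedding} provides, and this is the crux of the argument: without it, a minimizing sequence could escape toward infinity in $\M$ while its images collapse onto $\iota(x)$, leaving $\beta(x,R)$ possibly zero; and (ii) the standard fact (recalled in \cref{subsec:diff-geo}) that $d_g$ metrizes the manifold topology on each connected component, so that convergence in the manifold topology and convergence in $d_g$ coincide near $x$. Everything else is routine, so I expect point (i) to be the conceptual heart of the proof.
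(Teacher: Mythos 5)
Your proof is correct and takes essentially the same route as the paper: both argue by contradiction using that a topological embedding is a homeomorphism onto its image, the paper phrasing this as openness of $\iota(B_R(x))$ in $\iota(\M)$ while you phrase it as sequential continuity of $\iota^{-1}$ combined with the standard fact that $d_g$ induces the manifold topology. The two formulations are equivalent, and your handling of the empty-infimum and connected-component details is a harmless (correct) elaboration of points the paper leaves implicit.
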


\begin{proof}
Since $B_R(x) \vc = \{y \in \M \colon d_g(y, x) < R\}$ is open and $\iota$ is an embedding, $\iota(B_R(x))$ is open in $\iota(\M)$ with respect to the subspace topology from $(X, d)$. Suppose for contradiction that $\beta(x, R) = 0$. Then for all $s > 0$,
    \[ \widetilde{B}_s(\iota(x)) \vc = \{ z \in \iota(\M) \colon d(z, \iota(x)) < s \} \]
    contains $\iota(y)$ for some $y \notin B_R(x)$. Thus $\iota(x) \in \iota(B_R(x))$, but there does not exist $s > 0$ such that $\widetilde{B}_s(\iota(x)) \subset \iota(B_R(x))$. This implies that $\iota(B_R(x))$ is not open in $\iota(\M)$ with the subspace topology, which is a contradiction.
\end{proof}

\end{document}